\documentclass{article}

\usepackage{iclr2027_conference,times}
\iclrfinalcopy

\usepackage[utf8]{inputenc} % allow utf-8 input
\usepackage[T1]{fontenc}    % use 8-bit T1 fonts
\usepackage{hyperref}       % hyperlinks
\usepackage{url}            % simple URL typesetting
\usepackage{booktabs}       % professional-quality tables
\usepackage{amsfonts}       % blackboard math symbols
\usepackage{nicefrac}       % compact symbols for 1/2, etc.
\usepackage{microtype}      % microtypography
\usepackage{xcolor}         % colors

\usepackage{amsmath}
\usepackage{amsthm}
\usepackage{thmtools}
\usepackage{amssymb}
\usepackage{graphicx}
\usepackage{tikz}
\usepackage{subcaption}
\usepackage{seqsplit}
\usepackage{enumitem}
\usepackage{lineno}

\definecolor{darkblue}{rgb}{0, 0, 0.5}
\hypersetup{colorlinks=true, citecolor=darkblue, linkcolor=darkblue, urlcolor=darkblue}

\newcommand{\BoS}{\text{\textsc{BoS}}}
\newcommand{\Copy}{\text{\textsc{Copy}}}

\newcommand{\EoI}{\text{\textsc{EoI}}}
\newcommand{\EoS}{\text{\textsc{EoS}}}

\newcommand{\fl}{\mathrm{fl}}

\newcommand{\calF}{\mathcal{F}}

\newtheorem{theorem}{Theorem}[section]
\newtheorem{lemma}[theorem]{Lemma}
\newtheorem{proposition}[theorem]{Proposition}
\newtheorem{corollary}[theorem]{Corollary}
\newtheorem{definition}[theorem]{Definition}
\newtheorem{remark}[theorem]{Remark}

\newtheorem{fact}[theorem]{Fact}

\DeclareMathOperator*{\flsum}{\textstyle\sum^{\!\mathsf{fl}}}

\usepackage[nameinlink,noabbrev]{cleveref}

\title{Copying Under Repetition:\\From Prefix Matching to Position Hashing}
\author{%
\normalfont
\begin{tabular}[t]{@{}p{0.46\textwidth}@{\hspace{0.04\textwidth}}p{0.46\textwidth}@{}}
\textbf{Anxin Guo} & \textbf{Tianlong Huang}\\
Northwestern University & Duke University\thanks{Work partially conducted while at Toyota Technological Institute at Chicago.}\\
\texttt{anxinbguo@gmail.com} & \texttt{tianlong.huang@duke.edu}\\[1.5ex]
\textbf{Chenxiao Yang} & \textbf{Zhiyuan Li}\\
Toyota Technological Institute at Chicago & Toyota Technological Institute at Chicago\\
\texttt{chenxiao@ttic.edu} & \texttt{zhiyuanli@ttic.edu}
\end{tabular}
}
  
\begin{document}

\maketitle
\lhead{Preprint}

\begin{abstract}
Frontier LLMs can copy random text almost flawlessly, yet they often overrun or truncate long blocks of repeated characters.
In repeated strings, matching local content no longer reliably identifies where copying should continue or when it should stop.
Nevertheless, we show that a two-layer transformer with $O(\log\log N)$-bit precision and $O(\log N/\log\log N)$ width can copy every string up to a chosen design horizon $N$ and terminate correctly under arbitrary repetition.
We use RoPE-style rotations of attention values and outputs (VO-rotations) with small prime periods to encode distances from delimiter anchors modulo those primes.
By the Chinese Remainder Theorem, these periodic encodings jointly distinguish all required distances, allowing one retrieval rule to select either the next source token or the end marker.
In contrast, we prove a width-independent impossibility result for copying under sequential finite-precision softmax accumulation.
It applies to transformers whose only positional input is an additive attention bias that eventually becomes constant, including NoPE and a recursive implementation of ALiBi.
Once an identical-token run exceeds the resulting saturation threshold, further repetitions leave the logits unchanged.
If this threshold grows sublinearly with $N$, these models cannot both complete a long all-identical copy and stop correctly, a pattern resembling the termination errors of frontier LLMs.
In synthetic copying experiments, baseline transformer models fail on long repeated strings even within the training range. Trained VO models, by contrast, generalize beyond the training lengths with near-perfect accuracy up to length $\approx5005$, the selected primes’ period product.

\end{abstract}

% !TEX root = ../main.tex

\section{Introduction}
\label{sec:intro}

Autoregressively copying text is a fundamental primitive for modern large language models (LLMs).
LLMs copy when they reuse retrieved documents, quote code or data, or transfer a specific answer from a chain of thought into the final response.
Historically, this copying task has motivated explicit pointer or copy mechanisms in sequence models~\citep{vinyals2015pointer, gu2016incorporating, see2017get}.
Meanwhile, transformers seem to learn to copy easily, often through the ``induction head'' mechanism~\citep{olsson2022incontext}: they match the already-copied prefix to its earlier occurrence in the prompt and retrieve the subsequent token.
For random sequences, short local contexts uniquely identify the occurrence.
\citet{jelassi2024repeat} formalized this content-addressed mechanism, showing that a two-layer transformer (with a particular position encoding) can copy strings of length $\leq L$ with $O(\log \log L)$ precision—provided the input contains no repeated $n$-grams.

\begin{figure}[!t]
    \centering
    \includegraphics[width=\textwidth]{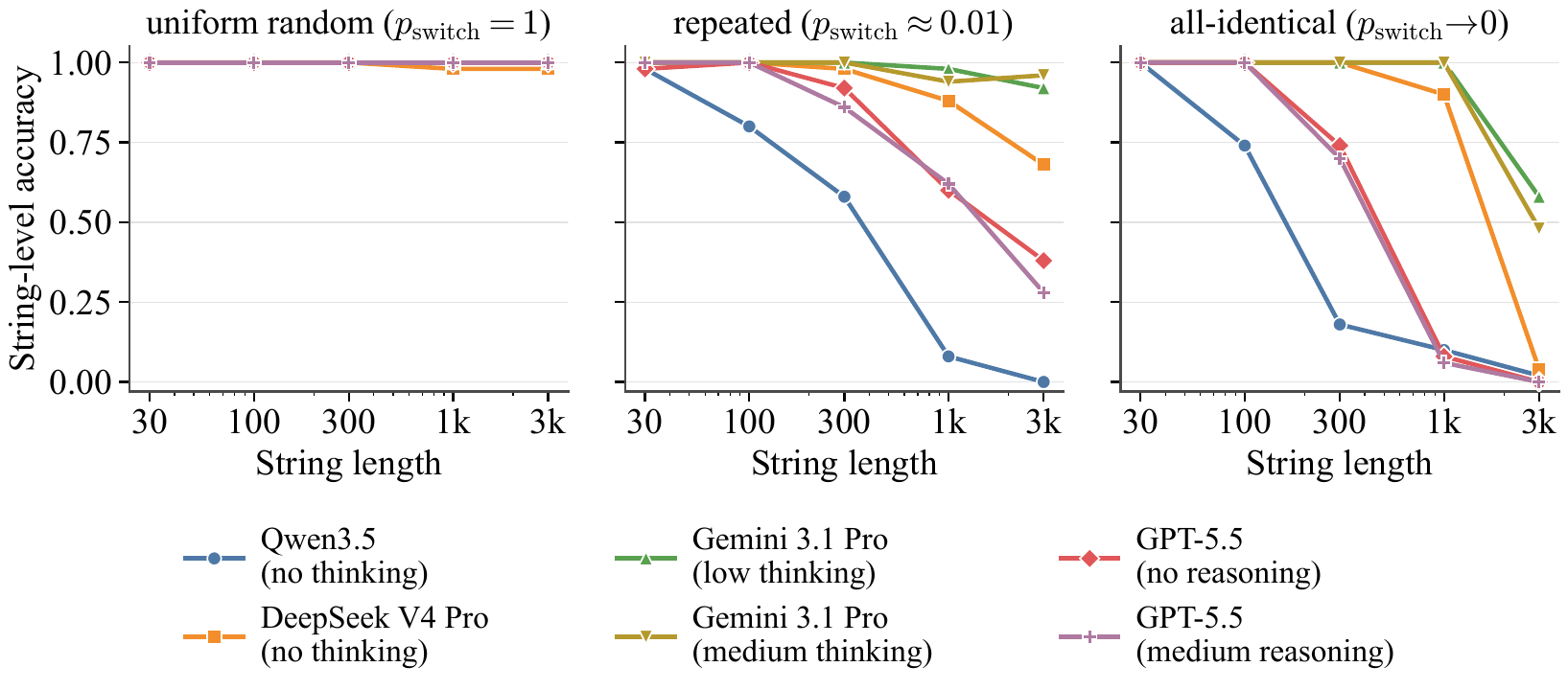}
    \caption{Long repeated strings reduce LLM copying accuracy.
    Accuracy is the fraction of all $50$ trials per length and condition whose extracted answer matches the source perfectly.
    For a fixed switch probability \(p_{\rm switch}\), a source string is generated by concatenating latent constant-letter runs: each run chooses a letter uniformly from the 26 lowercase English letters and has length \(R\sim\operatorname{Geom}(p_{\rm switch})\).
    The three plotted distributions are uniform-random strings, \(p_{\rm switch}=1\); repeated strings, \(p_{\rm switch}\approx0.01\); and all-identical strings, the limit \(p_{\rm switch}\to0\).
    % Adjacent latent runs may choose the same letter, so observed same-letter stretches can be longer than one run.
    Output extraction and experimental details are in Appendix~\ref{app:llm_experiment}.
    }
    \label{fig:llm_copying}
\end{figure}

Surprisingly, however, removing this non-repetition assumption seems to change the difficulty of copying qualitatively.
As shown in Figure~\ref{fig:llm_copying}, frontier LLMs that copy random strings near-perfectly sometimes fail catastrophically on strings with long repeated runs of letters, either overgenerating severely or stopping early.
Repetition changes what the task demands: when many positions carry the same local content, matching the already-copied prefix no longer picks out which occurrence to continue from, so the model must tell these occurrences apart by other means.

To make this problem concrete, consider one step of autoregressive copying, with current context
\[
  \langle\BoS\rangle,\; s_1,\ldots,s_L,\; \langle\EoI\rangle,\; \langle\Copy\rangle,\; s_1,\ldots,s_k,
\]
for some $k \leq L$.
The task is to output $s_{k+1}$ if $k<L$, or output $\langle\EoS\rangle$ if $k=L$.\footnote{$\langle\EoI\rangle$ stands for ``end of input'' and $\langle\EoS\rangle$ for ``end of sequence''.}
An all-identical source, with $s_i=\sigma$ for every~$i$, isolates the termination problem: continuing requires only another $\sigma$, but stopping at the right point requires distinguishing progress that local content windows cannot reveal.
This raises the question: Can a shallow, low-precision transformer copy reliably under arbitrary repetition and terminate correctly?

The answer is yes: for any fixed finite vocabulary and each sufficiently large design horizon~$N$, a single two-layer transformer can copy every source of length at most~$N$ under arbitrary repetition and terminate correctly, using $O(\log\log N)$-bit precision and $O(\log N/\log\log N)$ width.
Our construction uses its first layer to encode each position's \textit{offset relative to delimiters}—specifically, its distance to the preceding delimiter ($\langle\BoS\rangle$ in the source or $\langle\Copy\rangle$ in the copy segment)—so that tokens can be retrieved by position rather than by matching their content.
The need for a delimiter anchor arises because on an all-identical source, attention cannot extract positional signals from the source tokens themselves.
RoPE's query/key rotations~\citep{su2021roformer} and ALiBi's logit biases~\citep{press2022alibi} change attention weights but leave the source value vectors identical: redistributing a fixed total attention weight among these tokens does not change their contribution to the output.
Position can still affect the output through the weights assigned to special delimiters, such as the beginning-of-source token $\langle\BoS\rangle$, but only by changing how much of each delimiter's fixed value enters the output.
% \bob{relatively indirect and inefficient}
By contrast, value/output rotations (VO-RoPE; \citealp{su2025vorope}) let position act on the returned vector itself: even when two queries attend entirely to $\langle\BoS\rangle$, the returned vectors can differ because the rotation encodes each query's distance from it.

Our first layer applies these rotations with small prime periods (e.g., $2,3,\ldots$) to encode delimiter-relative offsets.
For each prime period, the rotation encodes the offset modulo that prime, using a few well-separated phases that remain distinguishable at very low precision.
By the Chinese Remainder Theorem, the periods jointly distinguish every offset below their product, so the first layer resolves a long range of offsets with low precision and width.
The second layer then uses these representations to retrieve either the next source token or the end marker, handling copying and termination with one rule.

\subsection{Our Contributions}
We formalize the expressivity question in a \textit{horizon-uniform} framework: one model must copy every source of length $L\le N$ with the same parameters.
Its width, depth, and precision may grow with the design horizon $N$, but its parameters cannot specialize to the realized length.
This captures a natural requirement for a model designed for a finite context window, and lies between length-specific circuit expressivity~\citep{li2024chain,merrill2023parallelism} and fixed-resource expressivity over unbounded lengths~\citep{yang2025knee,yang2026length}.
Copying, for instance, is easy in the circuit model (the next-token map admits linear-size uniform $\mathrm{AC}^0$ circuits; see Appendix~\ref{app:ac0}), but fixed models also cannot copy inputs of unbounded lengths~\citep{huang2025formal,li2025fixedprecision}. 
By bridging these two regimes, our framework provides a fine-grained lens to quantify the positional representations, bit precision, and width needed for a transformer to copy reliably across all lengths within the horizon.

\paragraph{Copying and termination with VO-rotation.}
For any fixed finite vocabulary and all sufficiently large design horizons~$N$, we construct a two-layer transformer that copies every string of length $L\le N$ and terminates correctly under arbitrary repetition.
Our \textit{VO-rotation} instantiates the VO-RoPE operation with a prime-number period per head, using $\Theta(\log N/\log\log N)$ heads in total.
The construction uses $O(\log\log N)$-bit precision and $O(\log N/\log\log N)$ width (\Cref{thm:construction}).
We use small periodic counters to index the rounded rotation tables.
% Our \textit{VO-rotation} instantiates the VO-RoPE operation with a prime-number period per head.
% Layer~1 selects an anchor: the beginning-of-source delimiter $\langle\BoS\rangle$ in the source segment, or the copy-start delimiter $\langle\Copy\rangle$ in the copy segment.
% Layer~2 retrieves by the resulting offset representations, selecting either the next source token or the end-of-input marker, which the readout converts to the output termination marker (\Cref{sec:setup}).
% A head of period $q$ encodes the offset modulo~$q$; by the Chinese Remainder Theorem, $\Theta(\log N/\log\log N)$ prime-period heads distinguish offsets within the horizon.
% For any fixed finite vocabulary and all sufficiently large~$N$, the construction copies every string of length $L\le N$ with $O(\log\log N)$-bit precision and $O(\log N/\log\log N)$ width, using small periodic counters to index rounded rotation tables for VO rotation (\Cref{sec:construction}).

\paragraph{A width-independent copying obstruction.}
We also prove a width-independent impossibility result for \textit{eventually-constant bias-only} (ECB) transformers, whose only positional information enters through a pre-softmax additive attention bias that becomes constant beyond some distance.
Under our finite-precision execution model, the analysis covers NoPE~\citep{kazemnejad2024impact}, a recursive implementation of ALiBi~\citep{press2022alibi}, and bounded-window Hard-ALiBi~\citep{jelassi2024repeat} within this class.
The proof tracks identical-token runs after an arbitrary fixed prefix, layer by layer. 
Beyond a certain length, appending another identical token no longer changes the next-token logits.
In copying, a model that emits the required repeated letters then keeps repeating them for as long as the context remains in its domain, a failure pattern resembling the overruns observed in LLMs.
Increasing width does not remove this obstruction: for the instances in Corollary~\ref{cor:main}, models of depth $D(N)$ and precision $p(N)$ fail whenever $D(N)2^{p(N)}=o(N)$.
At the $O(\log\log N)$-bit precision of our construction, this rules out NoPE models of any width and depth $O(N^{1-\epsilon})$ for each fixed $\epsilon\in(0,1)$, whereas the VO construction uses just two layers.

\paragraph{Learning to copy under repetition.}
In synthetic experiments, all evaluated standard positional baselines lose accuracy under repetition near the training limit, while a shallower, narrower VO model learns to copy across all three evaluated distributions (\Cref{sec:experiments}).
VO copies perfectly at all evaluated lengths within the training range and generalizes with near-perfect accuracy up to the period product $5005$.
The models appear to have learned to utilize the periodic value-side positions, as accuracy on uniform-random and moderately repeated strings drops sharply just beyond $5005$ (\Cref{fig:synthetic_vo_generalization_10000}).

% !TEX root = ../main.tex
\section{Related Work}
\label{sec:related}

Copying under repetition requires distinguishing occurrences with the same content.
\citet{jelassi2024repeat} construct a two-layer transformer that copies exponentially long strings by content-based $n$-gram hashing, assuming no repeated $n$-grams; their copying error measures the first $L$ output tokens. Our two-layer construction retrieves by anchor-relative position, allowing arbitrary repetition and including correct termination in the guarantee. Repetition also distinguishes easy and difficult copying tasks in studies of length generalization~\citep{zhou2023algorithms}. \citet{huang2025formal} prove that Limit Transformers satisfying their Periodic and Local conditions cannot copy arbitrary strings. Our ECB lower bound instead quantifies layerwise saturation under finite-precision execution, independently of width.

Positional encodings differ both in where they enter the computation and in how they represent position.
Value-side representations include the additive relative embeddings of \citet{shaw2018selfattention} and the geometric transformations of GTA~\citep{miyato2024gta}. \citet{su2025vorope} gives the VO-RoPE mechanism used here, applying opposite position-dependent rotations to values before aggregation and to outputs afterward to encode relative offsets. RoVE~\citep{garciacastellanos2026rove} combines the same value/output rotation with query/key RoPE and studies it through an attentive-convolution interpretation and language-modeling experiments. ConveRT~\citep{henderson2020convert} uses learned absolute-position embeddings indexed by two coprime moduli. Our focus is on prime-period value rotations that encode anchor-relative offsets as modular residues, with explicit precision and head-count bounds for reliable copying (\Cref{thm:construction}).

Expressivity results depend on which resources may grow with input length.
Circuit complexity bounds characterize transformer computation~\citep{merrill2023parallelism,strobl2024formal}, but the fixed-context next-token copy map already lies in DLOGTIME-uniform $\mathrm{AC}^0$ (\Cref{app:ac0}). Temporal-logic analyses characterize fixed-precision NoPE transformers and the effect of local attention~\citep{li2025fixedprecision,li2026local}, while length-generalization theory studies when bounded-length training suffices for unbounded-length correctness~\citep{chen2025nonasymptotic,yang2026length}. Our horizon-uniform framework asks how positional and computational resources must scale with a finite design horizon, requiring the same parameters to work across all source lengths within it.

Sequential depth and explicit identifiers provide alternative routes to copying.
\citet{fan2025looped} implement copying by iterating a fixed block for $n$ steps, and \citet{kraus2026barriers} use a growing signpost alphabet to address copying and retrieval barriers in chain-of-thought reasoning. Our construction uses a fixed vocabulary and two layers, without intermediate reasoning or indexing tokens.

\section{Theoretical Framework}\label{sec:theory}

Under repetition, content matching need not distinguish the occurrence to copy next.
We study two ways positional information can reach the output: through attention weights over value vectors, or through position-dependent transformations of the values themselves.
The results establish failure for eventually-constant bias-only models under a specified finite-precision execution model, and sufficiency of a modular value-side construction for two-layer copying.
Appendix~\ref{app:theory_proofs} gives the full execution model, architecture definitions, and proofs.

%% ==================================================================
\subsection{Horizon-Uniform Copying}\label{sec:setup}

Fix a design horizon $N\in\mathbb N$.
Let $\Gamma=\mathcal V\cup\{\langle\BoS\rangle,\langle\EoI\rangle,\langle\Copy\rangle,\langle\EoS\rangle\}$ be the token alphabet for the copy task.
For a transformer~$T$, write $g_T$ for its deterministic greedy next-token map, under a fixed tie-breaking rule, on contexts in its \emph{model domain}, i.e., the contexts that fit within its context window.
We consider transformers~$T_N$ with context window at least $2N+3$.
For any source length $L\le N$, the input prompt has the form
\[
  P(s):=
  \underbrace{\langle\BoS\rangle,\; s_1,\ldots,s_L,\; \langle\EoI\rangle}_{\text{source segment}},\; \langle\Copy\rangle ,
\]
where $s_1,\ldots,s_L\in\mathcal V$ and there is no restriction on repetition.
After seeing $\langle\Copy\rangle$, the model must autoregressively output $s_1,\ldots,s_L,\langle\EoS\rangle$.

\begin{definition}[Horizon-uniform copying]
\label{def:horizon-uniform-copying}
Fix a design horizon~$N$.
A model~$T_N$ solves $\mathrm{Copy}_{\le N}$ if the same parameters, execution convention, and readout satisfy the following condition for every source string $s\in\mathcal V^L$ with $L\le N$.
Starting from the prompt~$P(s)$, recursively define
\[
  y_r
  =
  g_{T_N}\bigl(P(s),y_1,\ldots,y_{r-1}\bigr),
  \qquad
  1\le r\le L+1.
\]
Then
\[
  (y_1,\ldots,y_L,y_{L+1})
  =
  (s_1,\ldots,s_L,\langle\EoS\rangle).
\]
A family of models $\{T_N\}$ solves horizon-uniform copying if each $T_N$ solves $\mathrm{Copy}_{\le N}$.
% Unless stated otherwise, we impose no computability or algorithmic uniformity requirement on the family $\{T_N\}_N$ across different horizons.
\end{definition}

% For the lower bound we apply a terminal-run fixed-point lemma to one all-identical source length.
% Once the terminal run is saturated, appending another copy of the same token no longer changes the next-token map.

\paragraph{Finite precision.}\label{sec:float}
Both our results assume deterministic finite-precision floating-point execution with finite arithmetic intermediates, as detailed in Appendix~\ref{app:fp_primitives}.
Throughout this section, $p=p(N)$ is a resource that may scale with the design horizon~$N$.
For the softmax numerator and denominator, we use the left-to-right chain of rounded binary additions from~\citet[Definition~3.1]{li2024chain}.
This execution model lets us track how repeated contributions lose length information through rounding, and how value-side offsets remain usable under the same numerical arithmetic.

The proofs rely on the following two facts about finite-precision arithmetic:
\begin{itemize}[leftmargin=*,nosep]
\item When intermediates remain finite, repeated addition and rounding of an identical summand reaches a fixed point after at most $2^p$ states: the sequence $a, \mathsf{fl}(a+b), \mathsf{fl}(\mathsf{fl}(a+b)+b), \ldots$ is eventually constant.
\item With the max-shifted softmax convention, sufficiently large post-shift logit gaps yield \textit{one-hot attention}, which is used in our VO construction.
% \item For the recursive implementation of ALiBi, the corollary assumes finite intermediates do not overflow; the full arithmetic, rounding, accumulation, underflow, and finite-intermediate conventions are specified in Appendix~\ref{app:fp_primitives}.
\end{itemize}

%% ==================================================================
\subsection{Lower Bound for Eventually-Constant Bias-Only Transformers}
\label{sec:impossibility}

\begin{definition}[Eventually-constant bias-only transformer class]\label{def:ecb}
A family $\{T_N\}_{N\in\mathbb N}$ is in $\mathcal T_{\mathrm{ECB}}$ if, for each horizon~$N$, $T_N$ is a depth-$D(N)$ causal transformer with arbitrary number of attention heads and hidden dimensions, using deterministic $p(N)$-bit execution.
Its only positional dependence is a distance-dependent additive bias on the attention logits. For head $h$ in layer $\ell$, the attention logit from query position~$i$ to key position~$j$ is defined as:
\[
  e^{(\ell,h)}_{i,j}
  =
  \mathsf{fl}\bigl((q^{(\ell,h)}_i)^\top k^{(\ell,h)}_j - b^{(\ell)}_h(i-j)\bigr),
\]
where each $b^{(\ell)}_h$ is an eventually constant function: for some $n^{(\ell)}_h$, $b^{(\ell)}_h(d+1)=b^{(\ell)}_h(d)$ for all $d\ge n^{(\ell)}_h$.
Here ECB stands for \emph{eventually-constant bias-only}.
The full formal definition is in Definition~\ref{def:ecb_full}.
\end{definition}

Write \((x,\sigma^k)\) for a fixed prefix \(x\) followed by \(k\) copies of \(\sigma\).
The proof tracks where the hidden states become identical within each layer.
Write $\bar n_\ell=\max_h n_h^{(\ell)}$ and define $\kappa_0=0$, $\kappa_\ell=\kappa_{\ell-1}+\bar n_\ell+2^p$.
Once the previous layer has identical states beyond $\kappa_{\ell-1}$, a sufficiently late query sees a fixed past, a block of identical softmax addends, and a near window with a fixed list of relative distances (\Cref{fig:layerwise_saturation}).
The bias is constant on the far positions, and at least $2^p$ middle-block additions make the accumulator independent of further repetitions.
Allowing another $\bar n_\ell$ positions for the near window gives the threshold $\kappa_\ell$ at the next layer.

\begin{figure}[!t]
\centering
\begin{tikzpicture}[x=1cm,y=1cm,font=\footnotesize]
\definecolor{satblue}{RGB}{45,112,158}
\definecolor{satgreen}{RGB}{48,133,109}
\definecolor{satpurple}{RGB}{125,92,150}
\node[anchor=west,font=\small\bfseries] at (0,0.5) {(a) Each attention head: softmax sums from left to right};
\filldraw[fill=black!7,draw=black!45] (0,0) rectangle (3,-0.7);
\filldraw[fill=satblue!16,draw=satblue] (3,0) rectangle (8.2,-0.7);
\filldraw[fill=black!15,draw=black!55] (8.2,0) rectangle (12,-0.7);
\node[align=center] at (1.5,-0.35) {Earlier positions\\same fixed context};
\node at (5.6,-0.35) {Identical contributions};
\node[anchor=north] at (5.6,-0.8) {Same states in the previous layer};
\node[align=center] at (10.1,-0.35) {Nearby positions\\same distances every time};
\node[anchor=north east] at (12,-0.8) {Current token};
\draw[->,thick,satblue] (0,-1.4) -- (12,-1.4);
\node[anchor=north,font=\footnotesize] at (6,-1.45) {Summation order};
\node[anchor=west,font=\small\bfseries] at (0,-2.45) {(b) Each layer has a long tail of identical states};
\node[anchor=east] at (1.5,-3.05) {Input};
\node[anchor=east] at (1.5,-3.7) {Layer $1$};
\node[anchor=east] at (1.5,-4.35) {Layer $2$};
\filldraw[fill=satblue!16,draw=satblue] (2,-2.83) rectangle (12,-3.27);
\filldraw[fill=black!7,draw=black!35] (2,-3.48) rectangle (4.7,-3.92);
\filldraw[fill=satgreen!17,draw=satgreen] (4.7,-3.48) rectangle (12,-3.92);
\filldraw[fill=black!7,draw=black!35] (2,-4.13) rectangle (7.4,-4.57);
\filldraw[fill=satpurple!17,draw=satpurple] (7.4,-4.13) rectangle (12,-4.57);
\node at (7,-3.05) {a\quad a\quad a\quad a\quad a\quad a\quad a\quad a\quad a\quad a\quad a\quad a};
\node at (8.35,-3.7) {All identical};
\node at (9.7,-4.35) {All identical};
\node at (3.35,-3.7) {May differ};
\node at (4.7,-4.35) {May differ};
\draw[->,thick] (2,-4.95) -- (12,-4.95);
\node[anchor=north] at (7,-5.02) {Position in a run of the same token};
\node[anchor=north,font=\footnotesize\bfseries] at (6,-5.55) {Eventually, one more repeated token leaves the next-token prediction unchanged.};
\end{tikzpicture}
\caption{How ECB transformers lose track of repetition through rounded accumulation.
Panel (a) shows the left-to-right accumulation order. In panel (b), colored tails indicate guaranteed identical states within each layer; gray states may differ (schematic).}
\label{fig:layerwise_saturation}
\end{figure}

The following lemma is the main obstruction: after fixing an earlier prefix, the length of a long run at the end is forgotten.

\begin{lemma}[Terminal-run saturation]
\label{lem:saturated_suffix_self_loop}
Fix a horizon~$N$ and $T_N\in\mathcal T_{\mathrm{ECB}}$ with depth $D=D(N)$.
Let
\[
  \kappa_D
  :=
  \sum_{\ell=1}^{D}
  \left(\bar n_\ell+2^{p(N)}\right),
  \qquad
  \bar n_\ell:=\max_h n_h^{(\ell)} .
\]
For every fixed prefix $x\in\Gamma^*$, every $\sigma\in\mathcal V$, and every $k,k'\ge\kappa_D$ for which the contexts \((x,\sigma^k)\) and \((x,\sigma^{k'})\) are in the model domain, the next-token logits on these two contexts are \textbf{identical}.
\end{lemma}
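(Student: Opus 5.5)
We argue by induction on the layer index, proving the following invariant. For each $\ell\in\{0,\dots,D\}$ there is a threshold $\kappa_\ell=\kappa_{\ell-1}+\bar n_\ell+2^{p}$, with $\kappa_0=0$, such that for every fixed prefix $x$ and symbol $\sigma$ the following holds. Consider the contexts $(x,\sigma^k)$ that lie in the model domain. The layer-$\ell$ hidden state at the position of the $m$-th copy of $\sigma$ has two properties:
\begin{itemize}[leftmargin=*,nosep]
\item it does not depend on the total run length $k\ge m$, which is automatic from causality;
\item it is the same vector $h^{(\ell)}_\star$ for every $m\ge\kappa_\ell$.
\end{itemize}
The hidden states at the positions of $x$ are likewise independent of $k$, again by causality. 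Taking $\ell=D$ and applying the deterministic readout to the last position yields identical logits whenever $k,k'\ge\kappa_D$.

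The base case $\ell=0$ holds because every copy of $\sigma$ has the same input embedding, and no absolute position enters the model: the only positional information is the attention bias. For the inductive step, fix a head $h$ in layer $\ell$ and a query at run position $m\ge\kappa_\ell$. Split its keys, in left-to-right summation order, into three blocks:
\begin{itemize}[leftmargin=*,nosep]
\item the \emph{past}: the positions of $x$ together with run positions $1,\dots,\kappa_{\ell-1}-1$;
\item the \emph{middle}: run positions $\kappa_{\ell-1}\le j\le m-\bar n_\ell$;
\item the \emph{near window}: the last $\bar n_\ell$ keys before and including the query.
\end{itemize}
Each block is handled differently.
\begin{itemize}[leftmargin=*,nosep]
\item In the near window, the keys have states $h^{(\ell-1)}_\star$, because $m-\bar n_\ell\ge\kappa_{\ell-1}+2^p$. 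Their relative distances form the fixed list $0,\dots,\bar n_\ell-1$, so their logits and values are a fixed list independent of $m$.
\item In the middle, every key has state $h^{(\ell-1)}_\star$ and distance at least $\bar n_\ell\ge n_h^{(\ell)}$. The bias is therefore the constant $b^{(\ell)}_h(n^{(\ell)}_h)$, and every middle addend is one identical logit $e_\star$.
\item In the past, the logits depend on $m$ only through the bias. Those keys sit at distance at least $m-\kappa_{\ell-1}+1\ge\bar n_\ell+2^p$, so they too carry the constant bias. Their list of logits and values is therefore independent of $m$.
\end{itemize}

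Next we follow the finite-precision softmax of Appendix~\ref{app:fp_primitives}. The max shift first: the maximum logit is the maximum over the fixed past list, the single value $e_\star$, and the fixed near list, so it does not depend on $m$. Hence every shifted, exponentiated, rounded numerator term and value product in each block is independent of $m$. The left-to-right accumulation of the denominator and of each coordinate of the weighted value sum then proceeds in three stages:
\begin{itemize}[leftmargin=*,nosep]
\item It starts from a fixed accumulator state after the past block.
\item It applies $\mathsf{fl}(\cdot+b)$ with the same summand $b$ once for each middle key. There are $m-\bar n_\ell-\kappa_{\ell-1}+1\ge 2^p$ such additions.
\item It processes the fixed near list.
\end{itemize}
By the first fact of \Cref{sec:float}, repeated rounded addition of a fixed summand reaches its fixed point within $2^p$ steps. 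So after at least $2^p$ middle additions the accumulator is the same for all such $m$, and the near-window stage preserves this equality. The final division and the output projection are then independent of $m$. Residual connections, layer norm and the MLP act positionwise on inputs that no longer depend on $m$, where the residual input is $h^{(\ell-1)}_\star$. This gives $h^{(\ell)}_\star$ and closes the induction.

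The main obstacle I expect is bookkeeping in the fixed-point step. The accumulators are vectors: one denominator plus one numerator per value coordinate. Each coordinate is a separate scalar chain of $\mathsf{fl}$-additions with its own fixed summand, so each stabilizes within $2^p$ steps; the claim is per coordinate, and we must make sure the paper's execution model really accumulates coordinatewise. There is also the finite-intermediates caveat: the sequence $a,\mathsf{fl}(a+b),\dots$ is monotone on a set of at most $2^p$ finite values, and the model's convention excludes overflow, so the fixed-point fact applies. Finally, we must check that the max-shift convention does not reintroduce dependence on $m$. If the shift were instead computed by a running maximum, the same three-block argument applies to that pass, since it also stabilizes trivially.
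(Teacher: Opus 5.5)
Your proposal is correct and follows the paper's proof essentially step for step. It uses the same layerwise induction with $\kappa_\ell=\kappa_{\ell-1}+\bar n_\ell+2^p$, the same past/middle/near-window split of each head's scan, the same observation that the max shift is independent of the run length, and constant-block invariance for the middle block. The differences are cosmetic: you use the layer-wide $\bar n_\ell$ as the near-window width where the paper uses the per-head $n_h^{(\ell)}$, and the middle block should start at $\max\{1,\kappa_{\ell-1}\}$, since for $\ell=1$ the run has no position $0$.
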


It follows that, after a long enough terminal run of $\sigma$'s, the next-token map may still depend on the earlier prefix~$x$, but it no longer depends on the length of that run.

\begin{restatable}[$\mathcal T_{\mathrm{ECB}}$ lower bound: correct copy implies a $\sigma$-loop]{theorem}{impossibilitythm}
\label{thm:impossibility}
Assume the deterministic left-to-right finite-precision execution model of \Cref{sec:float}.
Let $\{T_N\}\in\mathcal T_{\mathrm{ECB}}$ have depth $D(N)$ and precision $p(N)$ at design horizon~$N$.
Let $\kappa_D(N)$ be the saturation threshold from Lemma~\ref{lem:saturated_suffix_self_loop} for~$T_N$.
If $\kappa_D(N)=o(N)$, then for all sufficiently large $N$, every $L\in[N/3,N/2]$, and every fixed $\sigma\in\mathcal V$, $T_N$ fails on the all-identical prompt
\[
  \langle\BoS\rangle,\sigma^L,\langle\EoI\rangle,\langle\Copy\rangle .
\]
More precisely, it either:
\begin{enumerate}
    \item Makes a mistake before completing the $L$-token copy, or,
    \item if it outputs $y_1=\cdots=y_L=\sigma$, then $y_r = \sigma$ for every $r>L$ for which the context remains in the model domain of $T_N$.
\end{enumerate}
\end{restatable}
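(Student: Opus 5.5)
The plan is to apply Lemma~\ref{lem:saturated_suffix_self_loop} with the prefix $x=(\langle\BoS\rangle,\sigma^L,\langle\EoI\rangle,\langle\Copy\rangle)$ and the terminal run being the copied output. Suppose $T_N$ makes no mistake during the first $L$ steps, so $y_1=\cdots=y_L=\sigma$. The context after $r-1$ generated tokens is $(x,\sigma^{r-1})$. Since $\kappa_D(N)=o(N)$, for all sufficiently large $N$ we have $\kappa_D(N)<N/3\le L$. Consequently the step-$L$ context $(x,\sigma^{L-1})$ and the step-$(L+1)$ context $(x,\sigma^{L})$ both have terminal runs of length at least $\kappa_D$. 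The first inequality needs $L-1\ge\kappa_D$, which holds once $\kappa_D\le N/3-1$, again true for large $N$.

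Both contexts lie in the model domain, because their length is at most $L+3+L\le N+3\le 2N+3$. So the lemma gives identical logits on them. Greedy decoding under the fixed tie-breaking rule is a deterministic function of the logits, so the two steps produce the same token: $y_{L+1}=g_{T_N}(x,\sigma^{L})=g_{T_N}(x,\sigma^{L-1})=y_L=\sigma$. This is not $\langle\EoS\rangle$, so the copy fails.

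To obtain the stronger statement in item 2, I will induct on $r>L$. Assume $y_L=\cdots=y_{r-1}=\sigma$. Then the context at step $r$ is $(x,\sigma^{r-1})$ and the one at step $L$ is $(x,\sigma^{L-1})$. Both have run length at least $\kappa_D$, and both are in the domain whenever the step-$r$ context is. Applying the lemma to the pair $(L-1,r-1)$ gives $y_r=\sigma$.

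The argument has two dichotomy cases: either some $y_r\ne s_r=\sigma$ for $r\le L$ (item 1), or item 2 holds. The only delicate points are bookkeeping. First, the prefix $x$ must be held fixed and the repeated tokens must be exactly the terminal $\sigma$'s; the source's own $\sigma$'s are absorbed into $x$, which the lemma allows since $x$ is arbitrary. Second, the threshold comparison $L-1\ge\kappa_D$ must hold uniformly over $L\in[N/3,N/2]$, which follows directly from the $o(N)$ hypothesis. The upper bound $L\le N/2$ is not needed for the failure itself. It only ensures that many further steps $r>L$ remain within the window, so the $\sigma$-loop in item 2 is a genuine long overrun rather than a vacuous statement. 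I expect no real obstacle beyond this bookkeeping, since the lemma does the heavy lifting.
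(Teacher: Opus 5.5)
Your proposal is correct and follows essentially the same route as the paper's proof. You fix the prefix $x_L$ and use $\kappa_D=o(N)$ to get $L-1\ge\kappa_D$ uniformly for $L\in[N/3,N/2]$; terminal-run saturation then gives identical logits on $(x_L,\sigma^k)$ for all $k\ge L-1$, so greedy decoding emits $\sigma$ at the termination step and at every later step. Your explicit induction for item 2 and your domain check (which only needs $2L+3\le 2N+3$, as you note) spell out steps the paper leaves implicit.
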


\paragraph{Proof idea.}
For the all-identical copy task, take
\[
  x_L:=\langle\BoS\rangle,\sigma^L,\langle\EoI\rangle,\langle\Copy\rangle .
\]
Since $\kappa_D(N)=o(N)$, every $L\in[N/3,N/2]$ satisfies $L-1\ge\kappa_D(N)$ for sufficiently large~$N$.
If the $L$-th copied token is correctly generated from \((x_L,\sigma^{L-1})\), then the termination query \((x_L,\sigma^L)\) has the same logits and hence also outputs~$\sigma$, even though the correct token is $\langle\EoS\rangle$.
Repeating the same argument gives the loop.

For the recursive ALiBi case below, the eventual-constancy argument assumes that the bias recurrence does not overflow, as in Lemma~\ref{lem:alibi_ecb}.
This isolates stagnation from rounding rather than failure from exponent overflow.

\begin{restatable}{corollary}{maincorollary}
\label{cor:main}
Consider the variants of ALiBi~\citep{press2022alibi} and Hard-ALiBi~\citep{jelassi2024repeat} as defined in \Cref{def:alibi_hard_alibi}.
For depth-$D(N)$, arbitrary-width, deterministic $p(N)$-bit models satisfying $D(N)2^{p(N)}=o(N)$, Theorem~\ref{thm:impossibility} applies to:
\begin{itemize}[leftmargin=*,nosep]
\item \textbf{NoPE:} no positional logit bias.
\item \textbf{Recursive implementation of ALiBi:} the rounded slope recurrence $b_h(d+1)=\mathsf{fl}(b_h(d)+m_h)$.
\item \textbf{Bounded-window Hard-ALiBi:} hard-masked heads whose attention windows are at most $2^{p(N)}$, with any remaining heads treated as NoPE heads.
\end{itemize}
% The recursive implementation of ALiBi and bounded-window Hard-ALiBi are defined in Appendix~\ref{app:theory_proofs}, Definition~\ref{def:alibi_hard_alibi}.
In particular, this applies even when $D=N^{0.01}$ and $p\le0.9\log_2N$.
% Consequently, none of these classes can solve horizon-uniform copying: on each sufficiently long all-identical source $\sigma^L$, the model either fails before completing the $L$-token copy or, after completing it, continues outputting~$\sigma$ instead of terminating.
\end{restatable}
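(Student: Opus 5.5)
The plan is to reduce Corollary~\ref{cor:main} to Theorem~\ref{thm:impossibility}. Two things need checking for each of the three variants. First, the model lies in $\mathcal T_{\mathrm{ECB}}$, i.e.\ every head's positional bias is an eventually constant function of the distance. Second, the eventual-constancy thresholds satisfy $\bar n_\ell\le c\,2^{p(N)}$ for an absolute constant $c$. Once both hold, Lemma~\ref{lem:saturated_suffix_self_loop} gives
\[
  \kappa_D(N)=\sum_{\ell=1}^{D(N)}\bigl(\bar n_\ell+2^{p(N)}\bigr)\le (c+1)\,D(N)\,2^{p(N)}=o(N),
\]
and Theorem~\ref{thm:impossibility} applies verbatim. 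The ``in particular'' clause is then arithmetic: with $D=N^{0.01}$ and $p\le 0.9\log_2 N$ we get $D\,2^{p}\le N^{0.91}=o(N)$.

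The three cases are handled separately.

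\textbf{NoPE.} Here $b_h\equiv 0$, so $n_h^{(\ell)}=0$ and the bound is immediate.

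\textbf{Bounded-window Hard-ALiBi.} A hard-masked head with window $w\le 2^{p(N)}$ has bias $0$ (or the ALiBi value) for $d<w$ and $+\infty$ (masked) for $d\ge w$. It is therefore constant for $d\ge w$, giving $n_h\le 2^{p(N)}$. I will need to confirm that Definition~\ref{def:ecb_full} allows the masked value, as an extended-real or ``$-\infty$ logit'' constant, to count as a constant bias. Under the max-shifted softmax, a masked position contributes an exactly zero addend, and a run of zero addends trivially leaves the accumulator fixed. The remaining heads are NoPE and are covered by the previous case.

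\textbf{Recursive ALiBi.} This is the main case. The bias sequence $b(0),\ b(d+1)=\mathsf{fl}(b(d)+m_h)$ is an iterate of the deterministic map $\phi(x)=\mathsf{fl}(x+m_h)$ on the finite set of $p$-bit floats. By assumption (Lemma~\ref{lem:alibi_ecb}) the iteration never overflows, so it stays among the finitely many finite floats. I then use the first finite-precision fact from \Cref{sec:float}: repeated rounded addition of a fixed summand is eventually constant after at most $2^p$ states. More concretely, with a fixed-sign $m_h$ the iterates are monotone in the float order. Each step either strictly moves to a new float or fixes the value, and a strictly monotone walk cannot revisit a state. Hence it stabilizes within at most $2^{p}$ steps, since the number of distinct $p$-bit values bounds the number of strict moves. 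This gives $n_h\le 2^{p(N)}$.

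The expected obstacle is exactly this recursive-ALiBi step. I must make the monotonicity and no-cycling argument rigorous under the paper's rounding convention: rounding is monotone, so $x\le y\Rightarrow \phi(x)\le\phi(y)$, and $\phi(x)\ge x$ when $m_h\ge 0$. I must also be careful that ``at most $2^p$ states'' is the correct count given the float format in Appendix~\ref{app:fp_primitives}. If that format has $c\,2^p$ representable values, the constant $c$ simply propagates into $\kappa_D$ and is harmless for the $o(N)$ conclusion.
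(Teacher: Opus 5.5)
Your proposal is correct and follows essentially the same route as the paper. You bound each $\bar n_\ell$ by $2^{p}$: it is zero for NoPE, it follows from monotone stabilization of the rounded recurrence for recursive ALiBi as in Lemma~\ref{lem:alibi_ecb}, and it is at most the window length for Hard-ALiBi as in Lemma~\ref{lem:hard_alibi_ecb}; this gives $\kappa_D\le 2D\,2^{p}=o(N)$, so Theorem~\ref{thm:impossibility} applies, with the same $N^{0.91}$ arithmetic for the final clause. The point you flagged is already settled: Definition~\ref{def:ecb_full}\textup{(A4)} explicitly admits the hard-mask value $b=+\infty$, meaning the key is removed with shifted weight exactly zero. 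The format has at most $2^{p}$ encodings, so your constant is $c=1$.
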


% The proof is immediate from  heorem~\ref{thm:impossibility}: NoPE has zero distance bias, the recursive implementation of ALiBi becomes eventually constant after at most~$2^{p(N)}$ finite-precision states, and bounded-window Hard-ALiBi is constant outside its window.
% Thus $\kappa_D(N)\le2D(N)\,2^{p(N)}$, and the sufficient tradeoff $D(N)2^{p(N)}=o(N)$ gives the corollary.
% Appendix~\ref{app:theory_proofs} verifies the ALiBi and Hard-ALiBi cases in detail.

\paragraph{Scope.}
Under exact accumulation, the attention mass assigned to a repeated run relative to the fixed prefix can continue to carry length information.
The lower bound identifies how repeated rounding in the execution model of \Cref{sec:float} makes this information stop changing.
Beyond $\mathcal T_{\mathrm{ECB}}$, we also observe copying failures on long all-identical strings within the training range for RoPE, sinusoidal embeddings, and directly computed ALiBi (\Cref{subsec:standard_positional_experiments}).

%% ==================================================================
\subsection{Upper Bound via VO Position Hashing}
\label{sec:construction}

The positive construction replaces ``content hashing'' in \cite{jelassi2024repeat} by position hashing.
Layer~1 selects a delimiter rather than a repeated source token and writes the query's distance from that delimiter into the value stream.
Layer~2 uses this representation to select the next source occurrence, without matching token content.

\paragraph{Anchors and offsets.}\label{sec:anchors}
We call \(\langle\BoS\rangle\) and \(\langle\Copy\rangle\) the \textit{anchors}.
The \textit{active anchor} at position~$t$ is
\[
  a(t) :=
  \begin{cases}
    \text{the unique }\langle\Copy\rangle\text{ position in }[1,t],
      & \text{if }\langle\Copy\rangle\text{ has appeared by time }t,\\
    \text{the }\langle\BoS\rangle\text{ position}, & \text{otherwise.}
  \end{cases}
\]
Thus $a(t)$ is the nearest anchor at or before~$t$, and the \textit{active-anchor offset} is $\delta(t):=t-a(t)$.
Source token~$s_k$ has offset~$k$, and the source-side \(\langle\EoI\rangle\) has offset~$L+1$.
At a copy-region query of offset~$k$, the desired source-side offset is~$k+1$.
All visible copy-region offsets are at most~$k$, so this target occurs only on the source side: it holds $s_{k+1}$ if $k<L$ and \(\langle\EoI\rangle\) if $k=L$.
The final readout maps vocabulary tokens to themselves and the retrieved \(\langle\EoI\rangle\) to \(\langle\EoS\rangle\).
% This precisely measures the \textit{offset} $k$ for each source token / correctly copied prefix token $s_k$. 
\paragraph{Writing offsets into values.}
VO-rotation encodes the distance to the selected anchor modulo a head-specific period.
For a period~$q$, let $R_q$ be the planar rotation by $2\pi/q$.
In the two-dimensional case, one-hot attention to position~$j$ with rotary value $\mathbf e_1$ returns $R_q^{t-j}\mathbf e_1$, whose coordinates are $(\cos(2\pi(t-j)/q),\sin(2\pi(t-j)/q))$, up to rounding.
Thus the returned vector records the selected offset modulo~$q$.
On an even-dimensional rotary value subspace, let $\mathsf R_{q,m}$ denote block-diagonal copies of~$R_q$; powers are interpreted modulo~$q$.

\begin{definition}[VO-rotated attention head]\label{def:vo_rotated_head}
A VO head is an ordinary NoPE causal attention head, except that a chosen value subspace is pre-rotated by the key position and post-rotated by the query position.
For $\mathbf v_j^{(h)}=(\mathbf v_j^{(h),\mathrm{rot}},\mathbf v_j^{(h),\mathrm{rest}})$ with $\mathbf v_j^{(h),\mathrm{rot}}\in\mathbb R^{m_h}$, its head-local pre-output before rounding satisfies
\[
  \widetilde{\mathbf o}^{(h),\mathrm{rot}}_t
  =
  \mathsf R_{q_h,m_h}^{\,t}
  \sum_{j\le t}
  \alpha^{(h)}_{t,j}
  \mathsf R_{q_h,m_h}^{-j}
  \mathbf v^{(h),\mathrm{rot}}_j,
  \qquad
  \widetilde{\mathbf o}^{(h),\mathrm{rest}}_t
  =
  \sum_{j\le t}\alpha^{(h)}_{t,j}\mathbf v_j^{(h),\mathrm{rest}} .
\]
In exact arithmetic, one-hot attention on~$j$ gives rotary block $\mathsf R_{q_h,m_h}^{\,t-j}\mathbf v_j^{(h),\mathrm{rot}}$, so the selected value carries the relative offset $t-j\bmod q_h$.
Lemma~\ref{lem:fingerprint} bounds the error from the two rounded rotations in the construction.
\end{definition}

Although the definition allows general $m_h$, each Layer-1 VO head in our construction uses one two-dimensional block, with $m_h=2$ and $\mathbf v_j^{(h),\mathrm{rot}}=\mathbf e_1$.
With distinct prime periods, the heads' combined encodings form a unique fingerprint of the selected offset by CRT.

\begin{definition}[VO positional transformer class $\mathcal T_{\mathrm{VO}}$]\label{def:vo}
We write $\mathcal T_{\mathrm{VO}}$ for transformers equipped with such VO heads, where each position has exact access to its index modulo each chosen period to select rounded rotation-table entries; Appendix~\ref{app:vo_formal} gives the formal instantiation.
\end{definition}

\begin{restatable}[$\mathcal T_{\mathrm{VO}}$ upper bound with CRT-chosen periods]{theorem}{constructionthm}
\label{thm:construction}
For every fixed finite vocabulary~$\mathcal V$, there exists a family $\{T_N\}\subset\mathcal T_{\mathrm{VO}}$ of two-layer transformers such that, for all sufficiently large design horizons~$N$, $T_N$ correctly copies every string of every length $L\le N$:
\[
  \langle\BoS\rangle,s_1,\ldots,s_L,\langle\EoI\rangle,\langle\Copy\rangle
  \quad\mapsto\quad
  s_1,\ldots,s_L,\langle\EoS\rangle .
\]
The model $T_N$ uses:
\begin{itemize}[leftmargin=*,nosep]
\item $H=\Theta(\log N/\log\log N)$ first-layer prime-period heads, each using $m_h=2$;
\item floating-point precision $p=O(\log\log N)$;
\item dimension $d=O(|\mathcal V|+\log N/\log\log N)$.
\end{itemize}
\end{restatable}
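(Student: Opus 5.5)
We construct $T_N$ explicitly with the anchor/offset scheme of \Cref{sec:anchors}. First we choose the periods: let $q_1<\dots<q_H$ be the first $H$ primes with $\prod_h q_h > 2N+3$. By the prime number theorem, $\sum_{h\le H}\log q_h=\vartheta(q_H)\sim q_H$. Hence $H=\Theta(\log N/\log\log N)$ and $q_H=O(\log N)$. By CRT, offsets in $[0,2N+3)$ are determined by their residues modulo the $q_h$. Because every period is $O(\log N)$, adjacent phases $2\pi r/q_h$ on the unit circle are separated by $\Omega(1/\log N)$. Rotation-table entries rounded to $p=O(\log\log N)$ bits, i.e.\ relative error $2^{-p}\le c/\log^2 N$, therefore still distinguish the residues with margin. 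This separation is the quantitative content we will extract from Lemma~\ref{lem:fingerprint}.

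\emph{Layer 1 (anchor selection and offset writing).} The embedding marks anchors with a flag coordinate, and marks whether a token is $\langle\Copy\rangle$ or lies after it. Using the $\mathcal T_{\mathrm{VO}}$ access to $t\bmod q_h$, each position also carries its rounded table entry. In each head, the query/key logits give a large positive score to the active anchor and a very negative score to everything else. For $t$ before $\langle\Copy\rangle$, only $\langle\BoS\rangle$ is an anchor, so the score is simply ``is anchor''. For $t$ at or after $\langle\Copy\rangle$, we add a score for being $\langle\Copy\rangle$ that dominates. With the max-shifted softmax and a gap of $\Theta(p)$ in exponent range, attention becomes one-hot, by the second finite-precision fact. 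The VO head with $\mathbf v^{\mathrm{rot}}=\mathbf e_1$ then returns approximately $(\cos,\sin)(2\pi\delta(t)/q_h)$. Concatenating over the $H$ heads, position $t$ stores a fingerprint $\phi(\delta(t))\in\mathbb R^{2H}$. The residual stream also carries token identity and a segment bit (source vs.\ copy).

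\emph{Layer 2 (retrieval by offset).} The query at a copy-region position with offset $k$ must select the source-side position of offset $k+1$. Applying the fixed rotation $R_{q_h}$ to each block of the query's fingerprint is a linear map, giving $\phi(k+1)$ up to rounding. The logit is $\lambda\langle R\phi(\delta(t)),\phi(\delta(j))\rangle$ plus a large penalty for non-source keys; the delimiter positions get flags as needed. Blockwise, $\cos(2\pi(a-b)/q_h)$ equals $1$ when $a\equiv b$ and is at most $1-\Omega(1/q_h^2)$ otherwise. So the exact match scores $H$, while any other offset below $\prod q_h$ differs in at least one residue and loses $\Omega(1/\log^2 N)$. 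We take the scale $\lambda=\mathrm{poly}(\log N)$, still representable with $O(\log\log N)$ exponent bits. Then the loss, after accounting for rounding error $O(H2^{-p})$, becomes a post-shift logit gap large enough for one-hot attention. The head copies a one-hot token value of size $|\mathcal V|+1$, covering $\mathcal V\cup\{\langle\EoI\rangle\}$. The readout maps $\langle\EoI\rangle\mapsto\langle\EoS\rangle$, and handles the $\langle\Copy\rangle$ position itself, which has offset $0$ and retrieves $s_1$ at offset $1$. Correctness for every step $r\le L+1$ then follows from the offset bookkeeping in \Cref{sec:anchors}. Dimension is $O(|\mathcal V|+H)$.

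\emph{Main obstacle.} The hard part is the finite-precision analysis of layer 2: the margin between the matching and best non-matching inner product must survive rounding. The inputs to that inner product are the doubly rounded VO outputs, and the products are then accumulated left to right with $O(\log\log N)$ bits. My first approach would be to make the fingerprint discrete. I would replace the continuous dot product by thresholding each block to its nearest of $q_h$ phases via an MLP, giving a one-hot residue code. Matching can then use integer-valued scores: the count of agreeing residues, which is at most $H$ and exactly representable. Alternatively, the score can be $H$ minus a penalty, so the gap is a full unit. This avoids needing $\lambda$ large, reduces the error analysis to Lemma~\ref{lem:fingerprint}'s per-block bound being below half the phase spacing, and leaves one-hot softmax with a constant-times-$p$ gap. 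The cost is width $\sum q_h=O(\log^2 N/\log\log N)$, which exceeds the claimed bound. So I would retain the dot-product version and bound the accumulated error carefully. One-hot residue codes would serve only as a fallback.
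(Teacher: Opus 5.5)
Your plan is essentially the paper's construction. Layer~1 uses prime-period VO heads that attend one-hot to the active anchor and write \(\delta(t)\bmod q_h\) as a rotated \(\mathbf e_1\). Layer~2 rotates the query's fingerprint by one step and matches it against the key fingerprints with a polylogarithmic scale, so that every wrong key underflows. The readout sends a retrieved \(\langle\EoI\rangle\) to \(\langle\EoS\rangle\). The paper centers each prime block with an extra \(-A^2\) coordinate, so the target logit is near \(0\) rather than near \(\lambda H\); your uncentered score works the same way. Your rejection of the discrete residue-code fallback on width grounds is also right.

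Two points need fixing when you write it out.

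\emph{Precision.} The precision you state does not close the Layer-2 margin. The mismatch penalty is only \(8/q_{\max}^2=\Theta(1/\log^2 N)\). Take your own error term \(O(H2^{-p})\) with \(2^{-p}=c/\log^2N\): it is \(\Theta(1/(\log N\log\log N))\), which already exceeds that gap. Moreover, left-to-right accumulation of \(H\) terms of size \(\lambda\) really costs \(O(\lambda H^2\varepsilon_{\mathrm{mach}})\) (compare \Cref{lem:l2_sum_rounding}). You therefore need \(\varepsilon_{\mathrm{mach}}\) of order \((\log N)^{-4}\), which the paper gets from \(p_m=c_m\lceil\log_2\log_2\Pi\rceil\) with \(c_m\ge4\). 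This is still \(O(\log\log N)\) bits. The exponent bits are set separately, so that \(U_p\) and the scale are polylogarithmic and representable.

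\emph{Segment bit.} A token embedding cannot record whether a token lies after \(\langle\Copy\rangle\), because copied tokens are the same vocabulary symbols as source tokens. Neither of your uses of that bit is needed.
\begin{itemize}
\item In Layer~1, put the priority entirely on the key side: \(\langle\Copy\rangle\) above \(\langle\BoS\rangle\) above all other tokens, each step by \(U_p\), as in \eqref{eq:qk1}. Causal masking then selects the active anchor without the query knowing its segment.
\item In Layer~2, the penalty on non-source keys is redundant. Every visible copy-region offset is at most \(k\), so offset \(k+1\) already occurs at a unique visible position.
\end{itemize}
If you want the bit anyway, the anchor-attending head can return it through a non-rotary value coordinate.
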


\paragraph{Proof sketch.}
Layer~1 attends one-hot to the active anchor: \(\langle\BoS\rangle\) before \(\langle\Copy\rangle\) and \(\langle\Copy\rangle\) afterward.
VO writes $\delta(t)\bmod q_h$ into head~$h$.
Since $\prod_h q_h>N+1$, CRT identifies every offset in the design horizon.
At a copy-region query of offset~$k$, Layer~2's fixed query projection applies a one-step rotation to each fingerprint component (so the query can attend to the \textit{next token} in the input string), while the keys retain the candidate positions' fingerprints.
Before scaling and rounding, their centered inner products sum to
\[
  \sum_{h=1}^{H}\left[\cos\!\left(\frac{2\pi(\delta(j)-\delta(t)-1)}{q_h}\right)-1\right].
\]
The unique visible position at offset~$k+1$ matches every residue and has score zero; every other position mismatches in at least one component and has a negative score.
The precision bound preserves this gap, and scaling makes the non-winning softmax weights underflow to zero.
If $k=L$, this position is the source-side \(\langle\EoI\rangle\), and the final readout outputs \(\langle\EoS\rangle\).
Appendix~\ref{app:theory_proofs} gives the finite-precision gap and resource accounting.

The first $H=\Theta(\log N/\log\log N)$ primes have product $\Pi:=\prod_hq_h>N+1$, while the largest period is only $\Theta(\log N)$.
Each head therefore separates $O(\log N)$ phases; the gap analysis shows that these distinctions survive at $O(\log\log N)$-bit precision.
A position generator can maintain the phase index for period $q_h$ with a cyclic counter $c_{h,t+1}=(c_{h,t}+1)\bmod q_h$.
Each counter uses $O(\log\log N)$ bits and selects rounded sine/cosine coordinates; attention then forms the anchor-relative fingerprint.
Separating phase generation from activation arithmetic also occurs in practical RoPE implementations: Hugging Face Llama computes phases and trigonometric values in float32 before casting them to the activation dtype.\footnote{\href{https://github.com/huggingface/transformers/blob/v4.44.2/src/transformers/models/llama/modeling_llama.py\#L202-L218}{Transformers v4.44.2, \texttt{LlamaRotaryEmbedding.forward}.}}
Appendix~\ref{app:vo_implementation} accounts for the phase tables and counter state.

%% ==================================================================
% The difficulty is not the fixed-context next-token map itself.
% Appendix~\ref{app:ac0} shows that, at every fixed context length, autoregressive copying is computed by a $\mathrm{DLOGTIME}$-uniform $\mathrm{AC}^0$ delimiter-indexed multiplexer.
% Such a circuit can compile the length-specific table $c\mapsto n-c+2$ into its wiring.
% The horizon-uniform transformer question is whether one parameter setting for a design horizon~$N$ represents the needed source-position information for all $L\le N$ when content aliases many positions.

% !TEX root = ../main.tex
%% ===========================================================================
%% SECTION 4: EXPERIMENTS
%% ===========================================================================
\section{Experiments}
\label{sec:experiments}

We first illustrate the repeated-content failures observed in frontier LLMs.
Our synthetic experiments then examine three questions about learning to copy: how standard positional baselines respond to repetition under a shared training distribution, whether a shallow model can learn to exploit VO-rotation, and where extrapolation occurs and how it breaks down beyond the product of the supplied prime periods.\footnote{Experimental code and recorded LLM responses are available at \url{https://github.com/bobeatslollipop/copying-under-repetition}.}

\subsection{Copying Failures in Frontier LLMs}
\label{subsec:llm_failure_examples}

Figure~\ref{fig:llm_copying} shows that frontier LLMs which copy random strings accurately fail on repeated strings.
This contrast provides empirical evidence that repetition-sensitive copying extends beyond the ECB setting.
Table~\ref{tab:failure_modes} gives three recorded Qwen3.5 outputs for all-identical sources of length $300$: an overrun, a truncation, and an empty output.
The prompt asks for the string enclosed between \texttt{\$} and \texttt{.}, with \texttt{:} as the copy cue.
Appendix~\ref{app:llm_experiment} gives the model variants, prompting, and scoring details.

\begin{table}[!ht]
    \centering
    \small
    \caption{Recorded Qwen3.5 copying failures on all-identical sources of length $300$. String excerpts reproduce the recorded text; ellipses mark omitted characters.}
    \label{tab:failure_modes}
    \begin{tabular}{@{}p{0.18\textwidth}p{0.32\textwidth}p{0.42\textwidth}@{}}
        \toprule
        Failure type & Target string (excerpt) & Model reply (excerpt) \\
        \midrule
        Extended\newline repetition &
        \texttt{\seqsplit{bbbbbbbbbbbbbbbbbbbbbbbbbbbbbbbbbbbbbbbbbbbbbbbb}}\(\ldots\)\newline
        (Length: $300$) &
        \texttt{\seqsplit{bbbbbbbbbbbbbbbbbbbbbbbbbbbbbbbbbbbbbbbbbbbbbbbbbbbbbbbbbbbbbbbb}}\(\ldots\)\newline
        (Length: \textbf{79,984}) \\
        \midrule
        Premature\newline truncation &
        \texttt{\seqsplit{iiiiiiiiiiiiiiiiiiiiiiiiiiiiiiiiiiiiiiiiiiiiiiii}}\(\ldots\)\newline
        (Length: $300$) &
        \texttt{\seqsplit{iiiiiiiiiiiiiiiiiiiiiiiiiiiiiiiiiiiiiiiiiiiiiiiiiiiiiiiiiiiiiiii}}\(\ldots\)\newline
        (Length: \textbf{224}) \\
        \midrule
        Empty output &
        \texttt{\seqsplit{zzzzzzzzzzzzzzzzzzzzzzzzzzzzzzzzzzzzzzzzzzzzzzzz}}\(\ldots\)\newline
        (Length: $300$) &
        \textit{Empty response}\newline
        (Length: $0$) \\
        \bottomrule
    \end{tabular}
\end{table}

\subsection{Synthetic Setup}
\label{subsec:exp_design}

Each example uses the copying task of \Cref{sec:setup} over the $26$ lowercase English letters; we write $n$ for the source length, denoted $L$ there.
The loss is applied only to the generated answer, including the end marker.
Our metric is string-level exact-copy accuracy: every next-token argmax in the answer region, including the end marker, must match.
% The theory separates
% \(\langle\EoI\rangle\) and \(\langle\EoS\rangle\), while the implementation uses
% a single period token; this only changes the final fixed output transduction.
During training, the source length is drawn as
\(n\sim\mathrm{Unif}\{1,\ldots,2000\}\), and each string is drawn from a
three-component mixture: an i.i.d. string with probability $0.1$, an
all-identical string with probability $0.1$, and, with probability $0.8$, a
string of same-character blocks, whose
repetition level is randomized by drawing \(u\sim\mathrm{Unif}[0,300]\) and
setting \(p_{\rm switch}=1/(u+1)\).
We train each model with seeds $42$, $43$, and $44$, using a shared configuration within each model family. Architectures and training budgets differ across model families.
\Cref{app:synthetic_experiment_details} records the configurations, block sampler, literal delimiter encoding, and evaluation details.

Evaluation uses i.i.d. uniform strings ($p_{\rm switch}=1$), repeated strings ($p_{\rm switch}\approx0.01$), and all-identical strings ($p_{\rm switch}\to0$).
% The three evaluation distributions are
% held-out diagnostics for different amounts of content ambiguity, not three
% separately tuned training distributions.

\subsection{Standard Positional Baselines}
\label{subsec:standard_positional_experiments}

We first compare NoPE, RoPE, ALiBi, Hard-ALiBi, and
Sinusoidal~\citep{Vaswani+2017} using the same $6$-layer, width-$512$, $8$-head \texttt{bf16}
backbone.  We compute the ALiBi bias directly as $-m_h(i-j)$, not by the rounded recurrence in Corollary~\ref{cor:main}.
Figure~\ref{fig:synthetic_standard_pe}
shows how these baselines respond to the three repetition conditions.
The contrast is sharpest near the training limit: every standard positional baseline has mean accuracy below $0.26$ on the all-identical diagnostic, while the two-layer VO model reaches $1.000$ in all three seeds (\Cref{tab:synthetic_accuracy_1993}).
Cross-seed variation is substantial for some baselines: RoPE's all-identical accuracy is $0.257\pm0.367$ (mean $\pm$ sample standard deviation).
% The narrower point is that
% mechanisms that affect attention selection do not by themselves write a
% reusable selected-offset signal into the residual stream.

\begin{figure}[t]
    \centering
    \includegraphics[width=0.94\textwidth]{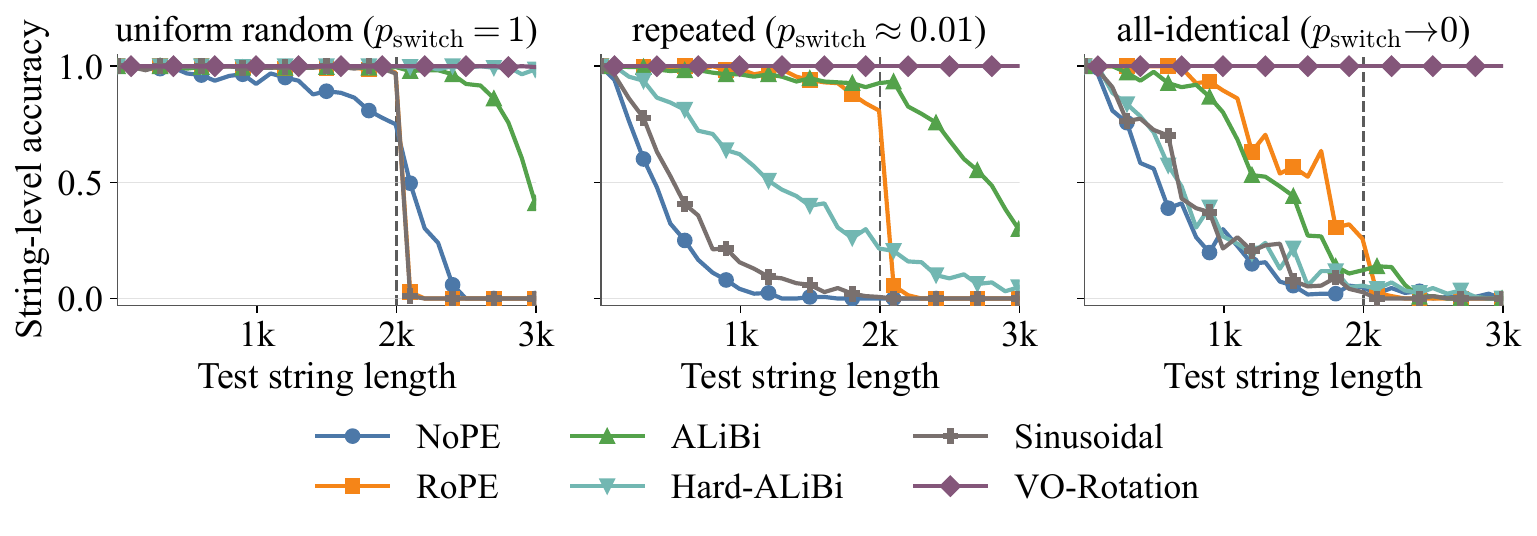}
    \caption{Exact-copy accuracy by source length and distribution, averaged over the three training seeds for every model. Standard-deviation bands are omitted for readability; Table~\ref{tab:synthetic_accuracy_1993} reports cross-seed variation near the training limit. The dashed black line marks the maximum training source length, $n=2000$; lengths to its right test extrapolation.}
    \label{fig:synthetic_standard_pe}
\end{figure}

\subsection{VO-Rotation}
\label{subsec:vo_experiments}

We next test whether training can exploit the value-side positional encoding.
The VO-Rotation models use a $2$-layer, width-$256$, $4$-head backbone with no score-side positional term.
Following the construction's layout, the first layer applies value/output rotation with one prime period per head ($5,7,11,13$), and the second layer uses ordinary NoPE attention.
The joint residue vector repeats every $5005$ offsets.
We simulate FP8 by rounding linear-layer inputs and weights; other operations use native PyTorch mixed precision (Appendix~\ref{app:synthetic_experiment_details}).

\begin{table}[t]
    \centering
    \caption{String-level accuracy near the $n=2000$ training limit: mean $\pm1$ sample standard deviation across the three training seeds. Baselines are evaluated at $n=1993$ and VO at $n=1992$.}
    \label{tab:synthetic_accuracy_1993}
    \small
    \setlength{\tabcolsep}{3pt}
    \begin{tabular}{p{0.24\linewidth}ccccc}
        \toprule
        Model & Layers & Width & \(p_{\rm switch}=1\) & \(p_{\rm switch}=0.01\) & \(p_{\rm switch}\to0\) \\
        \midrule
        NoPE & 6 & 512 & $0.750\pm0.244$ & $0.000\pm0.000$ & $0.049\pm0.084$ \\
        RoPE & 6 & 512 & $0.972\pm0.030$ & $0.809\pm0.071$ & $0.257\pm0.367$ \\
        ALiBi & 6 & 512 & $0.997\pm0.006$ & $0.927\pm0.038$ & $0.122\pm0.080$ \\
        Hard-ALiBi & 6 & 512 & $1.000\pm0.000$ & $0.215\pm0.129$ & $0.056\pm0.006$ \\
        Sinusoidal & 6 & 512 & $0.969\pm0.021$ & $0.007\pm0.006$ & $0.028\pm0.039$ \\
        \midrule
        VO-Rotation & 2 & 256 & $1.000\pm0.000$ & $1.000\pm0.000$ & $1.000\pm0.000$ \\
        \bottomrule
    \end{tabular}
\end{table}

The models learn their weights while using the periodic value-side encoding of Theorem~\ref{thm:construction}.
These results show that training finds successful copying solutions in this shallow architecture.
The training loss shown in \Cref{fig:synthetic_training_trajectory} stays on a long plateau before dropping sharply.

\begin{figure}[t]
    \centering
    \includegraphics[width=0.76\textwidth]{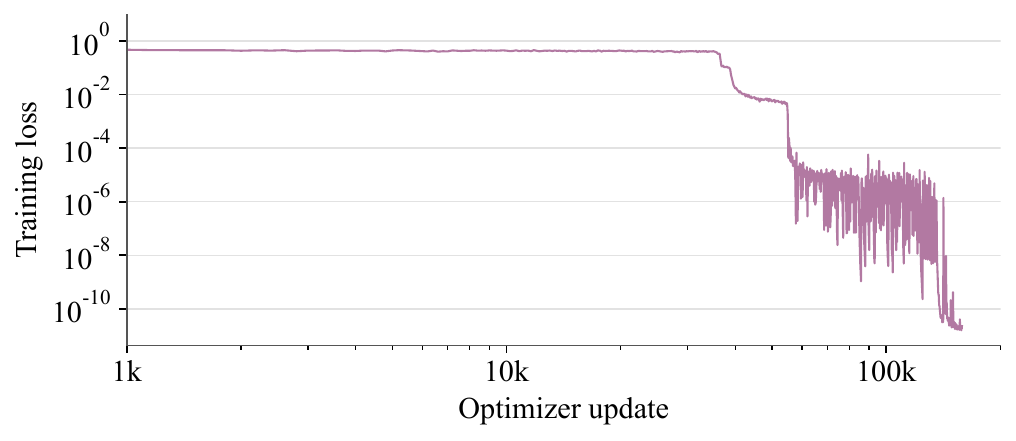}
    \caption{Training loss of VO-Rotation with seed $42$, with both axes logarithmic. The loss stays on a plateau for
    roughly the first $40$k optimizer updates and then drops sharply.}
    \label{fig:synthetic_training_trajectory}
\end{figure}

\begin{samepage}
\paragraph{Generalization and the period product.}
Every seed achieves perfect accuracy at all evaluated lengths through $2000$ in all three conditions (\Cref{fig:synthetic_vo_generalization_10000}).
Over extrapolation lengths $2008$--$5000$, mean accuracy across lengths and seeds is $0.980$ for uniform-random, $0.996$ for moderately repeated, and $0.998$ for all-identical strings.
\end{samepage}

\begin{samepage}
Uniform-random and moderately repeated accuracy drops sharply just beyond the period product $5\cdot7\cdot11\cdot13=5005$ in every seed.
At $5008$, the first evaluated length past the period product, mean accuracy falls to $0.0035$ (uniform random) and $0.2500$ (moderately repeated).
Over the $625$ lengths $5008$--$10{,}000$, their means are below $0.00001$ and $0.001$, respectively.
Offsets separated by $5005$ share residues, so residue-only retrieval cannot distinguish them.
We conclude that the models have likely learned to utilize the periodic value-side positions for copying.

\end{samepage}

\begin{samepage}
All-identical behavior varies substantially (\Cref{fig:synthetic_vo_per_seed} in Appendix~\ref{subsec:vo_generalization_10000}): seeds $42$, $43$, and $44$ score $0.7292$, $0.9063$, and $0.0313$ at $5008$, with respective post-product means $0.4196$, $0.1242$, and $0.0162$.
All-identical strings can still be copied after residue collisions: confusing two letter positions preserves the copied character, whereas confusing a letter with the end delimiter can cause a termination error.

\end{samepage}

\begin{figure}[t]
    \centering
    \includegraphics[width=0.94\textwidth]{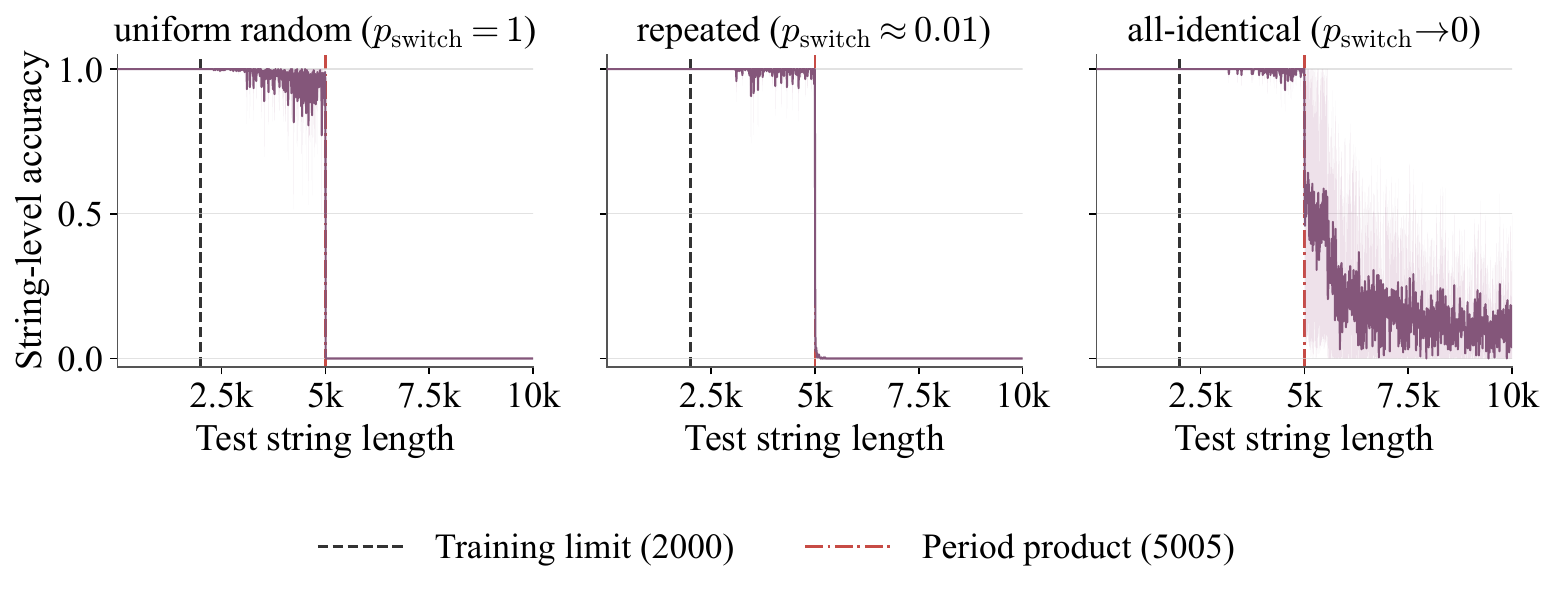}
    \caption{VO-Rotation length generalization on the complete evaluation grid. The curve shows the mean across three seeds, and the band spans $\pm1$ sample standard deviation clipped to $[0,1]$. Dashed black: maximum training source length $2000$; dash-dotted red: period product $5005$. See Appendix~\ref{subsec:vo_generalization_10000} for individual curves.}
    \label{fig:synthetic_vo_generalization_10000}
\end{figure}

% The theorem uses an
% idealized residue mechanism with exact period-indexed powers, chosen logit
% gaps, and deterministic finite-precision routing.  The \texttt{fp8} run is a
% finite-\(N\) trainability diagnostic for a differentiable value-side offset
% channel.  Because \texttt{fp8} is fixed-width low precision, it should
% not be interpreted as an asymptotic precision claim.
\newpage

% !TEX root = ../main.tex
%% ===========================================================================
%% SECTION 5: CONCLUSION
%% ===========================================================================
\section{Conclusion}
\label{sec:conclusion}

Repetition makes content-based occurrence matching ambiguous and motivates
representing positions relative to delimiters.
VO-rotation writes anchor-relative offsets into the value stream. With
prime-period phases and CRT, a two-layer model copies arbitrary
strings throughout its design horizon, using the same retrieval rule for
copying and termination at $O(\log\log N)$-bit precision.
The $\mathcal T_{\mathrm{ECB}}$ lower bound identifies a width-independent
finite-precision obstruction: under the stated saturation condition, a model
cannot both finish a sufficiently long identical copy and terminate correctly.
In the synthetic experiments, two-layer VO models learn to copy with near-perfect
accuracy up to length $5000$, $2.5$ times the maximum training length.
On random and moderately repeated strings, their accuracy collapses just beyond
the period product $5005$, as expected when offset residues collide.

The broader design question is how positional information should reach later
layers under repetition. Hybrid VO/RoPE training and the effect of
value-side rotations on tasks that rely on value semantics remain to be tested.
On the theory side, it remains open whether the saturation argument extends to
other accumulation rules, and whether a score-side mechanism given the same
residues can copy with comparable precision and depth.

\section*{AI Disclosure}

We used GPT and Claude to assist with proof development, implementation, experimental analysis, figure preparation, and writing.
The authors checked the proofs and references, ran the code, reviewed the results and final text, and take responsibility for the final manuscript.

% !TEX root = ../main.tex
\section*{Acknowledgments}
Anxin Guo is supported by NSF grants ECCS-2216970 and CCF2154100 through Aravindan Vijayaraghavan.
Zhiyuan Li is supported by DARPA AIQ Program and OpenAI superalignment award. 

\bibliography{ref}
\bibliographystyle{iclr2027_conference}

\appendix
% !TEX root = ../main.tex
\section{Details of the LLM Copying Experiment}
\label{app:llm_experiment}

\subsection{Prompt and Experimental Setup}
Each model is instructed via the following system prompt:

\begin{quote}
    \textit{``You are a precise string-copying assistant. You will see a string enclosed between '\texttt{\$}' and '\texttt{.}', followed by '\texttt{:}'. Output ONLY the enclosed string, character by character, with absolutely nothing else -- no spaces before or after, no explanation, no newline.''}
\end{quote}

The user prompt has the format
\[
    \texttt{\$} \quad \langle \text{string} \rangle \quad \texttt{.}
    \quad \texttt{:}
\]
where \texttt{.} is the prompt end delimiter and \texttt{:} is the copy cue.
The model is tasked to output exactly the enclosed string, not the delimiters.
\paragraph{Output extraction and scoring.}
Accuracy is the fraction of all $50$ trials per length and condition whose extracted prediction exactly matches the target.
The extraction routine removes \texttt{<think>} blocks, splits the remaining response into nonempty lines, and removes answer prefixes, delimiter symbols, quotes, backticks, and spaces.
Among cleaned lines containing only lowercase letters, it selects the one whose length is closest to the target length; if none exists, it uses the cleaned last line.

\paragraph{Copying failures.}
Empty outputs count as copying failures.
Qwen3.5 returns $112$ empty outputs: $43$ on geometric-block strings and $69$ on all-identical strings, with none on i.i.d. strings.

\paragraph{Model variants.}
Table~\ref{tab:llm_variant_provenance} identifies the model IDs and reasoning settings for each evaluated variant.

Qwen uses Ollama's chat API, GPT-5.5 uses the OpenAI Responses API, Gemini uses the Gemini API, and DeepSeek uses the DeepSeek API.
The Qwen, Gemini, and DeepSeek requests set temperature to $0$ and top-$p$ to $1$; Qwen additionally sets top-$k$ to $1$ and the repetition penalty to $1$.
OpenAI requests specify the reasoning effort and output budget but leave temperature and top-$p$ unset.

\begin{table}[!htbp]
    \centering
    \footnotesize
    \setlength{\tabcolsep}{3.4pt}
    \caption{Frontier-model variants plotted in \Cref{fig:llm_copying}.  Each
    row uses $50$ trials for each length and condition, a maximum generation
    budget of $20{,}000$ output tokens, and the extraction policy
    described above.}
    \label{tab:llm_variant_provenance}
    \begin{tabular}{p{0.25\linewidth}p{0.36\linewidth}p{0.30\linewidth}}
        \toprule
        Model & Model ID & Reasoning/thinking \\
        \midrule
        Qwen3.5 &
        \texttt{\seqsplit{qwen3.5:397b-cloud}} &
        thinking disabled \\
        DeepSeek V4 Pro &
        \texttt{\seqsplit{deepseek-v4-pro}} &
        thinking disabled \\
        Gemini 3.1 Pro &
        \texttt{gemini-3.1-pro-}\newline\texttt{preview} &
        low thinking \\
        Gemini 3.1 Pro &
        \texttt{gemini-3.1-pro-}\newline\texttt{preview} &
        medium thinking \\
        GPT-5.5 &
        \texttt{\seqsplit{gpt-5.5}} &
        none \\
        GPT-5.5 &
        \texttt{\seqsplit{gpt-5.5}} &
        medium reasoning \\
        \bottomrule
    \end{tabular}
\end{table}

\paragraph{Lengths and repetition conditions.}
We evaluate source strings of $30$, $100$, $300$, $1000$, and $3000$ characters under three conditions: i.i.d. uniform lowercase strings, geometric same-character blocks, and all-identical strings.
For each block, we independently sample a lowercase letter uniformly and a length from a geometric distribution on the positive integers with success probability $p_{\rm switch}=1/101\approx0.01$.
We truncate the final block at the requested source length.
Adjacent sampled blocks may share a character.
The Qwen3.5 examples in Table~\ref{tab:failure_modes} use the same thinking-disabled setting.

\subsection{Repetition as a Positional Stress Test}
Frontier LLMs copy random strings accurately but fail on repeated strings, where local content no longer distinguishes occurrences.
This contrast motivates horizon-uniform copying and makes repetition a positional stress test beyond the ECB class.

Subword tokenization connects these character-level prompts to repeated-token contexts.
Identical-character runs can yield repeated block tokens together with length-dependent boundary tokens.
For example, under \texttt{o200k\_base}, a run of $300$ \texttt{b}'s becomes $75$ copies of the four-character token \texttt{bbbb}.

% !TEX root = ../main.tex
\section{Additional Synthetic Experiment Details}
\label{app:synthetic_experiment_details}

\paragraph{Task encoding and string generation.}
The synthetic implementation uses the literal tokens \(\texttt{\$}\), \(\texttt{;}\), \(\texttt{:}\), and \(\texttt{.}\) for \(\langle\BoS\rangle\), \(\langle\EoI\rangle\), \(\langle\Copy\rangle\), and \(\langle\EoS\rangle\), respectively:
\[
    \texttt{\$}, x_1,\ldots,x_n,\texttt{;},\texttt{:}
    \;\mapsto\;
    x_1,\ldots,x_n,\texttt{.}.
\]
The loss and exact-copy metric cover the generated answer after \(\texttt{:}\), including the final period token.
Training draws $n\sim\mathrm{Unif}\{1,\ldots,2000\}$ and uses i.i.d., all-identical, and block-generated strings with probabilities $0.1$, $0.1$, and $0.8$, respectively.
For the block component, we draw $u\sim\mathrm{Unif}[0,300]$ and set $p_{\rm switch}=1/(u+1)$.
For each block, we independently draw a character uniformly from the alphabet and a length from $\mathrm{Geom}(p_{\rm switch})$ on the positive integers, truncating the last block to reach source length~$n$.
Adjacent blocks share the same character with probability $1/26$, so a maximal same-character run may span several sampled blocks.
For repeated-string evaluation, we set $u=100$, giving $p_{\rm switch}=1/101\approx0.01$.

\paragraph{Training configurations.}
Tables~\ref{tab:synthetic_run_details} and~\ref{tab:baseline_seed_runs} summarize the training configurations and optimizer updates used in \Cref{sec:experiments}.
Every model uses training seeds $42$, $43$, and $44$ with a shared configuration within its family.
Optimization uses AdamW with $(\beta_1,\beta_2)=(0.9,0.99)$, $\epsilon=10^{-8}$, learning rate $10^{-4}$, and no weight decay.

All models use effective batch size~$32$ (microbatch~$4$
with gradient accumulation~$8$ for the \texttt{bf16} baselines, microbatch~$8$
with gradient accumulation~$4$ for VO-Rotation),
GELU MLPs, layer normalization, and warmup-stable-decay (WSD) learning-rate scheduling~\citep{hu2024minicpm} with $2000$ warmup
optimizer updates, followed by a constant-learning-rate phase and a final linear decay to zero (cooldown).
The standard positional baselines use $8$ attention heads
of dimension $64$.  The ALiBi baseline computes the distance bias directly, with
slopes from the \href{https://huggingface.co/mosaicml/mpt-7b/blob/main/attention.py}{MPT-7B implementation}: for $8$ heads, the slopes are
\(m_h=2^{-h}\), \(h=1,\ldots,8\), so up to row-wise softmax constants the bias
for query position \(i\) and key position \(j\le i\) is \(-m_h(i-j)\).
Hard-ALiBi masks four heads: for head
\(h=1,\ldots,4\), only keys with backward offset \(i-j\in\{0,\ldots,h-1\}\)
remain visible, while heads $5$--$8$ keep the ordinary causal mask.  The
\texttt{fp8} models use a simulated FP8 forward pass
implemented by rounding linear-layer inputs and weights.  Each VO model uses one
modified VO layer with four prime-period heads, one head per prime; the
periods are the first four primes starting at $5$, namely $5,7,11,13$.
This follows the one-period-per-head structure of \Cref{thm:construction}, but the rotary value subspace is wider than the $m_h=2$ the theorem uses
($32$ of the $64$ coordinates per head, forming $16$ two-dimensional blocks), and the vector written into it is learned
rather than fixed to $\mathbf e_1$.  All rotary blocks within a head share
the single frequency $2\pi/q_h$, so the extra width provides learned value components at that frequency rather than additional periods.
All training budgets and completed step counts below are in optimizer updates, as in the optimization plots.
We monitor training loss every $200$ updates using an exponential moving average with decay $0.98$.
For VO, the average is applied to the loss, and cooldown starts when it falls below $10^{-4}$.
For the standard baselines, it is applied to the base-$10$ log loss; from update $50{,}000$ onward, cooldown starts if this average has not improved by at least $0.05$ over $12{,}500$ updates.
If cooldown starts at update $t$, the learning rate decays linearly to zero over $\max\{10{,}000,\lfloor0.3t\rfloor\}$ updates, after which training stops.
Cooldown also starts when $t+\lfloor0.3t\rfloor$ reaches the configured update budget.

\paragraph{Precision simulation.}
We simulate FP8 forward passes by quantizing each linear layer's input and weight, then returning both to their incoming storage dtypes before the linear operation.
Quantization uses PyTorch's \texttt{float8\_e4m3fn} cast without tensor-wise or channel-wise scaling or extra clipping, with fraction-bit truncation when the cast is unavailable.\footnote{The fallback retains the incoming exponent range and keeps three fraction bits for \texttt{float32}/\texttt{bfloat16} operands and one for \texttt{float16} operands.}
We use straight-through gradients and retain master parameters, linear biases, and optimizer states at their original precision.
Accelerate selects CUDA autocast in \texttt{bf16} when supported and \texttt{fp16} otherwise.
Attention matrix products, softmax, residual updates, and layer normalization use native PyTorch mixed-precision arithmetic.
Position indices are integers, while phase multiplication and sine/cosine evaluation use \texttt{float32}.

\paragraph{Evaluation metrics.}
Section~\ref{sec:experiments} reports string-level exact-copy accuracy using the final checkpoint, averaged over $96$ examples per length and training seed.
The baseline evaluations use source lengths
\(1,9,17,\ldots,2993\); the VO length-generalization evaluation uses
\(8,16,\ldots,10{,}000\). Figure~\ref{fig:synthetic_standard_pe} uses a sparse subset of baseline lengths.
At each displayed length, it plots the accuracy at the nearest evaluated VO length, at most four tokens away, using the baseline length as the horizontal coordinate.
The plotted accuracies are three-seed means for every model; overlapping standard-deviation bands are omitted for readability. Table~\ref{tab:synthetic_accuracy_1993} reports the means and sample standard deviations near the training limit.
The VO length-generalization curves and shaded bands show the mean and pointwise sample standard deviation across the three training seeds, with bands clipped to $[0,1]$.

\paragraph{Compute.}
Each model was trained on one GPU.
The standard positional-encoding baselines used NVIDIA RTX A6000 GPUs; all three VO-Rotation training runs used NVIDIA RTX A4000 GPUs.

\begin{table}[t]
    \centering
    \small
    \setlength{\tabcolsep}{3.2pt}
    \caption{Training configurations shared by seeds $42$, $43$, and $44$. Budgets are maximum optimizer updates. FP8 sim. denotes the linear-operand simulation described above. All models use learning rate $10^{-4}$ and no weight decay.}
    \label{tab:synthetic_run_details}
    \begin{tabular}{lccccc}
        \toprule
        Model & Layers & Width & Heads & Precision & Update budget \\
        \midrule
        NoPE & 6 & 512 & 8 & \texttt{bf16} & $320{,}000$ \\
        RoPE & 6 & 512 & 8 & \texttt{bf16} & $320{,}000$ \\
        ALiBi & 6 & 512 & 8 & \texttt{bf16} & $320{,}000$ \\
        Hard-ALiBi & 6 & 512 & 8 & \texttt{bf16} & $320{,}000$ \\
        Sinusoidal & 6 & 512 & 8 & \texttt{bf16} & $320{,}000$ \\
        VO-Rotation & 2 & 256 & 4 & FP8 sim. & $400{,}000$ \\
        \bottomrule
    \end{tabular}
\end{table}

\begin{table}[t]
    \centering
    \small
    \caption{Completed optimizer updates for each model and training seed, using the shared configurations in Table~\ref{tab:synthetic_run_details}.}
    \label{tab:baseline_seed_runs}
    \begin{tabular}{lrrr}
        \toprule
        Model & Seed $42$ & Seed $43$ & Seed $44$ \\
        \midrule
        NoPE & $102{,}800$ & $115{,}440$ & $110{,}760$ \\
        RoPE & $144{,}800$ & $112{,}580$ & $119{,}080$ \\
        ALiBi & $81{,}200$ & $81{,}120$ & $118{,}560$ \\
        Hard-ALiBi & $110{,}200$ & $96{,}980$ & $114{,}140$ \\
        Sinusoidal & $113{,}800$ & $119{,}080$ & $104{,}520$ \\
        VO-Rotation & $158{,}200$ & $150{,}020$ & $148{,}200$ \\
        \bottomrule
    \end{tabular}
\end{table}

\subsection{VO-Rotation Length Generalization up to 10{,}000 and the Period Product}
\label{subsec:vo_generalization_10000}

Each final VO checkpoint is evaluated at $n=8,16,\ldots,10{,}000$, with $96$ examples per length and condition.
Figure~\ref{fig:synthetic_vo_per_seed} shows the individual seed curves.
All three seeds score $1.000$ at every evaluated $n\le2000$ in every condition.
Over the $375$ extrapolation lengths $2008$--$5000$, per-seed length averages have across-seed mean and sample standard deviation $0.9796\pm0.0189$ (uniform random), $0.9963\pm0.0058$ ($p_{\rm switch}=0.01$), and $0.9978\pm0.0020$ (all-identical).
Table~\ref{tab:vo_seed_generalization} gives individual means and minima.

\begin{table}[!htbp]
    \centering
    \small
    \caption{VO exact-copy accuracy by training seed and distribution. Mean and minimum are over the $375$ evaluated lengths $2008$--$5000$; the final column gives accuracy at $5000$.}
    \label{tab:vo_seed_generalization}
    \begin{tabular}{clccc}
        \toprule
        Seed & Distribution & Mean & Minimum & $n=5000$ \\
        \midrule
        42 & Uniform random & 0.9834 & 0.4375 & 0.9688 \\
        43 & Uniform random & 0.9591 & 0.4688 & 0.8854 \\
        44 & Uniform random & 0.9964 & 0.9583 & 1.0000 \\
        \midrule
        42 & $p_{\rm switch}=0.01$ & 0.9993 & 0.9167 & 1.0000 \\
        43 & $p_{\rm switch}=0.01$ & 0.9896 & 0.7188 & 0.9896 \\
        44 & $p_{\rm switch}=0.01$ & 0.99997 & 0.9896 & 1.0000 \\
        \midrule
        42 & All-identical & 0.9962 & 0.9063 & 1.0000 \\
        43 & All-identical & 0.9973 & 0.8542 & 1.0000 \\
        44 & All-identical & 1.0000 & 1.0000 & 1.0000 \\
        \bottomrule
    \end{tabular}
\end{table}

\begin{figure}[!htbp]
    \centering
    \includegraphics[width=0.94\textwidth]{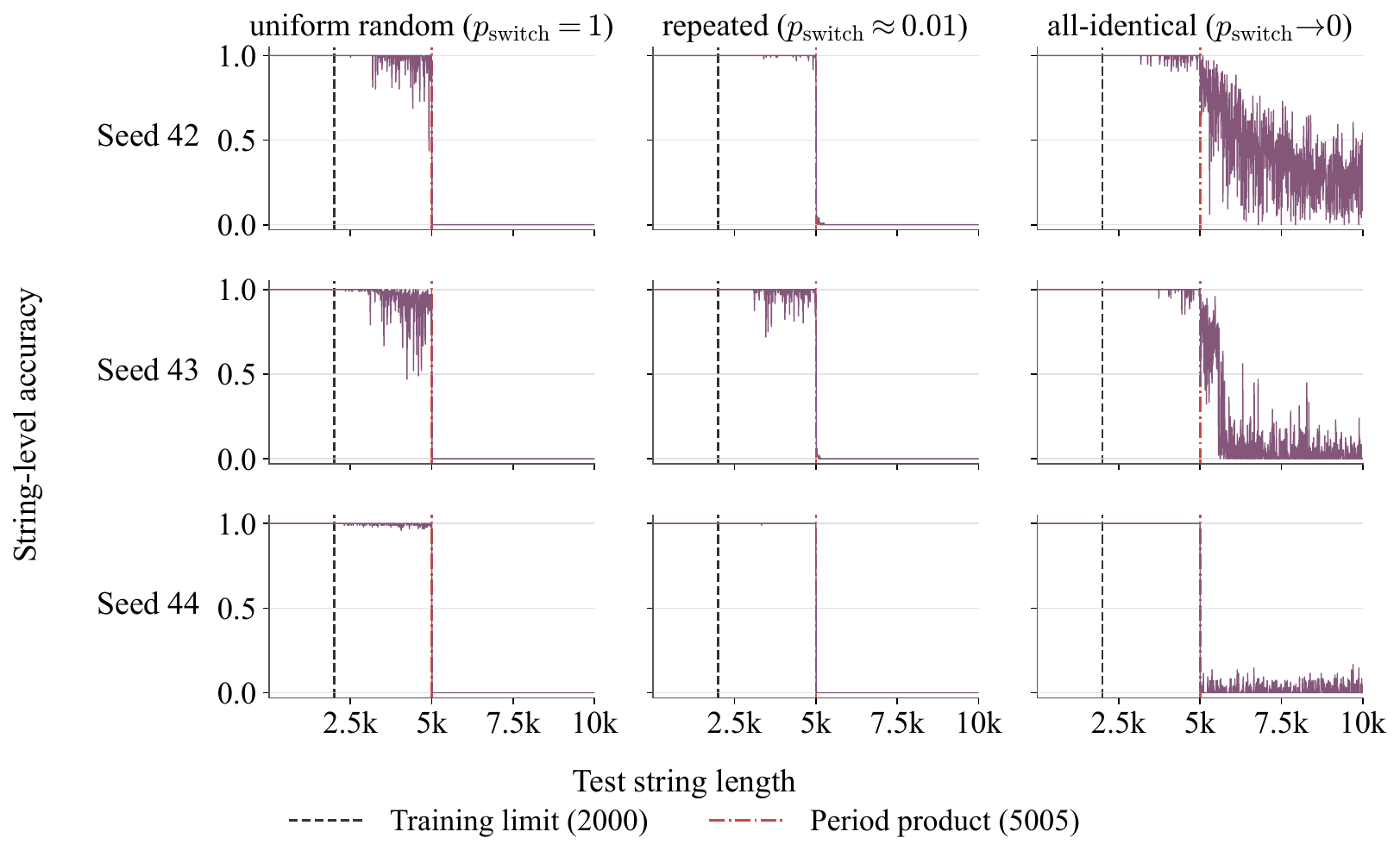}
    \caption{VO-Rotation accuracy by seed, with seeds $42$, $43$, and $44$ in rows and distributions in columns. Dashed black and dash-dotted red lines mark the maximum training source length $2000$ and the period product $5005$.}
    \label{fig:synthetic_vo_per_seed}
\end{figure}
\clearpage

\section[Next-Token Copying at Fixed Context Length is in DLOGTIME-uniform AC0]{Next-Token Copying at Fixed Context Length is in DLOGTIME-uniform $\mathrm{AC}^0$}
\label{app:ac0}

At fixed context length~$n$, the next-token copy map is a delimiter-indexed multiplexer.
This is the circuit fact used in Section~\ref{sec:related}; the input size fixes the current context length, not the source length.

Fix a finite source vocabulary~$\mathcal V$ and let $\Gamma=\mathcal V\cup\{\langle\BoS\rangle,\langle\EoI\rangle,\langle\Copy\rangle,\langle\EoS\rangle\}$.
Encode tokens by one-hot indicators $X_{i,\sigma}$, where $X_{i,\sigma}=1$ means that the token at position~$i$ is~$\sigma$.
For a fixed context length~$n$, consider reachable copy prefixes, i.e., contexts that arise during correct autoregressive copying (where every token generated after $\langle\Copy\rangle$ matches the corresponding source token):
\[
  \langle\BoS\rangle,\;s_1,\ldots,s_L,\;\langle\EoI\rangle,\;\langle\Copy\rangle,\;s_1,\ldots,s_k, \qquad 0\le k\le L,
\]
with total length $n=L+3+k$.
If the \(\langle\Copy\rangle\) token is at position $c=L+3$, then $k=n-c$ and the correct next token is obtained from the source-side position $n-c+2$.
This position is $s_{k+1}$ when $k<L$ and is \(\langle\EoI\rangle\) when $k=L$.
The output map $\tau$ sends vocabulary tokens to themselves and sends \(\langle\EoI\rangle\) to \(\langle\EoS\rangle\).

For each candidate delimiter position~$c$, define $D_c=X_{c-1,\langle\EoI\rangle}\wedge X_{c,\langle\Copy\rangle}$.
The term gate $T_{c,\rho}$ tests both this delimiter boundary and the selected source token:
\[
  T_{c,\rho}:=D_c\wedge X_{n-c+2,\rho},
  \qquad \rho\in\mathcal V\cup\{\langle\EoI\rangle\}.
\]
For every output token $\sigma\in\Gamma$, define
\[
  Y_\sigma
  =
  \bigvee_{\substack{\lceil(n+3)/2\rceil\le c\le n,\\ \rho\in\mathcal V\cup\{\langle\EoI\rangle\}:\;\tau(\rho)=\sigma}} T_{c,\rho}.
\]
The delimiter range is equivalently $2\le n-c+2\le c-1$, which keeps the selected cell on the source side.

\begin{proposition}[Autoregressive copying at fixed context length is in
DLOGTIME-uniform $\mathrm{AC}^0$]\label{prop:copy-ac0} The family of next-token maps on reachable copy prefixes, indexed by context length~$n$, is computable by a DLOGTIME-uniform $\mathrm{AC}^0$ circuit family of depth $O(1)$ and size $O(n|\Gamma|)$.
\end{proposition}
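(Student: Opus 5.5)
The plan is to verify, in order, that the displayed circuit is correct on reachable prefixes, that it has the claimed size and constant depth, and that it is DLOGTIME-uniform.

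\emph{Correctness.} On a reachable prefix of length $n$, exactly one $\langle\Copy\rangle$ occurs, at position $c^\ast=L+3$, and it is preceded by $\langle\EoI\rangle$. Hence $D_c=1$ if and only if $c=c^\ast$. The constraints $k\ge 0$ and $k\le L$ give $n\ge c^\ast$ and $n-c^\ast+2\le c^\ast-1$. The second is equivalent to $c^\ast\ge (n+3)/2$, so $c^\ast$ lies in the OR range. The source cell at position $n-c^\ast+2=k+2$ holds $s_{k+1}$ when $k<L$ and $\langle\EoI\rangle$ when $k=L$. Exactly one $\rho$ has $X_{k+2,\rho}=1$, so exactly one term $T_{c^\ast,\rho}$ fires. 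It therefore sets $Y_{\tau(\rho)}=1$ and every other $Y_\sigma=0$, which is the correct next-token one-hot output. Terms with $c\ne c^\ast$ vanish because $D_c=0$.

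\emph{Size and depth.} There are at most $n$ values of $c$ and $|\mathcal V|+1$ values of $\rho$. Each $T_{c,\rho}$ is one AND gate of fan-in $3$. Each $D_c$ is shared across all $\rho$, costing $O(n)$ gates in total. Each $Y_\sigma$ is a single unbounded fan-in OR over its terms. Because $\tau$ is a function, each $T_{c,\rho}$ feeds exactly one output, so the total wiring is $O(n|\Gamma|)$ and the depth is $3$, counting inputs, then ANDs, then ORs.

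\emph{Uniformity.} This is the main obstacle, mostly a matter of bookkeeping. Fix a gate-naming scheme in which a gate is named by a type tag together with $(c,\rho)$ or $\sigma$, written in $O(\log n)$ bits. The connection language asks whether gate $g$ is an input of gate $g'$ of a given type. To decide it, a DLOGTIME machine with random access to the index string must check three kinds of relation. First, it checks $\tau(\rho)=\sigma$, which is a constant-size table lookup. Second, it checks $\lceil (n+3)/2\rceil\le c\le n$. Third, it checks that the input wire $X_{i,\rho}$ with $i=n-c+2$, or with $i\in\{c-1,c\}$, is the claimed one.

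These checks need addition, subtraction, halving, and comparison of $O(\log n)$-bit numbers. Comparison and halving (a shift) are DLOGTIME. Addition and subtraction are not literally DLOGTIME, but they are in FO, which equals DLOGTIME-uniform $\mathrm{AC}^0$. The standard remedy is to avoid computing $i$ inside the connection language. Instead, name the gate so that it carries the triple $(c,i,\rho)$ with $i+c=n+2$, and add a constant-depth FO-definable verification of this arithmetic relation into the circuit itself. Equivalently, one can appeal to the Barrington--Immerman--Straubing characterization $\mathrm{FO}[+,\times]=$ DLOGTIME-uniform $\mathrm{AC}^0$. The circuit is then an FO formula: there exists $c$ in range with $D_c$, together with a selection at position $n-c+2$, and the binary predicate $+$ is available. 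Taking this route establishes uniformity directly.
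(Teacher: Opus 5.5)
Your correctness and size/depth analysis matches the paper's proof, and it is more careful than the paper about why the true delimiter $c^\ast=L+3$ falls in the OR range. The gap is in the uniformity paragraph. It rests on a false premise and then does not deliver the statement as claimed.

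Addition, subtraction and comparison of $O(\log n)$-bit numbers \emph{are} in DLOGTIME, for three reasons. The gate names in the connection language have $O(\log n)$ bits, so the machine can copy them to a work tape in $O(\log n)$ steps. It can compute $n=|y|$ in binary from the padding string in $O(\log n)$ steps by binary search. Ripple-carry arithmetic on $O(\log n)$-bit operands then also takes $O(\log n)$ steps. What lies outside DLOGTIME is addition of operands whose length is comparable to the whole input, and that never arises here. So the checks $i=n-c+2$, $\lceil (n+3)/2\rceil\le c\le n$, $i\in\{c-1,c\}$ and $\tau(\rho)=\sigma$ are all decidable directly in $O(\log n)$ time. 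That is exactly the paper's one-line argument.

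Neither of your workarounds repairs the premise.
\begin{itemize}
\item Putting a verification of $i+c=n+2$ ``into the circuit itself'' is not meaningful. This is a relation among gate indices, not among input bits. If you name gates by all triples $(c,i,\rho)$, the connection language must still decide which triples are wired into $Y_\sigma$ (or made constant), which is the same addition test. Otherwise you are left with $\Theta(n^2|\Gamma|)$ gates.
\item Appealing to $\mathrm{FO}[+,\times]=$ DLOGTIME-uniform $\mathrm{AC}^0$ does give a uniform constant-depth family. However, that family is the standard formula-to-circuit translation. Its subformula in which both $c$ and $i$ are free yields $\Theta(n^2)$ gates, not the displayed $O(n|\Gamma|)$ circuit. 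Pruning the dead gates brings back the same addition check.
\end{itemize}

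As written, you establish ``size $O(n|\Gamma|)$'' and ``DLOGTIME-uniform'' for two different families rather than for one. Drop the detour and use the observation above.
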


\begin{proof}
On this reachable-prefix domain, exactly one delimiter gate $D_c$ is active, at the true \(\langle\EoI\rangle,\langle\Copy\rangle\) boundary.
Therefore, for each output token~$\sigma$, the OR defining $Y_\sigma$ reduces to the source-side indicator selected by $X_{n-c+2,\rho}$ and then passed through the fixed output map~$\tau$.
At the final copying step this selected position is the source-side \(\langle\EoI\rangle\), so termination is handled by the same multiplexer and the fixed map \(\tau(\langle\EoI\rangle)=\langle\EoS\rangle\).

Each $D_c$ is a constant fan-in AND, each term $D_c\wedge X_{n-c+2,\rho}$ is a constant fan-in AND, and the final OR has fan-in $O(n)$.
Thus the circuit has constant depth and size $O(n|\Gamma|)$.
If tokens are represented by any fixed binary encoding rather than one-hot indicators, the predicates $X_{i,\sigma}$ are constant-size equality tests because $|\Gamma|$ is fixed, which changes only constant factors and constant depth.
The direct-connection language is decidable in $O(\log n)$ time.
Gate indices specify one of four types: input gates $X_{i,\sigma}$, delimiter gates $D_c$, term gates $T_{c,\rho}$, and output gates $Y_\sigma$.
Given two gate indices, adjacency is determined by a constant number of comparisons and the affine relation $i=n-c+2$.
Since these operations on $O(\log n)$-bit indices are decidable in $O(\log n)$ time, the family is DLOGTIME-uniform $\mathrm{AC}^0$.
\end{proof}

\begin{remark}[Fixed-context circuits and horizon-uniform models]
If the source length and generation offset are fixed as well, the map degenerates to a single wire from the corresponding source coordinate.
The next-token map above instead selects among source positions using the delimiter-indexed table $c\mapsto n-c+2$ compiled into the wiring.
Theorem~\ref{thm:impossibility} concerns a different constraint: one transformer must copy throughout its design horizon under the specified ECB restrictions and finite-precision execution, with saturation threshold $o(N)$.
\end{remark}

\section{Proofs for Section~\ref{sec:theory}}
\label{app:theory_proofs}

We use the copying task of Section~\ref{sec:setup} and the anchor-relative offsets of Section~\ref{sec:construction}.

\paragraph{Proof dependencies.}

\begin{center}
\small
\begin{tabular}{@{}p{0.12\linewidth}p{0.80\linewidth}@{}}
\toprule
ECB & Repeated-addition eventual constancy $\to$ constant-block invariance $\to$ terminal-run saturation $\to$ Theorem~\ref{thm:impossibility}. \\
\midrule
VO with CRT periods & Underflow threshold $\to$ Layer-1 anchor $\to$ CRT fingerprint $\to$ Layer-2 margin/logit gap $\to$ Layer-2 one-hot $\to$
Theorem~\ref{thm:construction}. \\
\bottomrule
\end{tabular}
\end{center}

\paragraph{Notation.}
We retain the main text's notation for hidden states, anchors, offsets, and saturation thresholds.
The additional symbols below distinguish exact phases, rounded fingerprints, and score errors.
\begin{center}
\small
\begin{tabular}{@{}p{0.21\linewidth}p{0.48\linewidth}p{0.23\linewidth}@{}}
\toprule
Symbol & Meaning & Definition or convention \\
\midrule
\multicolumn{3}{@{}l}{\textit{Arithmetic and lower bound}} \\
$2^p$ & upper bound on the number of scalar $p$-bit encodings & Section~\ref{app:fp_primitives} \\
$U_p$ & logit magnitude at which $\exp(-U_p)$ rounds to zero & Definition~\ref{def:underflow} \\
$\bar n_\ell$ & largest eventual-constancy index in layer~$\ell$ & Definition~\ref{def:ecb_full} \\
$\kappa_\ell$ & terminal-run saturation threshold after layer~$\ell$ & Lemma~\ref{lem:terminal_run_saturation} \\
\midrule
\multicolumn{3}{@{}l}{\textit{VO construction}} \\
$a(t)$ & active anchor at position~$t$ & Section~\ref{sec:construction} \\
$\delta(t)$ & active-anchor offset $t-a(t)$ & Section~\ref{sec:construction} \\
$q_h$ & prime period of head~$h$ & Definition~\ref{def:qhead} \\
$\Pi$ & product of prime periods & Definition~\ref{def:vo_prime_full} \\
$q_{\max}$ & largest prime period & Definition~\ref{def:vo_prime_full} \\
$\mathbf u_q(r)$ & ideal unit vector for residue~$r$ modulo~$q$ & Section~\ref{app:vo_formal} \\
$\widehat R_q(r)$ & rounded period-$q$ rotation table & Section~\ref{app:vo_formal} \\
$\mathbf f(t)$ & computed CRT fingerprint at position~$t$ & Section~\ref{app:vo_formal} \\
$\varepsilon_{\mathrm{score}}$ & score-error bound in units of $A^2$ & Equation~\eqref{eq:eff_gap} \\
$\Delta$ & remaining mismatch penalty in units of $A^2$ & Equation~\eqref{eq:eff_gap} \\
$A$ & rotary-vector scale; logits are scaled by $A^2$ & Equation~\eqref{eq:eff_gap} \\
\bottomrule
\end{tabular}
\end{center}

\subsection{Finite-Precision Primitives}\label{app:fp_primitives}

Fix a $p$-bit floating-point format with $1$ sign bit, $p_e$ exponent bits, and $p_m$ mantissa bits ($1+p_e+p_m=p$).
Write $\mathsf{fl}(\cdot)$ for round-to-nearest-even.
Let $\calF_p$ be the finite set of representable floats, including signed zero.
There are at most $2^p$ possible $p$-bit encodings, and we write $\varepsilon_{\mathrm{mach}}:=2^{-p_m}$.
For lower-bound executions, we assume all arithmetic intermediates remain finite and representable.
In particular, the recursive ALiBi case uses a bias recurrence without overflow (Lemma~\ref{lem:alibi_ecb}).

\begin{definition}[Underflow threshold]\label{def:underflow}
When the set below is nonempty, define
\[
  U_p := \min\{m\in\calF_p:m>0\ \text{and}\ \mathsf{fl}(\exp(-m))=0\}.
\]
Thus $U_p$ is itself representable.
\end{definition}

\paragraph{Floating-point format for the VO construction.}
We use a binary format with gradual underflow and normal exponents from $2-2^{p_e-1}$ through $2^{p_e-1}-1$.
Its smallest positive subnormal is $2^{2-2^{p_e-1}-p_m}$.
Under round-to-nearest-even, $\exp(-m)$ rounds to zero exactly when $m\ge(2^{p_e-1}+p_m-1)\ln2$.
The threshold $U_p$ is the least representable number at or above this value.
For the bit allocations in Definition~\ref{def:vo_prime_full}\textup{(B4)} below, $p_m=O(2^{p_e})$, so this number exists for all sufficiently large~$N$ and $U_p=\Theta(2^{p_e})$.
Rounding the real threshold upward to a representable number changes it by at most a constant factor; the exponent range also contains $U_p$, $-U_p$, and $-2U_p$.

\begin{fact}[Repeated-addition eventual constancy]\label{lem:stag}
Let $u$ be a $p$-bit float, $S_0$ an arbitrary $p$-bit float, and $S_{n+1}=\mathsf{fl}(S_n+u)$.
If every $S_n$ remains finite, then $S_{n+1}=S_n$ for all $n\ge n^*$ for some $n^*\le 2^p$.
\end{fact}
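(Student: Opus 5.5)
The plan is a pigeonhole-plus-determinism argument on the orbit of the map $f(S)=\mathsf{fl}(S+u)$ on the finite set $\calF_p$. Since $u$ is fixed and rounding is deterministic, $S_{n+1}=f(S_n)$ for a single function $f:\calF_p\to\calF_p$. Finiteness of every $S_n$ keeps $f$ inside $\calF_p$, which has at most $2^p$ elements. Among $S_0,\dots,S_{2^p}$ some value must therefore repeat, and the orbit is eventually periodic with preperiod plus period at most $2^p$. The remaining work is to show that the period is $1$, i.e.\ that the orbit cannot cycle through two or more distinct values.

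For this I would use monotonicity of rounded addition. Round-to-nearest is monotone nondecreasing on reals, so if $u\ge 0$ then $S+u\ge S$ gives $\mathsf{fl}(S+u)\ge\mathsf{fl}(S)=S$ for representable $S$. Hence $S_{n+1}\ge S_n$, and symmetrically $S_{n+1}\le S_n$ when $u\le 0$. A monotone sequence in a finite totally ordered set cannot revisit a value after strictly moving away from it. So any repeat $S_i=S_j$ with $i<j$ forces $S_i=S_{i+1}=\cdots=S_j$, and in particular $S_{i+1}=S_i$. Once $S_{n^*+1}=S_{n^*}$ holds, determinism of $f$ gives $S_{n+1}=S_n$ for every $n\ge n^*$.

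To bound $n^*$: the sequence is weakly monotone and, before stabilizing, it is strictly monotone. It therefore takes pairwise distinct values $S_0,\dots,S_{n^*}$ drawn from at most $2^p$ floats, so $n^*+1\le 2^p$, which gives $n^*\le 2^p$ with room to spare.

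The main obstacle is the treatment of signed zero and of the order on $\calF_p$. The encodings $+0$ and $-0$ are distinct but compare equal, so monotonicity in value does not by itself rule out a $\pm0$ oscillation. I would handle this with the IEEE round-to-nearest-even sign rules. If $S=\pm0$ and $u=\pm0$, the sum is a fixed zero after one step: $(+0)+(-0)=+0$, and $(-0)+(-0)=-0$. If $u\neq0$, then $\mathsf{fl}(0+u)=u$ is nonzero, so no oscillation between zeros can occur. A nonzero sum that rounds to zero yields $+0$, and it cannot recur between equal-valued encodings. With these cases checked, I would run the argument on values and then absorb the zero case as at most one extra step. That extra step is still within the $2^p$ bound, because the two zeros are counted as separate encodings.
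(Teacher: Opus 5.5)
Your argument is correct and is essentially the paper's: monotonicity of round-to-nearest gives $\mathsf{fl}(S+u)\ge S$ for $u>0$ (the case $u<0$ follows by symmetry), and a monotone sequence in a set $\calF_p$ of at most $2^p$ encodings must stabilize within $2^p$ steps. Your pigeonhole framing and signed-zero bookkeeping add care the paper skips (it calls $u=0$ immediate). In that bookkeeping, the fact actually doing the work is $\mathsf{fl}(\pm0+u)=u\neq0$ whenever $u\neq0$, so no zero can sit on a value plateau; your claim about which sign a rounded-to-zero sum receives is not needed.
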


\begin{proof}
For $u=0$ the claim is immediate.
For $u>0$, round-to-nearest-even satisfies $\mathsf{fl}(x+u)\ge x$ for every finite representable~$x$, so $(S_n)$ is non-decreasing.
A non-decreasing sequence in the finite set~$\calF_p$ becomes constant after at most~$2^p$ distinct $p$-bit states.
The case $u<0$ follows by applying the positive-increment case to $(-S_n)$.
\end{proof}

\begin{corollary}[Componentwise eventual constancy]\label{cor:comp}
Let $\mathbf u\in\calF_p^m$ be fixed and $(\mathbf S_{n+1})_c=\mathsf{fl}((\mathbf S_n)_c+u_c)$.
Then $\mathbf S_n$ is eventually constant, with eventual-constancy index $n^*=\max_c n^*_c\le 2^p$.
\end{corollary}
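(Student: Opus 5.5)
We apply Fact~\ref{lem:stag} coordinatewise. The only thing to check is that the coordinates do not interact: in the recursion $(\mathbf S_{n+1})_c=\mathsf{fl}((\mathbf S_n)_c+u_c)$, coordinate $c$ depends only on its own previous value $(\mathbf S_n)_c$ and the fixed scalar $u_c$. So for each $c\in\{1,\ldots,m\}$ the scalar sequence $((\mathbf S_n)_c)_{n\ge0}$ is exactly the sequence treated by Fact~\ref{lem:stag}, with increment $u=u_c\in\calF_p$ and initial value $S_0=(\mathbf S_0)_c$. We carry over the standing assumption that every intermediate stays finite, which is the hypothesis that Fact~\ref{lem:stag} needs.

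Fact~\ref{lem:stag} then gives, for each $c$, an index $n^*_c\le 2^p$ such that $(\mathbf S_{n+1})_c=(\mathbf S_n)_c$ for all $n\ge n^*_c$. Set $n^*:=\max_c n^*_c$. The maximum is over the finitely many indices $c=1,\ldots,m$, so it exists and satisfies $n^*\le 2^p$. For every $n\ge n^*$ we have $n\ge n^*_c$ for every $c$, so all coordinates are fixed simultaneously and $\mathbf S_{n+1}=\mathbf S_n$. This proves eventual constancy with the claimed index.

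There is essentially no obstacle here. The one point to state explicitly is that the bound does \emph{not} grow with $m$: it stays $2^p$ and does not become $m2^p$. This holds because the per-coordinate constancy windows run in parallel over the same step counter $n$ rather than one after another. Each coordinate's stabilization time is bounded by $2^p$ on its own, and a maximum of such bounds is still $2^p$. This width-independence is what later makes the saturation threshold $\kappa_D$ in Lemma~\ref{lem:saturated_suffix_self_loop} independent of the hidden dimension. There, a softmax numerator vector and its denominator accumulate an identical addend at every step of the middle block, so they form exactly such a vector recursion.
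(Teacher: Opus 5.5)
Your proof is correct and matches the paper's argument. Both apply Fact~\ref{lem:stag} to each decoupled coordinate under the standing finite-intermediates assumption, then take the maximum of the coordinatewise indices, which remains at most $2^p$ regardless of $m$.
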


\begin{proof}
Apply Fact~\ref{lem:stag} to each coordinate and take the maximum of the coordinatewise eventual-constancy indices.
\end{proof}

\paragraph{Attention execution model.}
Each head at layer~$\ell$ computes, for query position~$i$, logits $e_{i,j}$ for $j\le i$, maximum logit $e_{\max}$, shifted weights $\hat w_j=\mathsf{fl}(\exp(e_{i,j}-e_{\max}))$, and a left-to-right accumulation of the softmax denominator~$Z$ and numerator~$\mathbf N$:
\[
  Z_j=\mathsf{fl}(Z_{j-1}+\hat w_j),\qquad (\mathbf N_j)_c
  =
  \mathsf{fl}\bigl((\mathbf N_{j-1})_c+ \mathsf{fl}(\hat w_j(v_j)_c)\bigr).
\]
The head-local pre-output is $\widetilde{\mathbf o}_i=\mathbf N_i/Z_i$.
Write $\mathbf o_i^{(\ell)}:=W_O^{(\ell)}\mathrm{Concat}(\widetilde{\mathbf o}_i^{(\ell,1)},\ldots,\widetilde{\mathbf o}_i^{(\ell,H)})$ for the projected attention output.
The layer update is
\begin{equation}\label{eq:layer}
  h^{(\ell)}_i := h^{(\ell-1)}_i +\mathbf o_i^{(\ell)}
  +\mathrm{MLP}^{(\ell)}\bigl(h^{(\ell-1)}_i,\mathbf o_i^{(\ell)}\bigr).
\end{equation}
Here $\mathrm{MLP}^{(\ell)}$ is a fixed deterministic position-wise feed-forward map of the incoming state and local attention output; this notation includes an MLP applied to their sum.
All operations are deterministic in $p$-bit arithmetic.

\begin{lemma}[Constant-block invariance]\label{lem:stag_prefix}
Consider a left-to-right softmax accumulation that processes a contiguous block of at least~$2^p$ positions, each contributing the same shifted weight~$w$ and weighted value~$w\cdot\mathbf v$.
After the first $2^p$ positions in this block, the accumulator state is fixed.
Inserting or removing additional copies of the same addend inside this constant repeated block does not change the state upon exit from the block.
\end{lemma}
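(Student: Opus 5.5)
The plan is to reduce the statement to Corollary~\ref{cor:comp}, applied to the joint accumulator state $(Z,\mathbf N)$. The argument is a single induction over positions inside the block.

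Write the accumulator just before the block as $(Z_{\mathrm{in}},\mathbf N_{\mathrm{in}})$. Inside the block, every step applies the same map
\[
  \Phi(Z,\mathbf N)=\bigl(\mathsf{fl}(Z+w),\,(\mathsf{fl}(N_c+\mathsf{fl}(w v_c)))_c\bigr).
\]
The summands $w$ and $\mathsf{fl}(w v_c)$ are fixed $p$-bit floats that do not depend on the position within the block. Each coordinate therefore evolves as a repeated addition $S_{n+1}=\mathsf{fl}(S_n+u_c)$ with a constant increment $u_c$. Corollary~\ref{cor:comp}, with the standing assumption that intermediates remain finite, gives an index $n^*\le 2^p$ such that $\Phi^{n}(Z_{\mathrm{in}},\mathbf N_{\mathrm{in}})=\Phi^{n^*}(Z_{\mathrm{in}},\mathbf N_{\mathrm{in}})$ for all $n\ge n^*$. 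This holds simultaneously across coordinates, since $n^*$ is the maximum of the coordinatewise indices. In particular, the state after the first $2^p$ block positions is a fixed point of $\Phi$, which is the first claim.

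For the insertion/removal claim, consider two blocks of lengths $m,m'\ge 2^p$ that start from the same entering state. Their exit states are $\Phi^{m}(\cdot)$ and $\Phi^{m'}(\cdot)$, and both equal $\Phi^{2^p}(\cdot)$ by the fixed-point property. So adding or deleting copies of the addend, as long as the block keeps at least $2^p$ entries, leaves the exit state unchanged. Because the accumulation is left to right and deterministic, everything after the block also agrees.

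One subtlety is that Fact~\ref{lem:stag} bounds the stabilization index of a single coordinate by the number of distinct states. This relies on monotonicity: a nondecreasing sequence in a finite set of size at most $2^p$ can take at most $2^p$ distinct values. Taking the maximum over coordinates keeps the bound at $2^p$ rather than multiplying it, which is exactly why the componentwise corollary is stated that way. I would also check signed zeros and $u_c=0$, which are trivially constant from the start. The main point to get right is that the summand really is position-independent inside the block: the shifted weight $w$ must be the same rounded value at every position. That is part of the hypothesis, since the max logit $e_{\max}$ is computed once per query before accumulation.
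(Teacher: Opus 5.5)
Your proposal is correct and matches the paper's proof. Both apply the repeated-addition fact and its componentwise corollary to the denominator and to each numerator coordinate, using the position-independent addends $w$ and $\mathsf{fl}(w v_c)$, so after at most $2^p$ identical addends the accumulator is at a fixed point and any block of length at least $2^p$ exits in the same state. Your comparison of $\Phi^{m}$ and $\Phi^{m'}$ for $m,m'\ge 2^p$ just spells out the insertion/removal claim that the paper states in one line. One small correction: your signed-zero remark is slightly off, since a $-0$ start with a $+0$ addend changes once before stabilizing, but this still stays within the $2^p$ bound.
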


\begin{proof}
Apply Fact~\ref{lem:stag} to the scalar denominator~$Z$.
For the numerator, apply Corollary~\ref{cor:comp} coordinatewise to the fixed vector addend $w\cdot \mathbf v$.
After at most~$2^p$ identical addends, both the denominator and every numerator coordinate have reached a fixed point for this repeated addition.
Any additional copies of the same addend therefore leave the accumulator unchanged.
\end{proof}

\paragraph{How these primitives are used.}
The $2^p$ state bound is used only to guarantee eventual constancy after repeated identical addends.
The underflow threshold $U_p$ is used only to force exact one-hot attention when all non-winners have shifted logits at most $-U_p$.
The machine precision $\varepsilon_{\mathrm{mach}}$ is used only to bound rounding errors in the CRT fingerprint and Layer-2 logit-gap argument.

\subsection[ECB Lower Bound]{$\mathcal T_{\mathrm{ECB}}$ Lower Bound}\label{app:ecb_proof}

Throughout this subsection, fix the horizon~$N$ and write $p=p(N)$ for the precision at that horizon.
All contexts considered below lie in the model domain of~$T_N$.

\paragraph{Terminal-run contexts.}
For a fixed prefix $x\in\Gamma^*$, token $\sigma\in\mathcal V$, and $k\ge0$, consider the context $(x,\sigma^k)$.
We write $h^{(\ell)}_{\mathrm{last}}(x,\sigma^k)$ for its last-token hidden state.
By causality, $h^{(\ell)}_{\mathrm{last}}(x,\sigma^j)$ is also the state of the $j$-th terminal-run token inside $(x,\sigma^k)$ for any $k\ge j\ge1$.

\begin{definition}[Formal $\mathcal T_{\mathrm{ECB}}$ model]
\label{def:ecb_full}
Fix a horizon~$N$ and write $D$ for the depth of~$T_N$.
The class $\mathcal T_{\mathrm{ECB}}$ consists of models satisfying:
\begin{enumerate}
\item[\textup{(A0)}] Initial hidden states are position-independent token embeddings: $h_i^{(0)}=E(x_i)$, with no absolute positional embeddings, index features, counters, or other access to~$i$.
The only positional dependence in the model is the distance-bias term in~\textup{(A4)}.
\item[\textup{(A1)}] The model is a depth-$D$ causal transformer-style model with arbitrary width and skip connections as in~\eqref{eq:layer}.
\item[\textup{(A2)}] Execution is deterministic $p$-bit execution as in Appendix~\ref{app:theory_proofs}, including the left-to-right softmax accumulation described above.
\item[\textup{(A3)}] At layer~$\ell$ and head~$h$, queries, keys, and values are functions only of layer-$(\ell-1)$ hidden states.
\item[\textup{(A4)}] Positional information enters only through a distance-dependent bias.
For visible keys, the numerical logit is
\[
  e^{(\ell,h)}_{i,j}
  =
  \mathsf{fl}\bigl((q^{(\ell,h)}_i)^\top k^{(\ell,h)}_j -b^{(\ell)}_h(i-j)\bigr).
\]
A hard-mask value $b_h^{(\ell)}(d)=+\infty$ means that the key is removed before numerical evaluation, with shifted weight exactly zero; it is not an infinite arithmetic intermediate.
Visible-key biases and all numerical arithmetic intermediates are finite.
\item[\textup{(A5)}] Each $b^{(\ell)}_h$ is eventually constant: there is $n^{(\ell)}_h$ such that $b^{(\ell)}_h(d+1)=b^{(\ell)}_h(d)$ for all $d\ge n^{(\ell)}_h$.
We write $\bar n_\ell:=\max_h n^{(\ell)}_h$.
\item[\textup{(A6)}] The readout is deterministic and position-free from the final hidden state.
\end{enumerate}
\end{definition}

\subsubsection{Terminal-run saturation and the copying lower bound}

Throughout the lower-bound proof, fix a prefix~$x$ and a token~$\sigma$.
We compare $(x,\sigma^k)$ and $(x,\sigma^{k+1})$: once the terminal run is sufficiently long, another copy of~$\sigma$ no longer changes the next-token logits.
The threshold at each layer allows $\bar n_\ell$ positions for the distance biases to become constant and $2^p$ identical contributions for the softmax accumulators to stop changing.
The following layerwise statement proves the terminal-run saturation lemma in the main text (Lemma~\ref{lem:saturated_suffix_self_loop}).

\begin{lemma}[Layerwise terminal-run saturation]\label{lem:terminal_run_saturation}
Fix $x\in\Gamma^*$ and $\sigma\in\mathcal V$.
Define
\[
  \kappa_0:=0,\qquad \kappa_\ell:=\kappa_{\ell-1}+\bar n_\ell+2^p .
\]
For every layer~$\ell$, the state $h^{(\ell)}_{\mathrm{last}}(x,\sigma^j)$ is independent of~$j$ for $j\ge\max\{1,\kappa_\ell\}$.
Consequently, for $k,k'\ge\kappa_D$,
\[
  h^{(D)}_{\mathrm{last}}(x,\sigma^k)
  =
  h^{(D)}_{\mathrm{last}}(x,\sigma^{k'})
\]
bit-for-bit.
Hence
\[
  \operatorname{Logits}_N(x,\sigma^k)
  =
  \operatorname{Logits}_N(x,\sigma^{k'}).
\]
\end{lemma}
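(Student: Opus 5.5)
The proof goes by induction on the layer index~$\ell$, with the hypothesis that $h^{(\ell)}_{\mathrm{last}}(x,\sigma^j)$ is the same for all $j\ge\max\{1,\kappa_\ell\}$.

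\emph{Base case $\ell=0$.} By (A0), $h^{(0)}_{\mathrm{last}}(x,\sigma^j)=E(\sigma)$ for every $j\ge1$, which is the claim with $\kappa_0=0$.

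\emph{Inductive step.} Assume the claim for layer $\ell-1$ and fix $j\ge\max\{1,\kappa_\ell\}$. Since $\kappa_\ell\ge\kappa_{\ell-1}+\bar n_\ell+2^p$, it suffices to show that the state at run-index $j+1$ equals the state at run-index~$j$. Consider the query at terminal-run index $j+1$ in head~$h$. By (A3) and the hypothesis, the query vector is the same as at index~$j$. Split the visible keys into three groups:
\begin{enumerate}
\item[(i)] The prefix $x$ together with run indices $1,\ldots,\kappa_{\ell-1}$. Their layer-$(\ell-1)$ states do not depend on how long the run eventually is, by causality. Their query-key distances are all $\ge\bar n_\ell+2^p$, so by (A5) their biases equal the eventual constant $b_\infty$. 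Their logits are therefore the same at query index $j$ and $j+1$.
\item[(ii)] The middle block of run indices from $\kappa_{\ell-1}$ up to the near window. Every key here has the common state $h^\star$, and every distance is $\ge\bar n_\ell$. Hence all these keys share one logit $q^\top k^\star-b_\infty$ and one value.
\item[(iii)] The near window of the last $\bar n_\ell$ positions. Their relative distances $0,\ldots,\bar n_\ell-1$ are fixed, and their states equal $h^\star$. This list of (distance, state) pairs is identical at $j$ and $j+1$.
\end{enumerate}
Going from $j$ to $j+1$ thus inserts exactly one more copy of the identical middle-block addend. The middle block has length $\ge 2^p$ at query index~$j$ because $j\ge\kappa_\ell$.

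\emph{Handling the maximum shift.} The multiset of logits at $j+1$ differs from that at $j$ only by one duplicated value, so $e_{\max}$ is unchanged. Consequently every shifted weight $\hat w$ is bitwise identical across the two queries, and so is every rounded product $\mathsf{fl}(\hat w_j (v_j)_c)$. I expect this bookkeeping, together with confirming that the ordering of the left-to-right sum is exactly (i), then (ii), then (iii), to be the main obstacle. The care needed is to check that biases are applied to distance and not to position, and that hard-masked keys are dropped consistently in both queries, which holds because masking is determined by distance alone.

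\emph{Closing the step.} Apply Lemma~\ref{lem:stag_prefix}. After (i) the accumulators $(Z,\mathbf N)$ are in the same state for both queries. After the middle block, which has length $\ge2^p$, they reach the same fixed point whether it holds $m$ or $m+1$ copies. Processing (iii) is then deterministic on identical inputs. Hence $\widetilde{\mathbf o}$ agrees for every head, so $\mathbf o^{(\ell)}$ agrees. The residual and MLP update~\eqref{eq:layer} acts on identical inputs, so $h^{(\ell)}$ agrees, completing the induction. Setting $\ell=D$ and applying the position-free readout (A6) gives equal logits for all $k,k'\ge\kappa_D$.
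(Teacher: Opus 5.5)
Your proof is correct and follows essentially the same route as the paper's. Both induct on the layer and split each head's left-to-right scan into three parts: a far part whose biases have already stabilized, a middle block of at least $2^p$ identical addends absorbed by constant-block invariance (Lemma~\ref{lem:stag_prefix}), and a near window with a fixed list of distances.

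Comparing $j$ with $j+1$ instead of arbitrary $j,j'$, and using the common window $\bar n_\ell$ instead of the per-head $n^{(\ell)}_h$, are cosmetic differences. Your multiset argument for the unchanged max-shift makes explicit a step the paper only asserts. One small fix: make groups (i) and (ii) disjoint, e.g.\ by starting the middle block at run index $\kappa_{\ell-1}+1$. That block still has length at least $2^p$.
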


\begin{proof}
We induct on~$\ell$.
The case $\ell=0$ follows from position-independent token embeddings, since every terminal-run token is~$\sigma$.
Assume the claim at layer~$\ell-1$, and consider the query at index~$j$ within the terminal run, with
\[
  j\ge\kappa_\ell=\kappa_{\ell-1}+\bar n_\ell+2^p .
\]
By the induction hypothesis and causality, all terminal-run tokens with index at least $\max\{1,\kappa_{\ell-1}\}$ have the same previous-layer state, hence the same query, key, and value.
For any head~$h$, split its causal scan into the following blocks, with indices measured from the start of the terminal run:
\[
  \text{prefix and early run tokens}
  \quad|\quad
  \max\{1,\kappa_{\ell-1}\},\ldots,j-n_h^{(\ell)}
  \quad|\quad
  j-n_h^{(\ell)}+1,\ldots,j .
\]
The last part is empty if $n_h^{(\ell)}=0$.
The first part is independent of~$j$: its hidden states are fixed by causality, and its distances from the query are already beyond the eventual-constancy threshold.
The middle part contains at least $2^p$ positions with equal hidden states, all at distances at least~$n_h^{(\ell)}$ from the query, so the constant distance bias gives identical softmax contributions, or zero contributions if this block is masked.
The last part always has $n_h^{(\ell)}$ positions, at distances $n_h^{(\ell)}-1,\ldots,0$ from the query.
The max-shift is therefore independent of~$j$.
By constant-block invariance, changing the length of the middle block does not change the accumulator state after leaving it.
Thus every head output, and hence the layer-$\ell$ hidden state after the deterministic layer update, is independent of~$j$.
This proves the induction.
The final hidden-state equality follows by taking $\ell=D$, and the logit equality follows from the position-free deterministic readout~\textup{(A6)}.
\end{proof}

\begin{proof}[Proof of Theorem~\ref{thm:impossibility}]
We show that a correct all-identical copy cannot be followed by correct termination once $\kappa_D=o(N)$.
Since $\kappa_D=o(N)$, for all sufficiently large~$N$ and every $L\in[N/3,N/2]$ we have $L-1\ge\kappa_D$.
Let
\[
  x_L:=\langle\BoS\rangle,\sigma^L,\langle\EoI\rangle,\langle\Copy\rangle .
\]
If the first $L$ generated tokens are not all~$\sigma$, the model has already failed.
Otherwise, the $L$-th token $\sigma$ was generated from $(x_L,\sigma^{L-1})$.
By Lemma~\ref{lem:terminal_run_saturation}, all contexts $(x_L,\sigma^k)$ with $k\ge L-1$ have identical logits.
Deterministic greedy decoding therefore outputs~$\sigma$ at $(x_L,\sigma^L)$, where the correct token is $\langle\EoS\rangle$, and then continues outputting~$\sigma$ for as long as the context remains in the model domain.
\end{proof}

\begin{definition}[ALiBi and Hard-ALiBi distance biases]\label{def:alibi_hard_alibi}
For a query position~$i$ and key position~$j\le i$, write $d=i-j$.
In our sign convention, the attention logit is $q_i^\top k_j-b_h(d)$.

Following \citet{press2022alibi}, an \emph{ALiBi} head with slope $m_h\ge0$ uses the distance penalty
\[
  b_h^{\mathrm{ALiBi}}(d)=m_h d .
\]
This exact-real bias is not eventually constant when $m_h>0$; the same is true if one computes $m_hd$ directly and rounds each value independently.
The recursive implementation of ALiBi mentioned in Corollary~\ref{cor:main} is computed by the rounded recurrence
\[
  b_h(0)=0,\qquad b_h(d+1)=\mathsf{fl}(b_h(d)+m_h).
\]

Following \citet{jelassi2024repeat}, an ideal \emph{Hard-ALiBi} head with window length $w_h\in\mathbb N$ masks all positions outside the window:
\[
  b_h^{\mathrm{H\text{-}ALiBi}}(d)
  =
  \begin{cases}
  0, & 0\le d<w_h,\\
  +\infty, & d\ge w_h.
  \end{cases}
\]
Thus the head has an unpenalized causal window of length~$w_h$.
In finite-precision execution, hard-masked keys are removed before the softmax and receive shifted weight zero.
Replacing the mask by a representable constant penalty $B_h$ also gives an eventually constant bias (Lemma~\ref{lem:hard_alibi_ecb}), so both versions satisfy the bias condition.
Unmasked Hard-ALiBi heads are NoPE heads, $b_h\equiv 0$.
\end{definition}

\paragraph{Bias thresholds for the corollary.}
To apply the lower bound to the three cases in Corollary~\ref{cor:main}, it remains to bound their eventual-constancy indices.

\begin{lemma}[Recursive implementation of ALiBi is eventually constant]
\label{lem:alibi_ecb}
Let $m_h\ge0$ be a $p$-bit float and define $b_h(0)=0$, $b_h(d+1)=\mathsf{fl}(b_h(d)+m_h)$.
Assuming finite intermediates do not overflow, there exists $n_h\le 2^p$ such that $b_h(d+1)=b_h(d)$ for all $d\ge n_h$.
\end{lemma}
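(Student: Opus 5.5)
The plan is to reduce the statement directly to Fact~\ref{lem:stag}. The recursion $b_h(0)=0$, $b_h(d+1)=\mathsf{fl}(b_h(d)+m_h)$ is exactly the repeated-addition sequence $S_0=0$, $S_{n+1}=\mathsf{fl}(S_n+u)$ with fixed summand $u=m_h$. Here $m_h$ is a $p$-bit float and $S_0=0\in\calF_p$. Under the standing no-overflow hypothesis, every $b_h(d)$ remains a finite representable float, so the hypotheses of the Fact are met verbatim.

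The steps, in order, are as follows.
\begin{enumerate}
\item If $m_h=0$, then $b_h(d)=0$ for all $d$, and we may take $n_h=0$.
\item If $m_h>0$, round-to-nearest-even is monotone and $x+m_h>x$. Since $x$ itself is representable, $\mathsf{fl}(x+m_h)\ge x$, so $(b_h(d))_d$ is non-decreasing in the finite set $\calF_p$ of at most $2^p$ values.
\item We need eventual constancy, not mere boundedness, and this is the one point needing care. The recurrence is autonomous: $b_h(d+1)=\phi(b_h(d))$ with $\phi(x)=\mathsf{fl}(x+m_h)$. Hence once $b_h(d+1)=b_h(d)$ for some $d$, the sequence is constant from $d$ onward.
\item A non-decreasing sequence that never repeats a value consecutively is strictly increasing. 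It therefore visits a new element of $\calF_p$ at each step. This forces a fixed point within at most $2^p-1$ steps, giving $n_h\le 2^p$.
\end{enumerate}

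The only genuine obstacle is whether finiteness can fail, since the sequence could in principle climb to the largest finite float and then overflow to $+\infty$. That case is excluded by assumption: this is exactly the no-overflow convention stated in Appendix~\ref{app:fp_primitives} and in the lemma. Signed zero causes no issue, because $0+m_h$ with $m_h\ge 0$ never produces $-0$. With these points noted, the lemma is immediate from Fact~\ref{lem:stag}.

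Finally, I would record the consequence used by Corollary~\ref{cor:main}. Each recursive-ALiBi head contributes $n^{(\ell)}_h\le 2^p$, so $\bar n_\ell\le 2^p$ and $\kappa_D\le 2D\,2^p$. The condition $D(N)2^{p(N)}=o(N)$ then feeds into Theorem~\ref{thm:impossibility}.
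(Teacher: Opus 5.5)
Your proposal is correct and follows essentially the same route as the paper. Monotonicity of round-to-nearest-even gives $b_h(d+1)\ge b_h(d)$, and the finiteness of $\calF_p$ limits the number of strict increases, so the index satisfies $n_h\le 2^p$; your explicit remark that the recurrence is autonomous (a single repeated value is a permanent fixed point) makes precise a step the paper leaves implicit when it passes from ``at most $2^p$ strict increases'' to that index bound.
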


\begin{proof}
If $m_h\ge0$ and $x$ is finite and representable, then $x+m_h\ge x$ and round-to-nearest-even cannot round the result to a representable value below~$x$.
Hence $b_h(d+1)=\fl(b_h(d)+m_h)\ge b_h(d)$.
$(b_h(d))_{d\ge0}$ can make at most~$2^p$ strict increases among $p$-bit states, so it becomes constant after at most~$2^p$ distinct representable values.
Thus there exists $n_h\le 2^p$ such that $b_h(d+1)=b_h(d)$ for all $d\ge n_h$.
\end{proof}

\begin{lemma}[Hard-ALiBi is eventually constant]\label{lem:hard_alibi_ecb}
A finite-window Hard-ALiBi head with window length $w_h$ as in Definition~\ref{def:alibi_hard_alibi} is constant outside the window: either the masked contribution is zero in the ideal hard-mask implementation, or $b_h(d+1)=b_h(d)$ for all $d\ge w_h$ in the finite-penalty implementation.
Thus its eventual-constancy index is at most~$w_h$.
\end{lemma}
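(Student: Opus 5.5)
The plan is to argue directly from Definition~\ref{def:alibi_hard_alibi} and split into the two implementations of the mask. The statement is essentially definitional: Lemma~\ref{lem:hard_alibi_ecb} only has to check that the bias sequence $b_h(d)$ is constant for $d\ge w_h$, or that masked keys contribute nothing regardless of distance. That is exactly what condition~\textup{(A5)} of Definition~\ref{def:ecb_full} needs, with the resulting index fed into $\bar n_\ell$.

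In the ideal hard-mask implementation, $b_h(d)=+\infty$ for every $d\ge w_h$. By~\textup{(A4)}, such a key is removed before numerical evaluation and gets shifted weight exactly zero. Hence every key at distance $d\ge w_h$ makes the same contribution, namely none, to $Z$ and $\mathbf N$. Formally, we treat $b_h(d+1)=b_h(d)=+\infty$ for $d\ge w_h$ as an equality of the extended bias, so the eventual-constancy index is $n_h=w_h$.

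We must also check that this is compatible with how the index is used in Lemma~\ref{lem:terminal_run_saturation}. There, the middle block consists of keys at distances at least $n_h$. Every such key is masked, so the middle block contributes nothing and is trivially invariant in its length, as the proof of that lemma already notes ("or zero contributions if this block is masked").

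In the finite-penalty implementation, the bias is $b_h(d)=0$ for $d<w_h$ and $b_h(d)=B_h$ for $d\ge w_h$, with $B_h$ a fixed representable float. Then $b_h(d+1)=b_h(d)=B_h$ for all $d\ge w_h$, so $n_h\le w_h$. Unmasked heads have $b_h\equiv0$ and index $0$.

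The only step needing care is the infinite-mask case, because $+\infty$ is not an arithmetic value. There I would rely explicitly on the convention in~\textup{(A4)} that masking removes the key, rather than on any floating-point computation with infinity.
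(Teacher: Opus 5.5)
Your proposal is correct and matches the paper's proof. Like the paper, you split into the ideal hard-mask case, where keys at distance $d\ge w_h$ are removed with shifted weight exactly zero, and the finite-penalty case, where $b_h(d)=b_h(d+1)=B_h$ for all $d\ge w_h$, which gives an eventual-constancy index of at most $w_h$. Your extra check that the masked middle block is compatible with the terminal-run saturation argument, and your reliance on the masking convention in \textup{(A4)} rather than on arithmetic with $+\infty$, are sound, though the lemma itself does not require them.
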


\begin{proof}
For every $d\ge w_h$, both $d$ and $d+1$ fall outside the unpenalized window.
In the ideal hard-mask version, both positions are removed from the softmax, equivalently both shifted weights are fixed at~$0$.
In a finite-penalty version, both distances receive the same outside-window penalty~$B_h$, so $b_h(d)=b_h(d+1)=B_h$.
\end{proof}

\paragraph{General Hard-ALiBi window condition.}
For a Hard-ALiBi family, let $\bar w_\ell:=\max_h w_h^{(\ell)}$ over the hard-masked heads in layer~$\ell$, treating unmasked NoPE heads as having index~$0$.
Theorem~\ref{thm:impossibility} applies whenever
\[
  \sum_{\ell=1}^D(\bar w_\ell+2^p)=o(N).
\]
The main-text corollary uses the simpler sufficient condition $\bar w_\ell\le2^p$ for every layer.

\begin{proof}[Proof of Corollary~\ref{cor:main}]
We bound $\kappa_D$ by $2D\,2^p$ in each of the three cases, under the stated no-overflow assumption for recursive ALiBi.
For NoPE, $b_h^{(\ell)}\equiv0$ for every layer and head, so $\bar n_\ell=0$ and $\kappa_D=D\,2^p$.
For the recursive implementation of ALiBi with finite intermediates, Lemma~\ref{lem:alibi_ecb} gives $n_h^{(\ell)}\le 2^p$ for every head, so $\bar n_\ell\le 2^p$ and $\kappa_D\le2D\,2^p$.
For bounded-window Hard-ALiBi with all windows at most~$2^p$, Lemma~\ref{lem:hard_alibi_ecb} gives $n_h^{(\ell)}\le2^p$ for every hard-masked head; unmasked heads have $n_h^{(\ell)}=0$.
Hence again $\bar n_\ell\le2^p$ and $\kappa_D\le2D\,2^p$.
Since $D(N)2^{p(N)}=o(N)$,
\[
  \kappa_D \le 2D(N)\,2^{p(N)} = o(N).
\]
Theorem~\ref{thm:impossibility} applies in all cases.
In particular, $D=N^{0.01}$ and $p\le0.9\log_2N$ give $D\,2^p\le N^{0.91}=o(N)$.
If $p=\log_2N-C\log\log N$, then the same sufficient condition requires $D=o((\log N)^C)$.
\end{proof}

\subsection{Formal VO Construction with CRT Periods}
\label{app:vo_formal}

This subsection gives the fully formal instantiation of the main-body VO-rotated head in Definition~\ref{def:vo_rotated_head} and the class $\mathcal T_{\mathrm{VO}}$ in Definition~\ref{def:vo}.
Recall that $a(t)$ is the \(\langle\BoS\rangle\) position until \(\langle\Copy\rangle\) appears, and the \(\langle\Copy\rangle\) position thereafter; $\delta(t)=t-a(t)$ is the offset from that anchor.
The source tokens have offsets $1,\ldots,L$, the source-side \(\langle\EoI\rangle\) has offset~$L+1$, and the copy region starts at offset~$0$.
Throughout this subsection, $R_q$ denotes the fixed two-dimensional period-$q$ rotation
\[
  R_q :=
  \begin{pmatrix}
  \cos(2\pi/q) & -\sin(2\pi/q)\\
  \sin(2\pi/q) & \cos(2\pi/q)
  \end{pmatrix},
\]
with powers interpreted modulo~$q$.
The ideal unit vector for residue~$r$ is $\mathbf u_q(r):=R_q^r\mathbf e_1=(\cos(2\pi r/q),\sin(2\pi r/q))^\top$.
We store rounded rotations as $\widehat R_q(r):=\mathsf{fl}(R_q^r)$, indexed by the exact residue $r\bmod q$ and rounded entrywise.
Let $\mathsf{mv}_p(B,\mathbf v)$ denote a matrix-vector product with each multiplication and addition rounded to $p$ bits.
Layer~1 stores one computed rotary vector $\mathbf f_h(t)$ per prime period.
Together they form the fingerprint $\mathbf f(t):=(\mathbf f_h(t))_{h=1}^H$.
The ideal counterpart of $\mathbf f_h(t)$ is $\mathbf u_{q_h}(\delta(t))$; Lemma~\ref{lem:fingerprint} bounds the error between them.

The construction below specifies the positional encoding and both layers before verifying their routing and precision guarantees.

\paragraph{VO-rotated attention heads.}
We use the VO head of Definition~\ref{def:vo_rotated_head}, with integer period $q_h\ge2$ and even rotary value dimension~$m_h$.
Its attention weights use ordinary unrotated query/key logits; only the rotary value subspace undergoes the position-indexed pre- and post-rotations.
In $p$-bit execution, these rotations use exact table indices and rounded arithmetic as specified below.

\begin{definition}[Prime-$q_h$ VO head]\label{def:qhead}
A \emph{prime-$q_h$ VO head} is a NoPE causal attention head with one-dimensional query/key coordinates, ordinary logits $e_{i,j}=\mathsf{fl}(q_i k_j)$, a two-dimensional rotary value subspace, and prime period~$q_h$.
Before rounding, its rotary head-local pre-output is
\[
  \widetilde{\mathbf o}_t^{\mathrm{rot}}
  =
  R_{q_h}^{t}\sum_{j\le t}\alpha_{t,j}R_{q_h}^{-j}\mathbf v_j^{\mathrm{rot}},
\]
where powers of the fixed two-dimensional rotation $R_{q_h}$ are interpreted modulo~$q_h$.
If this head attends one-hot to position~$j$ and $\mathbf v_j^{\mathrm{rot}}=\mathbf e_1$, its ideal rotary output is $\mathbf u_{q_h}(t-j)$; the computed output uses the rounded rotation tables.
\end{definition}

\begin{definition}[Formal $\mathcal T_{\mathrm{VO}}$ instantiation with CRT periods]
\label{def:vo_prime_full}
For the construction theorem, $\mathcal T_{\mathrm{VO}}$ is instantiated with prime periods as follows.
The modular position indices in~\textup{(B5)} select the rotation tables; the remaining conditions specify the two layers, resource parameters, and token readout.
\begin{enumerate}
\item[\textup{(B1)}] Layer~1 has $H$ prime-$q_h$ VO heads for the first $H$ primes, each with $m_h=2$.
The output projection writes their $2H$ head-local pre-output coordinates into a fingerprint subspace and leaves the token-embedding subspace untouched; $\mathrm{MLP}^{(1)}=0$.
\item[\textup{(B2)}] Layer~2 is a single readout attention head whose query and key vectors are partitioned into $H$ prime blocks.
The index $h$ in $\mathbf q^{(2,h)}_t$ and $\mathbf k^{(2,h)}_j$ denotes a block within this head.
Its logit is the sequential $p$-bit sum
\begin{equation}\label{eq:summed_logit}
  e^{(2)}_{t,j}
  =
  \flsum_{h=1}^{H} \langle \mathbf q^{(2,h)}_t,\mathbf k^{(2,h)}_j\rangle_p .
\end{equation}
Here $\flsum$ denotes left-to-right rounded summation, and $\langle\cdot,\cdot\rangle_p$ denotes an inner product evaluated in $p$-bit arithmetic.
Its value vector is the preserved token embedding at position~$j$, and its output projection writes only to the token-readout block used by the final decoder; $\mathrm{MLP}^{(2)}=0$.
\item[\textup{(B3)}] $H=\Theta(\log N/\log\log N)$, so the largest prime period is $q_{\max}:=\max_h q_h=\Theta(\log N)$ and the product of the periods is $\Pi:=\prod_h q_h>N+1$.
\item[\textup{(B4)}] $p_m=c_m\lceil\log_2\log_2\Pi\rceil$ with $c_m\ge4$ and $p_e=c_e\lceil\log_2\log_2\Pi\rceil$ with $c_e$ a sufficiently large absolute constant.
\item[\textup{(B5)}] Each position has exact access to its index modulo~$q_h$, which selects the corresponding rounded rotation-table entry.
The construction uses $m_h=2$, so its value rotations apply $\widehat R_{q_h}(r)$ using $\mathsf{mv}_p$.
Implementation and storage costs are given in Section~\ref{app:vo_implementation}.
\item[\textup{(B6)}] For any fixed finite vocabulary~$\mathcal V$, choose $d_{\mathrm{tok}}=d_{\mathrm{tok}}(|\mathcal V|)$ large enough for disjoint token-embedding and token-readout blocks.
Token embeddings for $\mathcal V\cup\{\langle\BoS\rangle,\langle\EoI\rangle,\langle\Copy\rangle,\langle\EoS\rangle\}$ occupy the first block; the fingerprint and token-readout blocks are initialized to zero.
The final linear readout reads only the token-readout block.
The greedy argmax recovers vocabulary tokens from the returned token vector and maps the returned \(\langle\EoI\rangle\) embedding to \(\langle\EoS\rangle\).
One may take one-hot token embeddings with identity rows for $\mathcal V$, an \(\langle\EoS\rangle\) row that scores the \(\langle\EoI\rangle\) coordinate, and zero readout weights on the other coordinates.
\end{enumerate}
\end{definition}

\paragraph{Layer-1 instantiation.}
For each prime head~$h$,
\begin{equation}\label{eq:qk1}
  (q_t^{(1)},k_t^{(1)})
  =
  \begin{cases}
    (0,-U_p), & x_t=\langle\BoS\rangle,\\
    (1,0), & x_t=\langle\Copy\rangle,\\
    (1,-2U_p), & \text{otherwise},
  \end{cases}
\end{equation}
where $x_t$ is the token at position~$t$, with value vector $\mathbf v_t^{(1)}=\mathbf e_1$ in the two-dimensional rotary subspace.
The resulting computed component $\mathbf f_h(t)$ approximates $\mathbf u_{q_h}(\delta(t))$ through two rounded rotations, as shown in Lemma~\ref{lem:fingerprint}.

\paragraph{Layer-2 instantiation.}
At a copy-region query~$t$, the next source-side position has offset $\delta(t)+1$.
Layer~2 rotates the query fingerprint by one step and compares it with each stored key fingerprint.
For each prime block~$h$, choose
\begin{equation}\label{eq:qk2}
  \mathbf q_t^{(2,h)}
  =
  \bigl[A\,\mathsf{mv}_p(\widehat R_{q_h}(1),\mathbf f_h(t)),\;-A^2\bigr],
  \qquad
  \mathbf k_j^{(2,h)}
  =
  \bigl[A\,\mathbf f_h(j),\;1\bigr],
\end{equation}
where $\widehat R_{q_h}(1)$ is the rounded one-step rotation and $A$ is a power-of-two scale chosen below.
With ideal fingerprints and exact arithmetic, the centered dot product is
\[
  A^2\left(\left\langle \mathbf u_{q_h}(\delta(t)+1),\mathbf u_{q_h}(\delta(j))\right\rangle-1\right).
\]
A key with offset $\delta(t)+1$ gives ideal centered inner product $0$.
Any wrong position mismatches in at least one prime coordinate, producing a negative cosine gap.
Scaling by~$A^2$, with rounding error controlled by the choice of precision, pushes all wrong logits below the softmax underflow threshold.
Let $C_{\mathrm{block}}$ be a universal error constant for one prime block, including the stored fingerprints, the one-step query rotation, and the rounded inner product.
Choose $C_{\mathrm{sum}}$ large enough to cover both the aggregate block errors and the left-to-right summation error from Lemma~\ref{lem:l2_sum_rounding}.
These constants depend only on the fixed arithmetic format and constant-dimensional block operations, not on~$N$, $H$, $q_h$, or the input length.
Define the score-error budget $\varepsilon_{\mathrm{score}}$, the remaining mismatch penalty~$\Delta$, and the scale~$A$ by
\begin{equation}\label{eq:eff_gap}
  \varepsilon_{\mathrm{score}}:=C_{\mathrm{sum}} H^2\varepsilon_{\mathrm{mach}},\qquad
  \Delta:=8/q_{\max}^2-\varepsilon_{\mathrm{score}},\qquad
  A:=2^{\left\lceil \frac12\log_2(4U_p/\Delta) \right\rceil}.
\end{equation}
The target logit will lie within $A^2\varepsilon_{\mathrm{score}}$ of zero, while every wrong logit will be at most $-A^2\Delta$.
Their separation is therefore at least $A^2(\Delta-\varepsilon_{\mathrm{score}})$, which must exceed the underflow threshold~$U_p$.
With $p_m=c_m\lceil\log_2\log_2\Pi\rceil$ and $c_m\ge4$, we have $\varepsilon_{\mathrm{score}}=o(1/q_{\max}^2)$, so $\Delta>0$ for all sufficiently large~$N$.
Since $A$ is a power of two, both $A$ and $A^2$ are exactly representable binary floating-point numbers whenever the exponent range contains their exponents.
Throughout the Layer-2 proof, take $N$ large enough that $\varepsilon_{\mathrm{score}}\le2/q_{\max}^2$.
Then $\Delta\ge6/q_{\max}^2>0$ and $\varepsilon_{\mathrm{score}}\le\Delta/3$.

\subsection{Proofs for the VO Construction with CRT Periods}

We first verify anchor selection, then bound the error in the stored fingerprints and the Layer-2 logits.

\subsubsection{Layer-1 anchor selection and fingerprints}

\begin{lemma}[Layer-1 exact one-hot]\label{lem:anchor}
Under~\eqref{eq:qk1}, Layer~1 satisfies $\alpha^{(1)}_{t,a(t)}=1$ bit-for-bit for every non-\(\langle\BoS\rangle\) position.
\end{lemma}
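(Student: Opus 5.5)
The plan is a direct computation of the Layer-1 logits under~\eqref{eq:qk1}, followed by the underflow threshold of Definition~\ref{def:underflow}. Fix a non-$\langle\BoS\rangle$ query position $t$. By~\eqref{eq:qk1} its query is $q_t^{(1)}=1$, whether $x_t=\langle\Copy\rangle$ or $x_t$ is any other token. Since $q_t=1$, every visible logit equals the key itself, $e_{t,j}=\mathsf{fl}(1\cdot k_j)=k_j$. Multiplication by $1$ is exact, and $0$, $-U_p$, $-2U_p$ are representable by the format remarks preceding Definition~\ref{def:underflow}.

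I would split into two cases.

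\textbf{Case 1: no $\langle\Copy\rangle$ in $[1,t]$.} The visible keys are $-U_p$ at the $\langle\BoS\rangle$ position and $-2U_p$ elsewhere. Since the sequence begins with $\langle\BoS\rangle$ and $t$ is not that position, $a(t)$ is the $\langle\BoS\rangle$ position and is visible. Then $e_{\max}=-U_p$, attained only at $a(t)$.

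\textbf{Case 2: $\langle\Copy\rangle$ appeared at or before $t$.} By the prompt format it appears exactly once, at $a(t)$. The visible keys are then $0$ at $a(t)$, $-U_p$ at $\langle\BoS\rangle$, and $-2U_p$ elsewhere, so $e_{\max}=0$, again uniquely at $a(t)$.

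In both cases the shifted logits are exactly $0$ at $a(t)$ and at most $-U_p$ elsewhere. The subtractions are exact: $(-U_p)-(-U_p)=0$, $(-2U_p)-(-U_p)=-U_p$, and $-2U_p-0=-2U_p$, with $-U_p,-2U_p$ representable.

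Next I would invoke the defining property of $U_p$: $\mathsf{fl}(\exp(-m))=0$ at $m=U_p$. Using the explicit characterization ``$\exp(-m)$ rounds to zero exactly when $m\ge(2^{p_e-1}+p_m-1)\ln 2$'', the same holds for every $m\ge U_p$. Hence every non-anchor shifted weight is $\hat w_j=0$, and the anchor weight is $\hat w_{a(t)}=\mathsf{fl}(\exp 0)=1$.

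Then I would trace the left-to-right accumulation. The denominator adds zeros and a single $1$, each addition exact, so $Z=1$. Each numerator addend is $\mathsf{fl}(0\cdot v)=0$ except $\mathsf{fl}(1\cdot v_{a(t)})=v_{a(t)}$. Adding signed zeros to a finite value is exact, so $\mathbf N=v_{a(t)}$. The resulting normalized weight is $\alpha_{t,a(t)}=\hat w_{a(t)}/Z=1$ bit-for-bit, and all other weights are exactly $0$.

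The main obstacle is the monotonicity step, i.e. that $\exp(-m)$ underflows for every $m\ge U_p$ and not merely at $U_p$. I would handle it via the monotonicity of $\exp$ together with the monotonicity of round-to-nearest-even. I would also check the representability of $-2U_p$, which the format paragraph asserts.
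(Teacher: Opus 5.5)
Your proposal is correct and follows essentially the same argument as the paper. The paper uses $q_t^{(1)}=1$ to reduce logits to keys and splits on whether $\langle\Copy\rangle$ has appeared. In each case it finds the unique maximizer at $a(t)$, and then uses that all other shifted logits are at most $-U_p$, so they underflow to zero with denominator exactly $1$. Your extra checks fill in details the paper leaves implicit: exactness of the multiplications and shifts, underflow for every $m\ge U_p$ rather than only at $U_p$, and the trace of the accumulators.
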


\begin{proof}
For every non-\(\langle\BoS\rangle\) query position, $q_t^{(1)}=1$, so the ordinary attention logit to candidate~$j$ is simply $k_j^{(1)}$.

If \(\langle\Copy\rangle\) has not appeared in the causal prefix up to~$t$, the visible \(\langle\BoS\rangle\) key is $-U_p$, while every other visible key is $-2U_p$.
Hence the \(\langle\BoS\rangle\) position \(a(t)\) is the unique maximizer.
After max-shift, the \(\langle\BoS\rangle\) weight is $\fl(\exp(0))=1$, and every other shifted logit is at most $-U_p$, so its exponential rounds to~$0$ by Definition~\ref{def:underflow}.

If \(\langle\Copy\rangle\) has appeared, the unique \(\langle\Copy\rangle\) position has key~$0$, \(\langle\BoS\rangle\) has key $-U_p$, and all other visible positions have key $-2U_p$.
Thus the \(\langle\Copy\rangle\) position \(a(t)\) is the unique maximizer.
Again the winner contributes weight~$1$ and every non-winner is at least $U_p$ below the maximum after max-shift, so all non-winner weights round to~$0$.
In both cases the denominator is exactly~$1$ and the attention is bit-for-bit one-hot on~$a(t)$.
\end{proof}

With this one-hot attention, the denominator is $1$ and all non-anchor numerator contributions are zero.
Pre-rotation of $\mathbf e_1$ selects the first column of the rounded table exactly, and accumulation preserves that column.
The computed rotary output is therefore
\begin{equation}\label{eq:fp_output}
  \mathbf f_h(t)
  =
  \mathsf{mv}_p\!\left(\widehat R_{q_h}(t),\widehat R_{q_h}(-a(t))\mathbf e_1\right).
\end{equation}
The same expression holds at $\langle\BoS\rangle$, whose causal attention contains only itself.

\begin{lemma}[Fingerprint]\label{lem:fingerprint}
For all sufficiently large~$N$, the computed Layer-1 outputs satisfy:
\begin{enumerate}
\item[\textup{(i)}] $\mathbf f_h(t)$ depends only on the pair $(t\bmod q_h,a(t)\bmod q_h)$.
\item[\textup{(ii)}] $\|\mathbf f_h(t)-\mathbf u_{q_h}(\delta(t))\|_\infty \le 8\varepsilon_{\mathrm{mach}}$.
\item[\textup{(iii)}] $\mathbf f(t)$ uniquely identifies $\delta(t)$ for $\delta(t)\le N+1<\Pi$.
\end{enumerate}
\end{lemma}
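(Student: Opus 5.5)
The proof starts from the closed form \eqref{eq:fp_output}, which holds because Layer~1 is bit-for-bit one-hot on $a(t)$ (Lemma~\ref{lem:anchor}):
\[
  \mathbf f_h(t)=\mathsf{mv}_p\bigl(\widehat R_{q_h}(t),\widehat R_{q_h}(-a(t))\mathbf e_1\bigr).
\]
This holds exactly because the denominator is~$1$ and the surviving numerator term is $\mathsf{fl}(1\cdot x)=x$.

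\textbf{Part (i).} The table entries $\widehat R_{q_h}(r)$ are indexed by exact residues via (B5). Hence $\widehat R_{q_h}(t)$ depends only on $t\bmod q_h$, and $\widehat R_{q_h}(-a(t))\mathbf e_1$ depends only on $a(t)\bmod q_h$. The map $\mathsf{mv}_p$ is a deterministic function of its two arguments, so (i) follows at once.

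\textbf{Part (ii).} Write $B=\widehat R_q(t)$ and $\mathbf w=\widehat R_q(-a)\mathbf e_1$. By entrywise round-to-nearest, $\mathbf w=\mathbf u_q(-a)+\mathbf e_w$ and $B=R_q^t+E_B$, with $\|\mathbf e_w\|_\infty,\|E_B\|_{\max}\le\varepsilon_{\mathrm{mach}}/2$ (relative error on entries of magnitude at most~$1$). For a $2\times2$ product, each output coordinate is $\mathsf{fl}(\mathsf{fl}(b_1w_1)+\mathsf{fl}(b_2w_2))$. Its deviation from $b_1w_1+b_2w_2$ is at most about $2\varepsilon_{\mathrm{mach}}\cdot(|b_1w_1|+|b_2w_2|)\le 2\varepsilon_{\mathrm{mach}}(1+O(\varepsilon_{\mathrm{mach}}))$. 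Expanding $(R^t+E_B)(\mathbf u(-a)+\mathbf e_w)$ and using $R^t\mathbf u(-a)=\mathbf u(t-a)=\mathbf u(\delta(t))$, the deviation splits into three pieces:
\begin{itemize}[nosep]
  \item $\|R^t\mathbf e_w\|_\infty\le 2\cdot\varepsilon_{\mathrm{mach}}/2$;
  \item $\|E_B\mathbf u\|_\infty\le\varepsilon_{\mathrm{mach}}$;
  \item a second-order term $O(\varepsilon_{\mathrm{mach}}^2)$.
\end{itemize}
Adding the arithmetic error gives a total of roughly $4\varepsilon_{\mathrm{mach}}+O(\varepsilon_{\mathrm{mach}}^2)\le 8\varepsilon_{\mathrm{mach}}$ once $N$ (hence $p_m$) is large. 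I must also check that no subnormal or underflow effects arise. The format of (B4) has $p_e=\Theta(\log\log\Pi)$, so the entries are either exactly $0$ (which I treat via absolute error) or of size $\Omega(1/q_{\max}^2)$. The latter lie far above the subnormal range, so the relative-error model is valid, with a possible absolute-error contribution of size $2^{-\Theta(2^{p_e})}\ll\varepsilon_{\mathrm{mach}}$. Bookkeeping this case, where a tiny $\sin$ or $\cos$ entry could sit near the subnormal range, is the main technical nuisance. I would handle it by bounding every rounding error absolutely by $\varepsilon_{\mathrm{mach}}\cdot\max(|x|,1)$ plus the subnormal spacing.

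\textbf{Part (iii).} This is the crux, and it follows from CRT combined with (ii). Distinct residues $r\ne r'$ modulo a prime $q\le q_{\max}$ give unit vectors at angle at least $2\pi/q$. Their $\ell_\infty$ distance is therefore at least $\|\mathbf u(r)-\mathbf u(r')\|_2/\sqrt2\ge \sqrt2\sin(\pi/q)\ge c/q_{\max}$. Since $q_{\max}=\Theta(\log N)$ while $16\varepsilon_{\mathrm{mach}}=16(\log_2\Pi)^{-c_m}$ with $c_m\ge4$, we have $16\varepsilon_{\mathrm{mach}}<c/q_{\max}$ for large~$N$. Hence the $8\varepsilon_{\mathrm{mach}}$-balls around distinct ideal residue vectors are disjoint, and $\mathbf f_h(t)$ determines $\delta(t)\bmod q_h$ via the nearest ideal point. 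Doing this for all $h$ recovers $\delta(t)\bmod q_h$ for every $h$. The $q_h$ are distinct primes, so by the Chinese Remainder Theorem the residues determine $\delta(t)\bmod\Pi$. Because $0\le\delta(t)\le N+1<\Pi$ by (B3), this determines $\delta(t)$ itself.
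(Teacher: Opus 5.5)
Your proposal is correct and follows the paper's proof essentially step for step: exact residue indexing of the rounded tables gives (i); (ii) splits the error into table-rounding terms plus the $\mathsf{mv}_p$ arithmetic error, where your half-ulp accounting yields about $4\varepsilon_{\mathrm{mach}}$ and the paper settles for $(6+\sqrt2)\varepsilon_{\mathrm{mach}}$; and (iii) combines CRT with an $\ell_\infty$ separation of order $1/q_h$ between distinct residues, which dominates $16\varepsilon_{\mathrm{mach}}$. The only difference is cosmetic: you phrase (iii) as nearest-point decoding of each residue followed by CRT reconstruction, whereas the paper argues pairwise that distinct offsets yield distinct computed fingerprints, and the two formulations are equivalent.
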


\begin{proof}
Part~\textup{(i)} follows from~\eqref{eq:fp_output} and exact table indexing.
For part~\textup{(ii)}, fix a head, abbreviate $q=q_h$ and $a=a(t)$, and put $v=R_q^{-a}\mathbf e_1$ and $\widehat v=\widehat R_q(-a)\mathbf e_1$.
Every rounded table entry has magnitude at most~$1$ and absolute error at most $\varepsilon_{\mathrm{mach}}$.
Thus $\|\widehat v-v\|_\infty\le\varepsilon_{\mathrm{mach}}$ and $\|\widehat v\|_\infty\le1$.
Using the matrix row-sum norm and $\|R_q^t\|_\infty\le\sqrt2$, we obtain
\[
  \|\widehat R_q(t)\widehat v-R_q^t v\|_\infty
  \le (2+\sqrt2)\varepsilon_{\mathrm{mach}}.
\]
Each coordinate of $\mathsf{mv}_p(\widehat R_q(t),\widehat v)$ requires two products of magnitude at most~$1$ and one sum of magnitude at most~$2$.
For sufficiently large~$N$, rounding these operations introduces at most $\varepsilon_{\mathrm{mach}}$ per product and $2\varepsilon_{\mathrm{mach}}$ for the sum, including underflow; the exponent range in~\textup{(B4)} contains these bounded intermediates without overflow.
The arithmetic error is therefore at most $4\varepsilon_{\mathrm{mach}}$ per coordinate.
Since $R_q^t v=\mathbf u_q(t-a)$ in exact arithmetic, the total error is at most $(6+\sqrt2)\varepsilon_{\mathrm{mach}}<8\varepsilon_{\mathrm{mach}}$, independently of $t$, $a$, and~$q$.

For part~\textup{(iii)}, two distinct offsets $\delta,\delta'\in\{0,\ldots,N+1\}$ differ modulo some prime~$q_h$ by CRT, since $|\delta-\delta'|<\Pi$.
For their distinct residues $r,r'$ in this head,
\[
  \|\mathbf u_{q_h}(r)-\mathbf u_{q_h}(r')\|_2^2 =2\bigl(1-\cos(2\pi(r-r')/q_h)\bigr) \ge 16/q_h^2
\]
by Fact~\ref{fact:cos}.
Hence the $\ell_\infty$ separation is at least $4/(q_h\sqrt2)>2/q_h$.
The two computation errors subtract at most $16\varepsilon_{\mathrm{mach}}$, which is smaller for sufficiently large~$N$, since $c_m\ge4$ and $q_h\le q_{\max}=\Theta(\log N)$.
Thus computed fingerprints belonging to distinct offsets remain distinct, even when their absolute positions and anchors differ.
\end{proof}

The Layer-2 margin argument needs only that different residues are separated by a cosine gap.
The closest two different residues are one step apart on a period-$q$ circle.

\begin{fact}[Cosine gap]\label{fact:cos}
For integer $q\ge2$, $1-\cos(2\pi/q)\ge8/q^2$.
\end{fact}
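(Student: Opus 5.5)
The plan is to use concavity of sine, in the form of Jordan's inequality, together with the half-angle identity. First I will write
\[
  1-\cos(2\pi/q)=2\sin^2(\pi/q).
\]
This reduces the claim to $\sin(\pi/q)\ge 2/q$. Equivalently, with $x:=\pi/q$, I need $\sin x\ge (2/\pi)x$.

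Since $q\ge2$, the angle satisfies $x=\pi/q\in(0,\pi/2]$. On this interval sine is concave, because its second derivative $-\sin x$ is nonpositive there. A concave function lies above its chord, and the chord joining $(0,0)$ and $(\pi/2,1)$ is the line $x\mapsto 2x/\pi$. This gives Jordan's inequality $\sin x\ge 2x/\pi$ on $[0,\pi/2]$. Substituting $x=\pi/q$ yields $\sin(\pi/q)\ge 2/q$.

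Squaring and multiplying by $2$ then gives
\[
  1-\cos(2\pi/q)=2\sin^2(\pi/q)\ge 2\cdot\frac{4}{q^2}=\frac{8}{q^2},
\]
which is the stated bound.

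I will check the endpoint $q=2$ directly as a sanity check: the left side is $1-\cos\pi=2$ and the right side is $8/4=2$, so equality holds. This is consistent with the chord touching sine at $\pi/2$.

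There is no real obstacle here. The only point needing care is confirming that $\pi/q$ lies in $(0,\pi/2]$ so that the concavity chord bound applies, and $q\ge2$ guarantees exactly this. An alternative route would be the bound $1-\cos\theta\ge \theta^2/2-\theta^4/24$, but that gives a weaker constant for small $q$, so I would use the Jordan approach.
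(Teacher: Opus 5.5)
Your proof is correct and follows the same route as the paper: the half-angle identity $1-\cos\theta=2\sin^2(\theta/2)$ combined with Jordan's inequality $\sin x\ge 2x/\pi$ on $[0,\pi/2]$, evaluated at $\theta=2\pi/q$. The only addition is that you justify Jordan's inequality via the concavity chord argument, whereas the paper simply cites the inequality.
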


\begin{proof}
For $\theta\in[0,\pi]$, $1-\cos\theta=2\sin^2(\theta/2)$, and $\sin x\ge 2x/\pi$ on $[0,\pi/2]$.
Thus $1-\cos\theta\ge 2\theta^2/\pi^2$.
Substituting $\theta=2\pi/q$ gives $1-\cos(2\pi/q)\ge 8/q^2$.
\end{proof}

\subsubsection{Layer-2 matching and finite precision}

For one prime block, the centered inner product is near zero when the shifted query residue matches the key residue.
A mismatch costs at least the cosine gap, up to rounding error.

\begin{lemma}[Per-prime-block logit bounds]\label{lem:perhead}
Let $r_{\mathrm{query}}=(\delta(t)+1)\bmod q_h$ be the residue sought by the query and $r_{\mathrm{key}}=\delta(j)\bmod q_h$ the residue stored at the candidate key.
For all sufficiently large~$N$, we have
\[
  \left|
    \langle\mathbf q_t^{(2,h)},\mathbf k_j^{(2,h)}\rangle_p
    -A^2\bigl(\cos(2\pi(r_{\mathrm{query}}-r_{\mathrm{key}})/q_h)-1\bigr)
  \right|
  \le C_{\mathrm{block}} A^2\varepsilon_{\mathrm{mach}}.
\]
In particular, matching residues give an absolute value at most $C_{\mathrm{block}}A^2\varepsilon_{\mathrm{mach}}$, while mismatched residues give
\[
  \langle\mathbf q_t^{(2,h)},\mathbf k_j^{(2,h)}\rangle_p \le -A^2\bigl(8/q_h^2-C_{\mathrm{block}}\varepsilon_{\mathrm{mach}}\bigr).
\]
\end{lemma}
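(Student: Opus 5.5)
The plan is to write the per-block dot product as exact ideal quantity plus perturbations, bounding each perturbation by a constant multiple of $A^2\varepsilon_{\mathrm{mach}}$. Write $\mathbf f_h(t)=\mathbf u_{q_h}(\delta(t))+\boldsymbol\eta_t$ and $\mathbf f_h(j)=\mathbf u_{q_h}(\delta(j))+\boldsymbol\eta_j$. By Lemma~\ref{lem:fingerprint}(ii), $\|\boldsymbol\eta\|_\infty\le 8\varepsilon_{\mathrm{mach}}$. Next, the rotated query component $\mathbf g:=\mathsf{mv}_p(\widehat R_{q_h}(1),\mathbf f_h(t))$ satisfies
\[
\|\mathbf g-\mathbf u_{q_h}(\delta(t)+1)\|_\infty\le C\varepsilon_{\mathrm{mach}}.
\]
This follows exactly as in the proof of Lemma~\ref{lem:fingerprint}(ii). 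The table error of $\widehat R_{q_h}(1)$ is at most $\varepsilon_{\mathrm{mach}}$ per entry. Propagating $\boldsymbol\eta_t$ through a matrix of row-sum norm at most $\sqrt2+2\varepsilon_{\mathrm{mach}}$ costs $O(\varepsilon_{\mathrm{mach}})$. The two products and one sum add $O(\varepsilon_{\mathrm{mach}})$ of rounding, since all intermediates have magnitude $O(1)$. Since $\mathbf u_{q_h}(\delta(t)+1)=\mathbf u_{q_h}(r_{\mathrm{query}})$ and $\mathbf u_{q_h}(\delta(j))=\mathbf u_{q_h}(r_{\mathrm{key}})$ by periodicity, the ideal inner product is $\cos(2\pi(r_{\mathrm{query}}-r_{\mathrm{key}})/q_h)$.

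Then I would handle the scaling by $A$. Because $A$ is a power of two and the exponent range in (B4) contains $A$, $A^2$, and $A\cdot O(1)$ quantities, multiplying by $A$ is exact. The components $A\mathbf g$ and $A\mathbf f_h(j)$ are therefore exactly $A$ times the unscaled vectors, and the constant $-A^2$ is exact. The three-term inner product is evaluated in $p$-bit arithmetic:
\[
\fl\bigl(\fl(A g_1\cdot A f_1)+\fl(A g_2\cdot A f_2)\bigr)+\fl(-A^2\cdot 1).
\]
Each product has magnitude at most about $A^2$, so each rounding contributes at most $O(A^2\varepsilon_{\mathrm{mach}})$. Since all partial sums are bounded by $3A^2$, each addition also contributes $O(A^2\varepsilon_{\mathrm{mach}})$. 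Underflow contributes nothing relative to this scale for large $N$.

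The exact-arithmetic value of $A^2(\langle\mathbf g,\mathbf f_h(j)\rangle-1)$ differs from $A^2(\cos(\cdot)-1)$ by at most
\[
A^2\bigl(|\langle \mathbf g-\mathbf u,\mathbf f\rangle|+|\langle \mathbf u,\boldsymbol\eta_j\rangle|\bigr)=O(A^2\varepsilon_{\mathrm{mach}}),
\]
by Cauchy–Schwarz in dimension two. Summing the propagation and rounding terms gives a universal $C_{\mathrm{block}}$, independent of $N$, $q_h$, $t$, and $j$.

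The particular cases are then immediate. Matching residues give $\cos 0-1=0$. For mismatched residues, $1-\cos(2\pi m/q_h)\ge 1-\cos(2\pi/q_h)\ge 8/q_h^2$ for $m\not\equiv0$, by Fact~\ref{fact:cos}; this holds because cosine on residues is maximized at the nearest nonzero step.

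The main obstacle is justifying that no overflow or unbounded relative-error behavior occurs at scale $A^2$. Here $A^2\approx 4U_p/\Delta=\Theta(2^{p_e}q_{\max}^2)$, so I need the exponent range from (B4) (with $c_e$ large) to contain $\Theta(A^2)$ with room to spare. I would check that $\log_2 A^2=O(2^{p_e})$ is well within the maximal exponent $2^{p_e-1}-1$, given $U_p=\Theta(2^{p_e})$ and $\log q_{\max}=O(\log\log N)$. With that, standard relative-error rounding bounds apply throughout.
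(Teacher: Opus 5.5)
Your proposal is correct and follows essentially the same route as the paper's proof. Both bound the stored fingerprints via Lemma~\ref{lem:fingerprint}(ii), add the $O(\varepsilon_{\mathrm{mach}})$ error of the rounded one-step query rotation, absorb the power-of-two scaling, the two rounded products, the sum, and the centering by $-A^2$ into $O(A^2\varepsilon_{\mathrm{mach}})$, and finish the mismatched case with Fact~\ref{fact:cos}; you additionally spell out the Cauchy--Schwarz propagation and the nearest-nonzero-step reduction. One small slip in your last paragraph: the bound you need is $\log_2 A^2 = p_e + O(\log q_{\max}) = O(\log\log N)$, not $O(2^{p_e})$, and it is this that keeps $A^2$ far below the top exponent $2^{p_e-1}-1$.
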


\begin{proof}
Write $\mathbf q_{\mathrm{rot}}:=\mathsf{mv}_p(\widehat R_{q_h}(1),\mathbf f_h(t))$ and $\mathbf k_{\mathrm{rot}}:=\mathbf f_h(j)$ for the unscaled rotary query and key components.
Lemma~\ref{lem:fingerprint} gives a uniform $8\varepsilon_{\mathrm{mach}}$ error for each stored component.
The additional one-step query rotation has bounded entries and a fixed number of rounded operations, so it adds only $O(\varepsilon_{\mathrm{mach}})$ error.
Consequently,
\[
  \mathbf q_{\mathrm{rot}}=\mathbf u_{q_h}(r_{\mathrm{query}})+O(\varepsilon_{\mathrm{mach}}), \qquad \mathbf k_{\mathrm{rot}}=\mathbf u_{q_h}(r_{\mathrm{key}})+O(\varepsilon_{\mathrm{mach}}).
\]
The exact inner product of the unrounded unit vectors is
\[
  \langle \mathbf u_{q_h}(r_{\mathrm{query}}),\mathbf u_{q_h}(r_{\mathrm{key}})\rangle =\cos(2\pi(r_{\mathrm{query}}-r_{\mathrm{key}})/q_h).
\]
Scaling by~$A$, multiplying coordinates, summing the two coordinates, and adding the centering term $-A^2$ introduce only $O(A^2\varepsilon_{\mathrm{mach}})$ error.
The universal constant $C_{\mathrm{block}}$ absorbs the Layer-1 errors from Lemma~\ref{lem:fingerprint}, the rounded one-step query rotation, scaling by the representable power of two~$A$, the two rounded products, and the rounded addition and centering by~$-A^2$.
If $r_{\mathrm{query}}=r_{\mathrm{key}}$, the exact centered value is~$0$.
If $r_{\mathrm{query}}\neq r_{\mathrm{key}}$, Fact~\ref{fact:cos} gives $\cos(2\pi(r_{\mathrm{query}}-r_{\mathrm{key}})/q_h)-1\le -8/q_h^2$, which gives the stated upper bound after rounding.
\end{proof}

To combine the per-block bounds, we must also control rounding in the sum over prime blocks.

\begin{lemma}[Left-to-right summation error]\label{lem:l2_sum_rounding}
Let $z_1,\ldots,z_H$ be finite $p$-bit numbers with $|z_h|\le C_zA^2$.
Assume no overflow occurs during the deterministic left-to-right $p$-bit summation and $H\varepsilon_{\mathrm{mach}}$ is sufficiently small.
Then, for a constant $C_{\mathrm{sum}}$ depending only on $C_z$ and fixed arithmetic-format constants,
\[
  \left|
  \flsum_{h=1}^{H} z_h-\sum_{h=1}^{H}z_h
  \right|
  \le C_{\mathrm{sum}} H^2A^2\varepsilon_{\mathrm{mach}} .
\]
\end{lemma}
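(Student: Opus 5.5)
The plan is the standard forward error analysis of recursive summation, with the error tracked in absolute rather than relative terms. Write the partial sums as $S_0=0$ and $\widehat S_k=\mathsf{fl}(\widehat S_{k-1}+z_k)$, and set $S_k=\sum_{h\le k}z_h$ for the exact partial sums. Under round-to-nearest with no overflow, each addition satisfies
\[
\widehat S_k=(\widehat S_{k-1}+z_k)(1+\eta_k)+\mu_k,
\]
where $|\eta_k|\le\varepsilon_{\mathrm{mach}}$. The term $\mu_k$ accounts for a possible gradual-underflow result, and I will absorb it as an absolute error of at most the smallest subnormal. That quantity is negligible next to $A^2\varepsilon_{\mathrm{mach}}$, because $A^2\ge1$ and the exponent range in \textup{(B4)} is large. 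So the first step is to fix this per-step error model, taking the constants from the arithmetic format.

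The second step is an induction on the partial-sum error $E_k:=|\widehat S_k-S_k|$. From the per-step model,
\[
E_k\le E_{k-1}+\varepsilon_{\mathrm{mach}}\bigl(|S_{k-1}|+E_{k-1}+|z_k|\bigr)+|\mu_k|.
\]
The input hypothesis gives $|S_{k-1}|+|z_k|\le kC_zA^2$. Hence
\[
E_k\le(1+\varepsilon_{\mathrm{mach}})E_{k-1}+kC_zA^2\varepsilon_{\mathrm{mach}}+|\mu_k|.
\]
Unrolling this recursion gives
\[
E_H\le(1+\varepsilon_{\mathrm{mach}})^H\Bigl(C_zA^2\varepsilon_{\mathrm{mach}}\textstyle\sum_{k\le H}k+H\mu_{\max}\Bigr).
\]
Since $H\varepsilon_{\mathrm{mach}}$ is small, for instance at most $1$, the prefactor $(1+\varepsilon_{\mathrm{mach}})^H$ is at most $e$. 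The sum is $H(H+1)/2\le H^2$. This yields the claimed bound, with $C_{\mathrm{sum}}$ equal to a small multiple of $C_z$ plus an underflow allowance.

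The only delicate point is making sure the analysis stays inside the normal range, so that the relative-error model is valid wherever it is needed. I would handle this by noting that every intermediate is bounded by $HC_zA^2+E_{k}$, which \textup{(B4)} keeps within range; overflow is in any case excluded by hypothesis. Results that land in the subnormal range are treated purely through the absolute term $\mu_k$, so no case split on magnitudes is needed. I expect this underflow bookkeeping to be the main, if modest, obstacle. Everything else is the textbook $\gamma_H$ argument.
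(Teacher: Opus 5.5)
Your proposal is correct and follows essentially the same argument as the paper: keep the computed partial sums at $O(HA^2)$ (you get this from $|S_{k-1}|+E_{k-1}$ together with $(1+\varepsilon_{\mathrm{mach}})^H\le e$), so that each rounded addition contributes $O(HA^2\varepsilon_{\mathrm{mach}})$, then sum over the $H$ additions. Your explicit recursion and constant $eC_z$ simply make the paper's $O(\cdot)$ steps concrete, and the term $\mu_k$ is harmless; it is in fact zero for addition, because with gradual underflow a sum that lands in the subnormal range is computed exactly.
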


\begin{proof}
Let $\widehat S_m$ be the rounded left-to-right partial sum and $S_m=\sum_{h=1}^m z_h$ the exact real partial sum.
Inductively, as long as $H\varepsilon_{\mathrm{mach}}$ is small, $|\widehat S_m|=O(HA^2)$, with constants depending only on~$C_z$ and the fixed arithmetic format.
Thus the $m$-th rounded addition has absolute rounding error $O(HA^2\varepsilon_{\mathrm{mach}})$.
Summing this bound over the $H$ additions gives
\[
  |\widehat S_H-S_H|
  \le C_{\mathrm{sum}} H^2A^2\varepsilon_{\mathrm{mach}},
\]
after increasing $C_{\mathrm{sum}}$ by a fixed constant factor.
\end{proof}

At a copy-region query of offset~$k$, all visible copy-region offsets are at most~$k$.
The source side contains exactly one position of offset~$k+1$ for $0\le k\le L$, so the target is unique among visible positions.
The next lemma compares its logit with those of all other visible positions.

\begin{lemma}[Layer-2 logit bounds]\label{lem:logit_bounds}
Let copy position~$t$ have offset~$k$, and let $j^\star$ be the unique position with $\delta(j^\star)=k+1$.
For all sufficiently large~$N$, provided $k+1\le N+1<\Pi$:
\begin{enumerate}
\item[\textup{(i)}] $|e^{(2)}_{t,j^\star}|\le A^2\varepsilon_{\mathrm{score}}$.
\item[\textup{(ii)}] $e^{(2)}_{t,j}\le -A^2\Delta$ for every $j\neq j^\star$ with $j\le t$.
\end{enumerate}
\end{lemma}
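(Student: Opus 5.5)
The plan is to combine the per-prime-block bounds of Lemma~\ref{lem:perhead} with the summation bound of Lemma~\ref{lem:l2_sum_rounding}. First, all blocks are put into the regime of Lemma~\ref{lem:l2_sum_rounding}. Each block value $z_h=\langle\mathbf q^{(2,h)}_t,\mathbf k^{(2,h)}_j\rangle_p$ lies within $C_{\mathrm{block}}A^2\varepsilon_{\mathrm{mach}}$ of $A^2(\cos(\cdot)-1)\in[-2A^2,0]$. Hence $|z_h|\le 3A^2$ for large $N$, so $C_z=3$ works.

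No overflow can occur. The partial sums are $O(HA^2)$, and $A^2=\Theta(U_p q_{\max}^2)$ with $U_p=\Theta(2^{p_e})$. So the partial sums are $2^{O(2^{p_e})}$ times polylogarithmic factors, which fit in the exponent range for $c_e$ large. Also $H\varepsilon_{\mathrm{mach}}=o(1)$ since $\varepsilon_{\mathrm{mach}}\le(\log_2\Pi)^{-4}$. Lemma~\ref{lem:l2_sum_rounding} therefore gives $|e^{(2)}_{t,j}-\sum_h z_h|\le C_{\mathrm{sum}}H^2A^2\varepsilon_{\mathrm{mach}}$. I would enlarge $C_{\mathrm{sum}}$ if needed so that it also dominates $C_{\mathrm{block}}$. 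This is consistent with the definition of $\varepsilon_{\mathrm{score}}$ in \eqref{eq:eff_gap}.

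\emph{Part (i).} For $j^\star$, we have $\delta(j^\star)=k+1=\delta(t)+1$, so $r_{\mathrm{query}}=r_{\mathrm{key}}$ in every block. Lemma~\ref{lem:perhead} then gives $|z_h|\le C_{\mathrm{block}}A^2\varepsilon_{\mathrm{mach}}$. Summing the $H$ blocks gives an exact sum of magnitude at most $HC_{\mathrm{block}}A^2\varepsilon_{\mathrm{mach}}$. Adding the summation error yields $|e^{(2)}_{t,j^\star}|\le (HC_{\mathrm{block}}+C_{\mathrm{sum}}H^2)A^2\varepsilon_{\mathrm{mach}}$, which is at most $A^2\varepsilon_{\mathrm{score}}$ after absorbing constants.

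\emph{Part (ii).} Let $j\ne j^\star$ with $j\le t$. I first check that $\delta(j)\ne k+1$. Copy-region positions have offsets at most $k$. On the source side, $\delta$ is injective, and $j^\star$ is the unique position with offset $k+1$. Both $\delta(j)$ and $k+1$ lie in $\{0,\dots,N+1\}$ and $N+1<\Pi$, so by CRT (as in Lemma~\ref{lem:fingerprint}(iii)) some block $h_0$ has $r_{\mathrm{query}}\ne r_{\mathrm{key}}$.

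In block $h_0$, Lemma~\ref{lem:perhead} gives $z_{h_0}\le -A^2(8/q_{h_0}^2-C_{\mathrm{block}}\varepsilon_{\mathrm{mach}})\le -A^2(8/q_{\max}^2-C_{\mathrm{block}}\varepsilon_{\mathrm{mach}})$. Every other block satisfies $z_h\le C_{\mathrm{block}}A^2\varepsilon_{\mathrm{mach}}$, because the ideal value $\cos-1$ is nonpositive. Hence
\[
\sum_h z_h\le -A^2\bigl(8/q_{\max}^2-HC_{\mathrm{block}}\varepsilon_{\mathrm{mach}}\bigr).
\]
Adding the summation error gives $e^{(2)}_{t,j}\le -A^2(8/q_{\max}^2-\varepsilon_{\mathrm{score}})=-A^2\Delta$.

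The main obstacle is the bookkeeping in the first step. The accumulated rounding must stay within $\varepsilon_{\mathrm{score}}=C_{\mathrm{sum}}H^2\varepsilon_{\mathrm{mach}}$ uniformly. The large scale $A^2$ must be shown neither to overflow nor to break the relative-error model. This is where the conditions $c_m\ge4$ and large $c_e$ from (B4) are used.
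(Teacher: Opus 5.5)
Your proof is correct and follows the paper's argument. The ingredients are the same: the per-block bounds of Lemma~\ref{lem:perhead}, a CRT-guaranteed mismatched block for $j\neq j^\star$, nonpositivity of every other block up to $C_{\mathrm{block}}A^2\varepsilon_{\mathrm{mach}}$, and the summation error of Lemma~\ref{lem:l2_sum_rounding}, with all constants absorbed into $C_{\mathrm{sum}}$ in $\varepsilon_{\mathrm{score}}$. Your explicit check of that lemma's hypotheses is more careful than the paper, which takes them for granted; your overflow estimate needs one small correction, though the no-overflow conclusion still holds. The partial sums are $O(HA^2)=O(H\,q_{\max}^2\,2^{p_e})$, whose binary exponent is $O(p_e)$, not $2^{O(2^{p_e})}$ as you wrote. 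That weaker bound would not by itself exclude overflow against the maximal exponent $2^{p_e-1}-1$, but the correct estimate sits far inside the range.
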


\begin{proof}
For the correct match~$j^\star$, every prime block has $\delta(j^\star)\equiv k+1\pmod{q_h}$, so the residues match at every prime.
Lemma~\ref{lem:perhead} bounds the exact sum of the $H$ rounded block contributions by $O(HA^2\varepsilon_{\mathrm{mach}})$ in absolute value.
Lemma~\ref{lem:l2_sum_rounding}, with $C_{\mathrm{sum}}$ chosen large enough to absorb both errors, then gives part~(i).

For a wrong position~$j\neq j^\star$, CRT uniqueness implies that at least one prime~$h^\star$ has
\[
  \delta(j)\not\equiv k+1 \pmod{q_{h^\star}}.
\]
For this mismatched block, Lemma~\ref{lem:perhead} gives
\[
  \langle\mathbf q_t^{(2,h^\star)},\mathbf k_j^{(2,h^\star)}\rangle_p \le -A^2(8/q_{\max}^2-C_{\mathrm{block}}\varepsilon_{\mathrm{mach}}).
\]
Every matched block can contribute at most $+C_{\mathrm{block}}A^2\varepsilon_{\mathrm{mach}}$, and every other mismatched block can only decrease the upper bound for all sufficiently large~$N$.
The score-error budget satisfies $\varepsilon_{\mathrm{score}}\le2/q_{\max}^2$, so $\Delta\ge6/q_{\max}^2>0$ and a mismatch remains negative after rounding.
Summing over all prime blocks and applying Lemma~\ref{lem:l2_sum_rounding}, with $C_{\mathrm{sum}}$ absorbing the aggregate per-block errors,
\[
  e^{(2)}_{t,j} \le -A^2\bigl(8/q_{\max}^2-\varepsilon_{\mathrm{score}}\bigr)
  =
  -A^2\Delta.
\]
\end{proof}

Once the wrong logits are at least $U_p$ below the maximum, their exponentials underflow to zero.
The softmax is therefore exactly one-hot on the matching offset.

\begin{lemma}[Layer-2 exact one-hot]\label{lem:l2_exact}
Let copy position~$t$ have offset~$k$, and let $j^\star$ be the unique visible position with $\delta(j^\star)=k+1$.
Provided $k+1\le N+1<\Pi$, Layer~2 satisfies $\alpha^{(2)}_{t,j^\star}=1$ bit-for-bit.
\end{lemma}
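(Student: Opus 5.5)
The plan is to combine Lemma~\ref{lem:logit_bounds} with the max-shifted softmax execution model and the underflow threshold $U_p$ of Definition~\ref{def:underflow}, following the template of Lemma~\ref{lem:anchor}.

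\emph{Step 1: the target attains the maximum.} By Lemma~\ref{lem:logit_bounds}(i), the target satisfies $e^{(2)}_{t,j^\star}\ge -A^2\varepsilon_{\mathrm{score}}$. By part~(ii), every other visible key satisfies $e^{(2)}_{t,j}\le -A^2\Delta$. Since $\varepsilon_{\mathrm{score}}\le\Delta/3<\Delta$, the target is the unique maximizer, so the computed maximum is $e_{\max}=e^{(2)}_{t,j^\star}$. The shifted target logit is then exactly $0$, and its weight is $\mathsf{fl}(\exp(0))=1$.

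\emph{Step 2: every non-winner is pushed past the underflow threshold.} For $j\neq j^\star$, the exact shifted logit satisfies
\[
e_{t,j}-e_{\max}\le -A^2(\Delta-\varepsilon_{\mathrm{score}})\le -\tfrac{2}{3}A^2\Delta .
\]
By~\eqref{eq:eff_gap}, $A^2\ge 4U_p/\Delta$, so this exact difference is at most $-\tfrac{8}{3}U_p$. The computed subtraction $\mathsf{fl}(e_{t,j}-e_{\max})$ has relative error at most $\varepsilon_{\mathrm{mach}}$. It therefore stays at or below $-U_p$, using that rounding is monotone and that $-U_p$ is representable. By Definition~\ref{def:underflow}, $\mathsf{fl}(\exp(\cdot))$ of any such value is $0$. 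This is the step where some care is needed: we must confirm the subtraction does not overflow, and that the exponent range in (B4) contains $A^2$ and the logits. I would handle it with the bound $A^2=O(U_p q_{\max}^2)$, which gives an exponent of size $O(2^{p_e})$ once $c_e$ is large. A subtlety is that the wrong logit might underflow or round toward a less negative value. Monotonicity of $\mathsf{fl}$ together with the slack factor $8/3$ rules this out.

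\emph{Step 3: the accumulation is exactly one-hot.} In the left-to-right accumulation, every addend except $j^\star$ is exactly $0$, and $\mathsf{fl}(0\cdot v)=0$. The denominator is therefore $Z=1$ exactly. The numerator equals the token embedding $\mathbf v_{j^\star}$, because adding zeros (including signed zeros) leaves a float unchanged. Hence $\alpha^{(2)}_{t,j^\star}=1$ bit-for-bit, and all other weights are $0$.
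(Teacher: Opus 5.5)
Your proof is correct and follows essentially the same route as the paper's: Lemma~\ref{lem:logit_bounds} makes $j^\star$ the maximizer, and then $\varepsilon_{\mathrm{score}}\le\Delta/3$ together with $A^2\ge 4U_p/\Delta$ gives a shifted gap of at least $\tfrac{8}{3}U_p\ge U_p$, so every non-winner weight underflows and the denominator is exactly $1$. Your additional care about the rounded max-shift subtraction (monotonicity of rounding plus representability of $-U_p$) and about the exactness of the zero addends fills in details the paper leaves implicit.
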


\begin{proof}
Lemma~\ref{lem:logit_bounds} shows that $j^\star$ achieves the maximum logit.
After max-shift, its shifted logit is~$0$ and its weight is $\fl(\exp(0))=1$.
For every wrong position,
\[
  \begin{aligned}
    e^{(2)}_{t,j}-e^{(2)}_{t,j^\star}
    &\le -A^2(\Delta-\varepsilon_{\mathrm{score}})\\
    &\le -(2/3)A^2\Delta \le -U_p,
  \end{aligned}
\]
using the standing Layer-2 condition $\varepsilon_{\mathrm{score}}\le\Delta/3$ and the power-of-two choice $A^2\ge4U_p/\Delta$.
Thus every wrong exponential underflows to~$0$, the softmax denominator is~$1$, and $\alpha^{(2)}_{t,j^\star}=1$ bit-for-bit.
\end{proof}

\subsubsection{Copying and resource bounds}

\begin{lemma}[VO copying lemma with CRT periods]
\label{lem:vo_main}
For the formal instantiation in Definition~\ref{def:vo_prime_full}:
\begin{enumerate}
\item[\textup{(i)}] Layer~1 attends bit-for-bit one-hot to the active anchor~$a(t)$.
\item[\textup{(ii)}] The $H$ VO outputs form a CRT fingerprint that uniquely determines $\delta(t)$ whenever $\delta(t)\le N+1<\Pi$.
\item[\textup{(iii)}] At any copy-region query of offset~$k$, Layer~2 attends bit-for-bit one-hot to the unique visible position with offset~$k+1$.
\end{enumerate}
\end{lemma}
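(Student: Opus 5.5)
This lemma assembles results already proved, so the plan is to cite them in order and verify that each one's hypotheses hold under Definition~\ref{def:vo_prime_full}.

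For part~(i), I invoke Lemma~\ref{lem:anchor}. Every Layer-1 head uses the same query/key assignment~\eqref{eq:qk1}. Under that assignment, the logit gaps force every non-anchor shifted logit to be at most $-U_p$. By Definition~\ref{def:underflow}, those weights round to zero, so the attention is bit-for-bit one-hot on~$a(t)$. The only remaining case is the $\langle\BoS\rangle$ query itself, which needs a separate remark: its causal window contains only itself, and $a(t)=t$ there. For the other queries, I must check that $U_p$, $-U_p$, and $-2U_p$ are representable. This holds by the exponent-range discussion following Definition~\ref{def:underflow}, using the bit allocation~(B4).

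For part~(ii), one-hot attention makes the computed rotary output equal to~\eqref{eq:fp_output}, and Lemma~\ref{lem:fingerprint} then applies. By (B3), $\Pi>N+1$, so any two distinct offsets in $\{0,\ldots,N+1\}$ differ modulo some~$q_h$ by CRT. The cosine separation of Fact~\ref{fact:cos} is at least $2/q_h$ in $\ell_\infty$. This exceeds the $16\varepsilon_{\mathrm{mach}}$ perturbation once $p_m\ge 4\log_2\log_2\Pi$ and $q_{\max}=\Theta(\log N)$. Part~(ii) is therefore exactly Lemma~\ref{lem:fingerprint}\textup{(iii)}.

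For part~(iii), fix a copy-region query with offset $k\le L\le N$. First I argue that the target $j^\star$ is unique. Copy-region positions visible to the query have offsets at most~$k$. Source-side positions have offsets $0,\ldots,L+1$, each occurring exactly once. Hence exactly one visible position has offset $k+1\le N+1$. I then apply Lemma~\ref{lem:logit_bounds} and Lemma~\ref{lem:l2_exact}. These rest on the per-block bounds of Lemma~\ref{lem:perhead} and the summation bound of Lemma~\ref{lem:l2_sum_rounding}, under the standing condition $\varepsilon_{\mathrm{score}}\le 2/q_{\max}^2$. That condition holds for large~$N$ because $H^2\varepsilon_{\mathrm{mach}}\le H^2(\log_2\Pi)^{-4}=o(1/\log^2 N)$.

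I expect the main obstacle to be confirming that the side conditions hold simultaneously at the chosen scale $A$. The scale satisfies $A^2\approx 4U_p/\Delta=\Theta(2^{p_e}\log^2N)$. The intermediates $A$, $A^2$, and sums of size $O(HA^2)$ must then lie in the exponent range without overflow, as required by Lemma~\ref{lem:l2_sum_rounding}. Meanwhile $H\varepsilon_{\mathrm{mach}}$ must stay small. I would handle this by taking $c_e$ large enough that the maximal exponent $2^{p_e-1}$ dominates $\log_2(HA^2)=p_e+O(\log\log N)$. Given that, all three parts follow for all sufficiently large~$N$.
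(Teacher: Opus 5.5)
Your proposal is correct and follows the paper's proof. The paper simply cites Lemma~\ref{lem:anchor} for (i) and Lemma~\ref{lem:fingerprint} for (ii); for (iii) it uses the uniqueness of the visible offset-$(k+1)$ position together with Lemma~\ref{lem:l2_exact}. The side conditions you re-verify are already handled inside those lemmas and in the resource accounting for Theorem~\ref{thm:construction}, so checking them again is harmless but redundant. These are the $\langle\BoS\rangle$ query, representability of $\pm U_p$ and $-2U_p$, the bound $\varepsilon_{\mathrm{score}}\le 2/q_{\max}^2$, and the exponent range for $A^2$ and for partial sums of size $O(HA^2)$.
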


\begin{proof}[Proof of Lemma~\ref{lem:vo_main}]
Part~\textup{(i)} is Lemma~\ref{lem:anchor}.
Part~\textup{(ii)} is Lemma~\ref{lem:fingerprint}.
For part~\textup{(iii)}, at a copy-region query of offset~$k$, the visible position with offset $k+1$ is unique whenever $k+1\le N+1<\Pi$; Lemma~\ref{lem:l2_exact} then gives exact one-hot attention to that position.
\end{proof}

\begin{proof}[Proof of Theorem~\ref{thm:construction}]
We use the copying lemma to obtain the required outputs, then account for precision and dimension.
Fix a sufficiently large design horizon~$N$, a finite vocabulary~$\mathcal V$, arbitrary $L\le N$, and arbitrary source tokens $s_1,\ldots,s_L\in\mathcal V$.
At a copy-region query~$t$ of offset $0\le k\le L$, Lemma~\ref{lem:vo_main}\textup{(iii)} gives bit-for-bit one-hot attention to the unique visible position~$j^\star$ of offset~$k+1$.
This position holds $s_{k+1}$ if $k<L$ and \(\langle\EoI\rangle\) if $k=L$.
Since $W_O^{(1)}$ writes only to the fingerprint subspace and $\mathrm{MLP}^{(1)}=0$, the Layer-1 skip connection preserves the token embedding at position~$j^\star$.
The token-readout block remains zero through Layer~1.
Layer~2 writes its returned value into that block; its zero MLP and the residual connection therefore leave exactly the retrieved token vector there.
The fixed linear readout from \textup{(B6)} and greedy argmax output the correct vocabulary token, or output \(\langle\EoS\rangle\) when the returned token is \(\langle\EoI\rangle\).

It remains to account for resources.
The first $H$ primes satisfy $q_{\max}=\Theta(H\log H)$ and $\prod_{h\le H}q_h>\!N+1$ for $H=\Theta(\log N/\log\log N)$, by the prime number theorem.
Hence $q_{\max}=\Theta(\log N)$.
With $p_m=c_m\lceil\log_2\log_2\Pi\rceil$ and $c_m\ge4$,
\[
  H^2\varepsilon_{\mathrm{mach}}
  =
  O\!\left(\frac{(\log N)^2}{(\log\log N)^2}\right)(\log N)^{-c_m}
  =
  o(1/\log^2 N),
\]
so $\varepsilon_{\mathrm{score}}=o(1/q_{\max}^2)$, $\Delta>0$, and the cosine gap survives rounding.
The exponent in $A=2^{\lceil\frac12\log_2(4U_p/\Delta)\rceil}$ is $O(\log\log N)$ because $U_p=\Theta(2^{p_e})$ and $\Delta^{-1}=(\log N)^{O(1)}$.
Hence a sufficiently large constant~$c_e$ in $p_e=c_e\lceil\log_2\log_2\Pi\rceil$ gives enough exponent range to represent $A$ and $A^2$ exactly.
Since $A^2\ge4U_p/\Delta$ by construction, the underflow inequalities above remain valid.
Thus all calculations above use $p=O(\log\log N)$ bits.
The added fingerprint subspace has dimension $2H=\Theta(\log N/\log\log N)$.
The total residual dimension can be taken as $d=d_{\mathrm{tok}}(|\mathcal V|)+2H+1$.
As in~\textup{(B5)}, exact residues select the rounded rotation tables; Lemma~\ref{lem:fingerprint} and the Layer-2 bounds account for all subsequent rotation errors.
\end{proof}

\subsubsection{Implementation and resource conventions}\label{app:vo_implementation}

\paragraph{Modular phase tables and counters.}
We use a sequential position generator to maintain each head's rotation-table index with a cyclic counter $c_{h,0}=0$, $c_{h,t+1}=(c_{h,t}+1)\bmod q_h$.
Since $q_h=O(\log N)$, each counter uses $O(\log\log N)$ bits, within the construction's precision scale; total counter state is $O(\log N)$ bits across heads.
The counter supplies the exact residue $t\bmod q_h$ for phase selection, as required by Definition~\ref{def:vo}.
The phase values and rotation arithmetic are rounded to $p$ bits: Lemma~\ref{lem:fingerprint} bounds the total Layer-1 error by $8\varepsilon_{\mathrm{mach}}$, and the Layer-2 gap analysis in~\eqref{eq:eff_gap} carries it through to the logits.
Head~$h$ needs $q_h$ sine/cosine pairs, giving $O(\log^2 N/\log\log N)$ stored scalars across heads, each rounded to $p$ bits.
The counters index these tables and depend only on position, while the attention computation selects the active anchor and forms its relative-offset fingerprint.
Table storage and counter state are accounted for separately from the precision of each stored coordinate.

In exact arithmetic, both RoPE and VO use the composition identity $R^tR^{-j}=R^{t-j}$, but in different computations: RoPE uses rotations in query/key inner products, whereas this construction returns offset representations for the next layer to match.
The VO guarantee requires distinct residues to remain distinguishable after rounding.
Small periods keep the cosine gap in Fact~\ref{fact:cos} at least $8/q^2$, which the Layer-2 precision analysis controls.

\paragraph{Remark on head-wise periods.}
Definition~\ref{def:qhead} assigns one period per head, but distributing the residues across heads is a presentational choice, not a resource requirement.
Allowing different periods in different two-dimensional blocks of a single head implements the same computation: every prime head attends one-hot to the same active anchor, so a single head with the block-diagonal rotation $\operatorname{diag}(R_{q_1},\ldots,R_{q_H})$ reproduces all $H$ rotary outputs under the shared attention pattern, and Layer~2 already combines all prime blocks inside a single head via~\eqref{eq:summed_logit}.
The resources can therefore equivalently be counted as a total rotary dimension of $2H$ rather than a head count.

% NeurIPS-specific checklist retained in the repository, not in the ICLR PDF.
% \newpage
% \input{checklist.tex}

\end{document}